\numberwithin{equation}{section}
\newcommand\tab[1][1cm]{\hspace*{#1}}
\numberwithin{equation}{section}
\begin{document}

\title{Wasserstein Convergence Guarantees for a General Class of Score-Based Generative Models}

\author{\name Xuefeng Gao \email xfgao@se.cuhk.edu.hk \\
       \addr Department of Systems Engineering and Engineering Management\\
       The Chinese University of Hong Kong\\
       Shatin, NT, Hong Kong
       \AND
       \name Hoang M. Nguyen \email hmnguyen@fsu.edu \\
       \addr 
       Department of Mathematics \\
       Florida State University \\
       Tallahassee, FL 32306, United States of America
       \AND
       \name Lingjiong Zhu \email zhu@math.fsu.edu \\
       \addr Department of Mathematics \\
       Florida State University \\
       Tallahassee, FL 32306, United States of America\\
       }

%\jmlrheading{25}{2024}{1-\pageref{LastPage}}{12/22}{4/24}{22-1443}{Xuefeng Gao, Hoang M. Nguyen, Lingjiong Zhu}
%\ShortHeadings{Wasserstein Convergence Guarantees for a General Class of Score-Based Generative Models}{GAO, NGUYEN AND ZHU}

\editor{}

\maketitle

\begin{abstract}
Score-based generative models (SGMs) are a recent class of deep generative models with state-of-the-art performance in many applications. In this paper, we establish convergence guarantees for a general class of SGMs in the 2-Wasserstein distance, assuming accurate score estimates and smooth log-concave data distribution. We specialize our results to several concrete SGMs with specific choices of forward processes modeled by stochastic differential equations, and obtain an upper bound on the iteration complexity for each model, which demonstrates the impacts of different choices of the forward processes. We also provide a lower bound when the data distribution is Gaussian. Numerically, we experiment with SGMs with different forward processes for unconditional image generation on CIFAR-10. We find that the experimental results are in good agreement with our theoretical predictions on the iteration complexity.
\end{abstract}

\begin{keywords}
Score-based diffusion models, convergence analysis, Wasserstein distance, SDE-based sampler, iteration complexity
\end{keywords}

\section{Introduction}

Diffusion models are a powerful family of probabilistic generative models which can generate approximate samples from high-dimensional distributions \citep{Sohl2015, SongErmon2019, Ho2020}. The key idea in diffusion models is to use a forward process to progressively corrupt samples from a target data distribution with noise and then learn to reverse this process for generation of new samples. 
Diffusion models have achieved state-of-the-art performance in various applications such as image and audio generations, and they are the main components in popular content generators including Stable Diffusion \citep{rombach2022high} and Dall-E 2 \citep{ramesh2022hierarchical}. 
We refer the readers to \citep{yang2022diffusion, croitoru2023diffusion} for surveys on diffusion models.

A predominant formulation of diffusion models is score-based generative models (SGM) through Stochastic Differential Equations (SDEs) \citep{SongICLR2021}, referred to as \textit{Score SDEs} in \cite{yang2022diffusion}. At the core of this formulation, there are two stochastic processes in $\mathbb{R}^{d}$: a forward process and a reverse process. In this paper, we consider a general class of forward process $(\mathbf{x}_{t})_{t\in[0,T]}$ described by the following SDE:
\begin{equation}\label{OU:SDE}
d\mathbf{x}_{t}=-f(t)\mathbf{x}_{t}dt+g(t)d\mathbf{B}_{t}, \quad  \mathbf{x}_{0}\sim p_{0},
\end{equation}
where both $f(t)$ and $g(t)$ are scalar-valued non-negative continuous functions of time $t$, $(\mathbf{B}_{t})$ is the standard $d$-dimensional Brownian motion, and $p_0$ is the (unknown) target data distribution. The forward process has the interpretation of slowly injecting noise to data and transforming them to a noise-like distribution. 
If we reverse the forward process \eqref{OU:SDE} in time, i.e., letting $(\tilde{\mathbf{x}}_{t})_{t\in[0,T]}=(\mathbf{x}_{T-t})_{t\in[0,T]}$, 
then under mild assumptions, the reverse process $(\tilde{\mathbf{x}}_{t})_{t\in[0,T]}$ satisfies another SDE (see e.g. \cite{Anderson1982, cattiaux2023time}):
\begin{equation}\label{eq:Reverse} 
d\tilde{\mathbf{x}}_{t}=\left[f(T-t)\tilde{\mathbf{x}}_{t}+(g(T-t))^{2}\nabla_{\mathbf{x}}\log p_{T-t}(\tilde{\mathbf{x}}_{t})\right]dt
+g(T-t)d\bar{\mathbf{B}}_{t}, \quad \tilde{\mathbf{x}}_{0}\sim p_{T},
\end{equation}
where $p_{t}(\mathbf{x})$ is the probability density function of $\mathbf{x}_t$ (the forward process at time $t$), $(\bar{\mathbf{B}}_{t})$ is a standard Brownian motion in $\mathbb{R}^{d}$ and  $\tilde{\mathbf{x}}_{T} = \mathbf{x}_{0} \sim p_{0}$. Hence, the reverse process \eqref{eq:Reverse} transforms a noise-like distribution $p_T$ into samples from $p_0$, which is the goal of generative modeling. 
Note that the reverse SDE \eqref{eq:Reverse} involves the \textit{score function}, $\nabla_{\mathbf{x}}\log p_{t}(\mathbf{x})$, which is unknown. An important subroutine in SGM is to estimate this score function based on samples drawn from the forward process, typically by modeling time-dependent score functions as neural networks and training them on certain score matching objectives \citep{ hyvarinen2005estimation, vincent2011connection, song2020sliced}. After the score is estimated, one can numerically solve the reverse SDE to generate new samples that approximately follows the data distribution (see Section~\ref{sec:prelim}). 
The Score-SDEs formulation is attractive because it generalizes and unifies several well-known diffusion models. In particular, the noise perturbation in Score Matching with Langevin dynamics (SMLD) \citep{SongErmon2019} corresponds to the discretization of so-called variance exploding (VE) SDEs where $f \equiv 0$ in \eqref{OU:SDE}; The noise perturbation in Denoising Diffusion Probabilistic Modeling (DDPM) \citep{Sohl2015,Ho2020} corresponds to an appropriate discretization of the variance preserving (VP) SDEs where $f(t) = \frac{1}{2}\beta(t)$ and $g(t) = \sqrt{ \beta(t) }$ for some noise variance schedule $\beta(t)$, see \cite{SongICLR2021} for details.

Despite the impressive empirical performances of diffusion models in various applications, the theoretical understandings of these models are relatively limited. 
In the past few years, there has been a rapidly growing body of literature on the convergence theory of diffusion models, assuming access to accurate estimates of the score function, see e.g. 
\cite{block2020generative, de2022convergence, leeconvergence, lee2023convergence, chen2022improved, chen2022sampling,  benton2023linear, chen2023score, tang24}. While these studies have established polynomial convergence bounds for fairly general data distribution,
this line of work has mostly focused on the analysis of the DDPM model where the corresponding forward SDE \eqref{OU:SDE} satisfies $f = g^2/{2}>0$ (in fact many studies simply consider $f\equiv 1/2$ and $g\equiv 1$). However, it is important to understand the impacts of different choices of forward processes in diffusion models, which can potentially help provide theoretical guidance for selecting the functions $f$ and $g$ in the design of diffusion models. In this work, we make some progress towards addressing these issues using a theoretical approach based on convergence analysis. We summarize our contributions in the following.

\textbf{Our Contributions.}
\begin{itemize}
\item We establish convergence guarantees for a general class of SGMs in the 2-Wasserstein distance, assuming accurate score estimates and smooth log-concave data distribution ({with unbounded support}), see Theorem~\ref{thm:discrete:2}. In particular, we allow general functions $f$ and $g$ in the forward SDE \eqref{OU:SDE}.  Theorem~\ref{thm:discrete:2} directly translates to an upper bound on the iteration complexity, which is the number of sampling steps/iterations needed (in running the reverse process) to yield $\epsilon-$accuracy in the 2-Wasserstein distance between the data distribution and the generative distribution of the SGMs. 

\item We specialize our result to SGMs with specific functions $f$ and $g$ in the forward process. We find that under mild assumptions,  
the class of VP-SDEs (as forward processes) will lead to an iteration complexity bound $\widetilde{\mathcal{O}}\left(d/\epsilon^{2}\right)$
where $\widetilde{\mathcal{O}}$ ignores the logarithmic factors and hides dependency on other parameters. On the other hand, in the class of VE-SDEs, the choice of an exponential function $g$ in \cite{SongICLR2021} leads to an iteration complexity of $\widetilde{\mathcal{O}}\left(d/\epsilon^{2}\right)$, while other simple choices including polynomials for $g$ lead to a worse complexity bound. We also find that VP-SDEs with polynomial and exponential noise schedules, which appear to be new to the literature, lead to better iteration complexity bounds (in terms of logarithmic dependence on $d,\epsilon$) compared with the existing models. See Table~\ref{table:summary} and Proposition~\ref{prop:VP}.

\item We also establish two new results on lower bounds. We first show that if we use the upper bound in Theorem~\ref{thm:discrete:2}, 
then in order to achieve $\epsilon$ accuracy, 
 the iteration complexity is 
$\widetilde{\Omega}\left(d/\epsilon^{2}\right)$ for quite general functions $f$ and $g$,
where $\widetilde{\Omega}$ ignored the logarithmic dependence on $\epsilon$ and $d$ (Proposition~\ref{prop:lower:bound}). This result, however, does not show whether our upper bound in Theorem~\ref{thm:discrete:2} is tight or not. We next show that if the data distribution $p_{0}$ is Gaussian, then the lower bound for 
the iteration complexity is $\Omega\left(\sqrt{d}/\epsilon\right)$ (Proposition~\ref{prop:lower:bound:Gaussian}). 

\item Numerically, we experiment SGMs with different forward SDEs for unconditional image generation on the CIFAR-10 image dataset, using the neural network architectures from \cite{SongICLR2021}. We find that the experimental results are in good agreement with our theoretical predictions: models with lower order of iteration complexity generally perform better, in the sense that they achieve lower FID scores and higher Inception scores (with the same stochastic sampler and number of sampling steps) over training iterations.

\item 
Our main proof techniques (for the upper bound on the iteration complexity) rely on obtaining an explicit contraction rate for the reverse SDE in Wasserstein distance using properties of strongly log-concave distributions and It\^{o}'s formula, and controlling the discretization and score-matching errors by using synchronous coupling. This approach is significantly different from the existing studies on convergence analysis of SDE-based samplers such as \cite{chen2022sampling}, where Girsanov theorem and data processing inequality are used to obtain convergence guarantees in total variation distance or Kullback-Leibler (KL) divergence.  Their techniques require the forward process to be contractive that excludes VE-SDEs, whereas our methodology enables us to cover a general class of models including both VP and VE-SDEs as special cases. We also emphasize that the reverse SDE is non-homogeneous in nature due to the score function, and one needs to perform
a delicate analysis based on the result in Theorem~\ref{thm:discrete:2} in order to spell out the leading order terms to obtain bounds on the iteration complexities for various VE-SDE and VP-SDE examples.
\end{itemize}

%%%%%%%%%%%%%%%%%%%%%%%%%%%%%%%%%%%%%%%
\subsection{Related Work}

The majority of the existing studies on the convergence analysis of diffusion models focus on the SDE-based implementation, that is, the sampling process for data generation is based on the discretization of the reverse-time SDE.  
For instance, \citep{lee2023convergence, chen2022improved, chen2022sampling, benton2023linear} have established polynomial convergence rates in Total Variation (TV) distance or KL divergence for the DDPM model, where they consider $f\equiv 1/2$ and $g\equiv 1$ in the forward SDE. 
In contrast to these studies, our work provides a unified convergence analysis for a more general class of diffusion models in the 2-Wasserstein distance ($\mathcal{W}_{2}$), thus illustrating how the choice of $f$ and $g$ impacts the iteration complexity. Focusing on $\mathcal{W}_{2}$ distance is not just of theoretical interest, but also of practice interest. Indeed, one of the most popular performance metrics for the quality of generated samples in image applications is Fr\'{e}chet Inception Distance (FID), which measures the $\mathcal{W}_{2}$ distance between the distributions of generated images with the distribution of real images \citep{heusel2017gans}. It is known that to obtain polynomial convergence rates in $\mathcal{W}_{2}$ distance, one often needs to assume some form of log-concavity for the data distribution, see e.g. \cite[Section 4]{chen2022sampling} for discussions. Hence, we impose such (strong) assumptions on the data distribution. 
Although some studies (see e.g. \cite{chen2022sampling}) also provided Wasserstein convergence bound for the DDPM model, they often assume that the data distribution is bounded, in which case the $\mathcal{W}_{2}$ distance can be bounded by the TV distance. In this work, we consider unbounded data distribution. 
We also emphasize that TV distance does not upper bound Wasserstein distance on $\mathbb{R}^{d}$ (see e.g. \cite{GibbsSu2002}) and KL divergence does not imply Wasserstein convergence either (unless some additional conditions are satisfied \citep{BV}). Hence, our results are not implied by the existing convergence results for SDE-based samplers. 

There is also a small but rapidly growing body of literature on convergence analysis of the probability flow ODE implementation of diffusion models \citep{SongICLR2021}. See, e.g., \cite{chen2023restoration, chen2023probability, li2023towards, gao2024convergence, li2024accelerating}. 
Our work differs from this line of studies in that we consider SDE-based samplers instead of ODE-based samplers. 

Finally, our work is broadly related to the literature on the choice of noise schedules for diffusion models. 
In classical models, the choice of the forward process, i.e. $f$ and $g$ in \eqref{OU:SDE}, is often handcrafted and designed heuristically based on numerical performances of the corresponding models. For instance, the majority of existing DDPM models (in which $f=g^2/2$) use the linear noise schedule (i.e. $(g(t))^2=b+at$ for some $a, b>0$) proposed firstly in \cite{Ho2020}. \cite{Nichol2021} proposed a cosine noise schedule and showed that it can improve the log-likelihood numerically. For the SMLD model, the noise schedule is often chosen as an exponential function, i.e. $g(t)=ab^t$ for some $a, b>0$, following \cite{SongErmon2019, song2020improved}.  
In contrast to these studies, our work provides a theoretical analysis on the impact of
different choices of forward processes based on the convergence analysis of diffusion models in $\mathcal{W}_{2}$ distance. 

%%%%%%%%%%%%%%%%%%%%%%%%%%%%%%%%%%%%%%%
\textbf{Notations.} 
%%%%%%%%%%%%%%%%%%%%%%%%%%%%%%%%%%%%%%%
For any $d$-dimensional random vector $\mathbf{z}$ with finite second moment, 
the $L_{2}$-norm of $\mathbf{z}$ is defined as
$\Vert\mathbf{z}\Vert_{L_{2}}=\left(\mathbb{E}\Vert\mathbf{z}\Vert^{2}\right)^{1/2}$, 
where $\Vert\cdot\Vert$ denotes the Euclidean norm. 
We denote $\mathcal{L}(\mathbf{z})$ as the law of $\mathbf{z}$.
Define $\mathcal{P}_{2}(\mathbb{R}^{d})$
as the space consisting of all the Borel probability measures $\nu$
on $\mathbb{R}^{d}$ with the finite second moment
(based on the Euclidean norm).
For any two Borel probability measures $\nu_{1},\nu_{2}\in\mathcal{P}_{2}(\mathbb{R}^{d})$, 
the standard $2$-Wasserstein
distance \cite{villani2008optimal} is defined by
$\mathcal{W}_{2}(\nu_{1},\nu_{2}):=\left(\inf\mathbb{E}\left[\Vert\mathbf{z}_{1}-\mathbf{z}_{2}\Vert^{2}\right]\right)^{1/2},$
where the infimum is taken over all joint distributions of the random vectors $\mathbf{z}_{1},\mathbf{z}_{2}$ with marginal distributions $\nu_{1},\nu_{2}$. Finally, a differentiable function $F:\mathbb{R}^{d}\rightarrow\mathbb{R}$ is 
$\mu$-strongly convex and $L$-smooth (i.e. $\nabla F$ is $L$-Lipschitz) if for every $\mathbf{x},\mathbf{y}\in\mathbb{R}^{d}$,
$\frac{L}{2}\Vert\mathbf{x}-\mathbf{y}\Vert^{2} \geq F(\mathbf{x})-F(\mathbf{y}) - \nabla F(\mathbf{y})^{\top}(\mathbf{x}-\mathbf{y}) \geq \frac{\mu}{2}\Vert \mathbf{x}-\mathbf{y}\Vert^{2}$.

%%%%%%%%%%%%%%%%%%%%%%%%%%%%%%%%%%%
\section{Preliminaries on SDE-Based Diffusion Models}\label{sec:prelim}

We first recall the background on score-based generative modeling with SDEs \citep{SongICLR2021}. Denote by $p_{0} \in \mathcal{P} (\mathbb{R}^d) $ the unknown continuous data distribution, where $\mathcal{P}(\mathbb{R}^{d})$ is the space of all probability measures on $\mathbb{R}^{d}$. Given i.i.d. samples from $p_{0}$, the goal is to generate new samples whose distribution closely resembles the data distribution.

\textbf{Forward process and reverse process.}
Let $T>0$. We consider a $d-$dimensional forward process $(\mathbf{x}_{t})_{t \in [0, T]}$ given in \eqref{OU:SDE}.
One can easily solve \eqref{OU:SDE} to obtain
\begin{equation}\label{SDE:solution}
\mathbf{x}_{t}=e^{-\int_{0}^{t}f(s)ds}\mathbf{x}_{0}+\int_{0}^{t}e^{-\int_{s}^{t}f(v)dv}g(s)d\mathbf{B}_{s}.
\end{equation}
Denote by $p_t(\mathbf{x})$ the probability density of $\mathbf{x}_{t}$ for $t \ge 0.$ Before we proceed, we give two popular classes of the forward SDEs in the literature (\cite{SongICLR2021}):
%\vspace{-6mm}
\begin{itemize}
\item Variance Exploding (VE) SDE: $f(t) \equiv 0$ and $g(t) = \sqrt{\frac{d[\sigma^2(t)]}{dt}}$ for some nondecreasing function $\sigma(t)$, e.g., $g(t)=ae^{bt}$ for some positive constants $a, b$.
%\vspace{-2mm}
\item Variance Preserving (VP) SDE:  $f(t) = \frac{1}{2}\beta(t) $ and $g(t) = \sqrt{ \beta(t) }$ for some nondecreasing {function} $\beta(t)$, e.g., $\beta(t) = at +b$ for some positive constants $a, b.$
\end{itemize}
%\vspace{-5mm}
Under mild assumptions, the reverse (in time) process $(\tilde{\mathbf{x}}_{t})_{t\in[0,T]}=(\mathbf{x}_{T-t})_{t\in[0,T]}$ satisfies the SDE in \eqref{eq:Reverse}.
% \begin{equation}\label{eq:yt}
% d\tilde{\mathbf{x}}_{t}=[f(T-t)\tilde{\mathbf{x}}_{t}+(g(T-t))^{2}\nabla\log p_{T-t}(\tilde{\mathbf{x}}_{t})]dt
% +g(T-t)d\bar{\mathbf{B}}_{t}, \quad \tilde{\mathbf{x}}_{0}\sim p_{T},
% \end{equation}
% where $\tilde{\mathbf{x}}_{T}\sim p_{0}$ and $\bar{\mathbf{B}}$ is a standard Brownian motion. 
Hence, by starting from samples of $p_T$, we can run the SDE \eqref{eq:Reverse} to time $T$ and obtain samples from the desired distribution $p_0$. 
However, the distribution $p_{T}$ is not explicit and hard to sample from because of its dependency on the initial distribution $p_{0}$.
With the choice of our forward SDE \eqref{OU:SDE}, which has an explicit solution \eqref{SDE:solution}
such that we can take
\begin{equation} \label{eq:hatp}
\hat{p}_{T}:=\mathcal{N}\left(0,\int_{0}^{T}e^{-2\int_{t}^{T}f(s)ds}(g(t))^{2}dt\cdot I_{d}\right),
\end{equation}
as an approximation of $p_T$, where $I_d$ is the $d-$dimensional identity matrix. Note that $\hat{p}_{T}$ is simply the distribution of the random variable $\int_{0}^{t}e^{-\int_{s}^{t}f(v)dv}g(s)d\mathbf{B}_{s}$ in 
\eqref{SDE:solution}, which is easy to sample from because it is Gaussian, and will be referred to as the prior distribution hereafter. Note that it directly follows from \eqref{SDE:solution} that
\begin{equation}\label{eq:phatp} 
\mathcal{W}_{2}(p_{T},\hat{p}_{T})
\leq
e^{-\int_{0}^{T}f(s)ds}\Vert\mathbf{x}_{0}\Vert_{L_{2}}.
\end{equation}
%When $f>0$, this inequality is a Wasserstein contraction,
%that indicates that $\hat{p}_{T}$ is close to $p_{T}$ for sufficiently large $T>0$.
%On the other hand, when $f\equiv 0$, which is the variance-exploding case,
%even though $\hat{p}_{T}$ may not be close to $p_{T}$, 
%we will still obtain the convergence result. 
%The intuition behind it is similar to a Brownian bridge,
%which will pin the stochastic process down to a given point
%regardless of the initial starting point. Here, 
%the reversal SDE will pin down to a given distribution
%regardless of the initial distribution.
%\gao{when $f$ is zero, the gap is \mathcal{O}1); why is it possible to obtain convergence? The gap will shrink when reverse the SDE?}
In view of \eqref{eq:Reverse}, we now consider the SDE:
\begin{equation}\label{eq:zt}
d\mathbf{z}_{t}=\left[f(T-t)\mathbf{z}_{t}+(g(T-t))^{2}\nabla\log p_{T-t}(\mathbf{z}_{t})\right]dt
+g(T-t)d\bar{\mathbf{B}}_{t}, \quad \mathbf{z}_{0}\sim\hat{p}_{T}.
\end{equation}
Because $p_{T}\neq\hat{p}_{T}$, this creates an error that when
running the reverse SDE, and as a result, the distribution
of $\mathbf{z}_{T}$ differs from $\tilde{\mathbf{x}}_{T}\sim p_{0}$.

%\begin{remark*} % I do not want it to become Remark 1 and makes Assumptions becomes 2, 3...
\textbf{Remark.} Several studies (see, e.g. \cite{de2022convergence,chen2022sampling}) consider VP-SDE, which is a time-inhomogeneous Ornstein-Uhlenbeck (OU) process, and use the stationary distribution $p_{\infty}$ of the OU process as the prior distribution in their convergence analysis. 
If we use $p_{\infty}$ instead of $\hat{p}_{T}$ in \eqref{eq:hatp} as the prior distribution, our main result can be easily modified. Indeed, we only need to replace the error estimate \eqref{eq:phatp} by an estimate on $\mathcal{W}_{2}(p_{T},p_{\infty})$, which can be easily bounded 
(by a coupling approach) due to the geometric convergence of the OU process to its stationarity. Because VE-SDE does not have a stationary distribution, we choose the prior distribution $\hat{p}_{T}$ in \eqref{eq:hatp} in order to provide a unifying analysis based on the general forward SDE \eqref{OU:SDE} that does not require
the existence of a stationary distribution.
%%%%%%%%%%%%%%%%%%%%%%%%%%%%%%%

\textbf{Score matching.}
Next, we consider score-matching. Note that the data distribution $p_0$ is unknown, and hence the true score function $\nabla_{\mathbf{x}}\log p_{t}(\mathbf{x})$ in \eqref{eq:zt} is also unknown. In practice, it needs to be estimated/approximated by a time-dependent score model $s_{\theta}(\mathbf{x},t)$, which is often a deep neural network parameterized by $\theta$. Estimating the score function from data has established methods, including score matching \cite{ hyvarinen2005estimation}, denoising score matching \cite{vincent2011connection}, and sliced score matching \cite{song2020sliced}. For instance, \cite{SongICLR2021} use denoising score matching where the training objective for optimizing the neural network is given by
\begin{align}\label{eq:score-matching}
\min_{\theta} \mathbb{E}_{t \sim U[0, T]}  \left[ \lambda(t) \mathbb{E}_{\mathbf{x}_{0}}  \mathbb{E}_{\mathbf{x}_{t} | \mathbf{x}_{0}} \left\Vert s_{\theta}(\mathbf{x}_t,t)-\nabla_{\mathbf{x}_{t}}\log p_{t|0}(\mathbf{x}_{t} | \mathbf{x}_{0})\right\Vert^2   \right].
\end{align}
Here, $\lambda(\cdot): [0, T] \rightarrow \mathbb{R}_{>0}$ is some positive weighting function (e.g. $\lambda(t)= g(t)^2$), $U[0, T]$ is the uniform distribution on $[0, T],$ $\mathbf{x}_{0} \sim p_0$ is the data distribution, and $p_{t|0}(\mathbf{x}_{t} | \mathbf{x}_{0})$ is the density of $\mathbf{x}_{t}$ given $\mathbf{x}_{0}$. With the forward process in \eqref{OU:SDE}, we can easily infer from its solution \eqref{SDE:solution} that the transition kernel $p_{t|0}(\mathbf{x}_{t} | \mathbf{x}_{0})$ follows a Gaussian distribution where the mean the variance can be computed in closed form using $f$ and $g$.
Because we also have access to i.i.d. samples from $p_0$, the distribution of $\mathbf{x}_{0}$, the objective in \eqref{eq:score-matching} can be approximated by Monte Carlo methods in practice, and the resulting loss function can be then optimized. 
% In practice, the objective in \eqref{eq:score-matching} can be approximated by Monte Carlo methods, because we have access to i.i.d samples from $p_0$, the distribution of $\mathbf{x}(0)$. 
After the score function is estimated, we introduce a continuous-time process that approximates \eqref{eq:zt}: 
\begin{equation}\label{eq:u}
d\mathbf{u}_{t}=\left[f(T-t)\mathbf{u}_{t}+(g(T-t))^{2}s_{\theta}(\mathbf{u}_{t},T-t)\right]dt
+g(T-t)d\bar{\mathbf{B}}_{t}, \quad \mathbf{u}_{0}\sim\hat{p}_{T},
\end{equation}
where we replace the true score function in \eqref{eq:zt} by the estimated score $s_{\theta}$.

\textbf{Discretization and algorithm.}
To obtain an implementable algorithm, one can apply different numerical methods for solving the reverse SDE \eqref{eq:u}, see Section~4 of \cite{SongICLR2021}. In this paper, 
we consider the following Euler-type discretization of the continuous-time stochastic process \eqref{eq:u}. %(referred to as the exponential integrator discretization in \cite{de2022convergence, lee2023convergence, zhang2023fast}).  
Let $\eta>0$ be the stepsize and without loss of generality,
let us assume that $T=K\eta$, where $K$ is a positive integer. 
Let $\mathbf{y}_{0}\sim\hat{p}_{T}$ and
for any $k=1,2,\ldots,K$, we have
\begin{align} \label{eq:yk}
\mathbf{y}_{k}
&=\mathbf{y}_{k-1}
+\left(\int_{(k-1)\eta}^{k\eta}f(T-t)dt\right)\mathbf{y}_{k-1}
\\
& \qquad +\left(\int_{(k-1)\eta}^{k\eta}(g(T-t))^{2}dt\right)s_{\theta}(\mathbf{y}_{k-1},T-(k-1)\eta)
+\left(\int_{(k-1)\eta}^{k\eta}(g(T-t))^{2}dt\right)^{1/2}\xi_{k}, \nonumber
\end{align}
where $\xi_{k}$ are i.i.d. Gaussian random vectors $\mathcal{N}(0,I_{d})$.

We are interested in the convergence of the generated distribution $\mathcal{L}(\mathbf{y}_{K})$ to the data distribution $p_{0}$, where $\mathcal{L}(\mathbf{y}_{K})$ denotes the law or distribution of $\mathbf{y}_{K}$. Specifically, 
our goal is to bound the 2-Wasserstein distance $\mathcal{W}_{2}(\mathcal{L}(\mathbf{y}_{K}),p_{0})$, and investigate the number of iterates $K$ that is needed
in order to achieve $\epsilon$ accuracy, i.e. $\mathcal{W}_{2}(\mathcal{L}(\mathbf{y}_{K}),p_{0})\leq\epsilon$. At a high level, there are three sources of errors in analyzing the convergence: (1) the initialization of the algorithm at $\hat p_T$ instead of $p_T$, (2) the estimation error of the score function, and (3) the discretization error of the continuous-time process \eqref{eq:u}.

%%%%%%%%%%%%%%%%%%%%%%%%%%%%%%%%%%%
\section{Main Results}

In this section, we state our main results. 

\subsection{Assumptions}

We first state our assumptions and 
provide some discussions regarding these assumptions.

%Our first assumption is on the data distribution $p_{0}$.

\begin{assumption}\label{assump:p0}
Assume that $p_{0}$ is differentiable and positive everywhere. 
Moreover, $-\log p_{0}$ is $m_{0}$-strongly convex and $L_{0}$-smooth for some $m_0, L_0>0$. 
% assume that $m_{0}$-strongly-log-concave and $\log p_{0}$ is $L_{0}$-smooth, 
% i.e. $-\log p_{0}$ is $m_{0}$-strongly convex and its gradient is $L_{0}$-Lipschitz. 
%that is: 
%\begin{equation}
%\frac{m_{0}}{2}\Vert\mathbf{x}-\mathbf{y}\Vert^{2}
%\leq
%-\log p_{0}(\mathbf{x})+\log p_{0}(\mathbf{y})
%+\nabla p_{0}(\mathbf{y})^{T}(\mathbf{x}-\mathbf{y})
%\leq\frac{L_{0}}{2}\Vert\mathbf{x}-\mathbf{y}\Vert^{2},
%\end{equation}
%for any $\mathbf{x},\mathbf{y}\in\mathbb{R}^{d}$.
\end{assumption}
%\vspace{-8mm}
\begin{remark}
Our strong-log-concave assumption on the (unbounded) data distribution has also been used in e.g. \citep{bruno2023diffusion, gao2024convergence}, which is a strong assumption. 
We need this assumption mainly because we consider Wasserstein convergence for score-based diffusions on a non-compact domain. 
In particular, we need the Wasserstein contractions in the reverse process to obtain convergence and control the discretization and score-matching errors, and this is achieved by 
deriving strong-log-concavity and smoothness for the score function at any time $t$ based on this assumption. 
In the literature, some studies \citep{chen2022sampling} considered compactly supported data (without log-concavity) and establish Wasserstain convergence guarantees by early stopping the algorithm and projecting the algorithm
output to the compact domain, whereas our analysis considers unbounded data distributions. 
It is an enormously interesting question how to relax Assumption~\ref{assump:p0}. See the conclusion section for further discussions.

\end{remark}

% Under Assumption~\ref{assump:p0}, it follows immediately that
% $\mathbf{x}_{0} \sim p_0$ is $L_{2}$-integrable. 
% Indeed,  by Lemma~11 in \cite{distMCMC}, we have
% \begin{align}\label{eq:L2-x0}
% \Vert\mathbf{x}_{0}\Vert_{L_{2}}\leq\sqrt{2d/m_{0}}+\Vert\mathbf{x}_{\ast}\Vert,
% \end{align}
% where $\mathbf{x}_{\ast}$ is the unique minimizer of $-\log p_{0}$.
% Indeed, the upper bound in \eqref{eq:L2-x0} is tight
% in terms of the dependence on $d$ if $\Vert\mathbf{x}_{\ast}\Vert$ is of constant order (which can be seen
% from the example $\mathbf{x}_{0}\sim\mathcal{N}(0,\sigma_{0}^{2}I_{d})$
% such that $\Vert\mathbf{x}_{0}\Vert_{L_{2}}=\sigma_{0}\sqrt{d}$).
% (see Lemma~\ref{lem:L2:integrability} in the Appendix~\ref{appendix:technical}). 

Our next assumption is about the true score function $\nabla_{\mathbf{x}}\log p_{t}(\mathbf{x})$ for $t \in [0, T]$.
We assume that $\nabla_{\mathbf{x}}\log p_{t}(\mathbf{x})$
is Lipschitz in time, where
the Lipschitz constant has at most
linear growth in $\Vert\mathbf{x}\Vert$. 
Assumption~\ref{assump:M:1} is needed in controlling the discretization error of the continuous-time process \eqref{eq:u}. 

%\vspace{-2mm}
\begin{assumption}\label{assump:M:1}
There exists some constant $M_{1}$ such that
\begin{equation}
\sup_{1\leq k\leq K}\sup_{(k-1)\eta\leq t\leq k\eta}\left\Vert\nabla\log p_{T-t}(\mathbf{x})-\nabla\log p_{T-(k-1)\eta}(\mathbf{x})\right\Vert
\leq M_{1}\eta(1+\Vert\mathbf{x}\Vert).
\end{equation}
\end{assumption}

To motivate Assumption~\ref{assump:M:1}, consider the idealized case where  $\mathbf{x}_{0}\sim\mathcal{N}(0,\sigma_{0}^{2}I_{d})$. Then, 
one can compute that
$\nabla_{\mathbf{x}}\log p_{T-t}(\mathbf{x})
=-\frac{\mathbf{x}}{(a_{1}(T-t))^{2}\sigma_{0}^{2}+a_{2}(T-t)}$,
where
$a_{1}(T-t):=e^{-\int_{0}^{T-t}f(s)ds}$
and $a_{2}(T-t):=\int_{0}^{T-t}e^{-2\int_{s}^{T-t}f(v)dv}(g(s))^{2}ds$.
This implies that Assumption~\ref{assump:M:1} is satisfied with
$M_{1}=\sup_{ t \ge 0 }\left|\frac{d}{dt}\frac{1}{(a_{1}(t))^{2}\sigma_{0}^{2}+a_{2}(t)}\right|
=\sup_{t \ge 0 }\frac{|2a_{1}(t)a'_{1}(t)\sigma_{0}^{2}+a'_{2}(t)|}{\left((a_{1}(t))^{2}\sigma_{0}^{2}+a_{2}(t)\right)^{2}}$, provided that $M_{1}\in(0,\infty)$. 
Indeed, for VE-SDE, 
$M_{1}=\sup_{t \ge 0}\frac{(g(t))^{2}}{(\sigma_{0}^{2}+\int_{0}^{t}(g(s))^{2}ds)^{2}}<\infty$,
provided that $(g(t))^{2}\leq c_{1}+c_{2}\left(\int_{0}^{t}(g(s))^{2}ds\right)^{2}$ uniformly in $t$ for some $c_{1},c_{2}>0$,
which is a very mild assumption
that is satisfied by all our examples in Section~\ref{sec:examples}.
For VP-SDE, i.e. $f(t)=\frac{1}{2}\beta(t)$ and $g(t)=\sqrt{\beta(t)}$,
then $a_{2}(t)=1-e^{-\int_{0}^{t}\beta(s)ds}$ and
$M_{1}=\sup_{t\geq 0}\frac{\beta(t)e^{-\int_{0}^{t}\beta(s)ds}|1-\sigma_{0}^{2}|}{\left(e^{-\int_{0}^{t}\beta(s)ds}(\sigma_{0}^{2}-1)+1\right)^{2}}<\infty$,
provided that $\sup_{t\geq 0}\beta(t)e^{-\int_{0}^{t}\beta(s)ds}<\infty$ which is a very mild assumption
that is satisfied by all our examples in Section~\ref{sec:examples}.

We also make the following assumption on the score-matching approximation. Recall $(\mathbf{y}_{k})$ are the iterates defined in \eqref{eq:yk}.
%\vspace{-3mm}
\begin{assumption}\label{assump:M}
% Assume that $\sup_{\mathbf{x},t}\Vert s_{\theta}(\mathbf{x},t)-\nabla_{\mathbf{x}}\log p_{t}(\mathbf{x})\Vert\leq M$
Assume that there exists $M>0$ such that
\begin{equation} %\label{assump:M}
\sup_{k=1,\ldots,K}
\left\Vert\nabla\log p_{T-(k-1)\eta}\left(\mathbf{y}_{k-1}\right)-s_{\theta}\left(\mathbf{y}_{k-1},T-(k-1)\eta\right)\right\Vert_{L_{2}}\leq M.
\end{equation}
\end{assumption}

We make a remark here that the main results in our paper will still hold
if Assumption~\ref{assump:M}
is to be replaced by the following $L_{2}$-type assumption on the score function of the continuous-time forward process $(\mathbf{x}_{t})$ in \eqref{OU:SDE}:
\begin{equation}\label{weakened:assump:2}
\sup_{k=1,\ldots,K}
\left\Vert\nabla\log p_{k\eta}\left(\mathbf{x}_{k\eta}\right)-s_{\theta}\left(\mathbf{x}_{k\eta},k\eta\right)\right\Vert_{L_{2}}\leq M,
\end{equation}
 under the additional assumption that $s_{\theta}(\cdot,k\eta)$ is Lipschitz for every $k$
and the observation that  $ \nabla \log p_{k\eta}(\cdot)$ is Lipschitz under Assumption~\ref{assump:p0} (see Lemma~\ref{lem:smooth}). Assumption~\eqref{weakened:assump:2} is considered in e.g. \cite{chen2022sampling}.

Finally, we make the following assumption on the stepsize $\eta$ in the algorithm \eqref{eq:yk}. 
%\vspace{-2mm}
\begin{assumption}\label{assump:stepsize}
Assume that the stepsize $\eta$ is small such that it satisfies the conditions:
\begin{equation}\label{assump:stepsize:1}
\eta\leq
\min_{0\leq t\leq T}\left\{\frac{\frac{(g(t))^{2}}{\frac{1}{m_{0}}e^{-2\int_{0}^{t}f(s)ds}+\int_{0}^{t}e^{-2\int_{s}^{t}f(v)dv}(g(s))^{2}ds}-f(t)}
{(f(t))^{2}+(g(t))^{4}(L(t))^{2}+M_{1}(g(t))^{2}}\right\},
\end{equation}
and
\begin{equation}\label{assump:stepsize:2}
\eta\leq
\min_{0\leq t\leq T}
\left\{\frac{1}{\frac{(g(t))^{2}}{\frac{1}{m_{0}}e^{-2\int_{0}^{t}f(s)ds}+\int_{0}^{t}e^{-2\int_{s}^{t}f(v)dv}(g(s))^{2}ds}-f(t)}\right\},
\end{equation}
where for any $0\leq t\leq T$, 
\begin{align}
L(t):=\min\left(\left(\int_{0}^{t}e^{-2\int_{s}^{t}f(v)dv}(g(s))^{2}ds\right)^{-1},
\left(e^{\int_{0}^{t}f(s)ds}\right)^{2}L_{0}\right).\label{eq:Lt} 
\end{align}
\end{assumption}

Note that in Assumption~\ref{assump:stepsize}, 
$L(t)$ defined in \eqref{eq:Lt} can be interpreted
as the Lipschitz constant of $\nabla_{\mathbf{x}}\log p_{t}(\mathbf{x})$ (Lemma~\ref{lem:smooth}). Under Assumption~\ref{assump:stepsize}, the stepsize $\eta$
is sufficiently small so that the discretization and score-matching errors
%captured by $\left\Vert\mathbf{z}_{k\eta}- \mathbf{y}_{k} \right\Vert_{L_{2}}$ 
will be controllable.
% Specifically the small stepsize ensures the contraction rate $\gamma_{k,\eta}\in(0,1)$, where $\gamma_{k,\eta}$ will be defined in \eqref{gamma:k:defn} (see Proposition~\ref{prop:iterates} for details). 
For VE-SDE where $f=0$, \eqref{assump:stepsize:1}-\eqref{assump:stepsize:2} can be simplified as
$\eta\leq\min_{0\leq t\leq T}\frac{1}{(\frac{1}{m_{0}}+\int_{0}^{t}(g(s))^{2}ds)((g(t))^{2}(L(t))^{2}+M_{1})}$ and $\eta\leq\min_{0\leq t\leq T}\frac{\frac{1}{m_{0}}+\int_{0}^{t}(g(s))^{2}ds}{(g(t))^{2}}$,
where $L(t)=\min\left(\left(\int_{0}^{t}(g(s))^{2}ds\right)^{-1},
L_{0}\right)$. On the other hand, For VP-SDEs where $f(t)=\frac{1}{2}\beta(t)$ and $g(t)=\sqrt{\beta(t)}$, 
 \eqref{assump:stepsize:1}-\eqref{assump:stepsize:2} can be simplified as
\begin{equation}
\eta\leq\min_{0\leq t\leq T}\left\{\frac{m_{0}-(1-m_{0})e^{-\int_{0}^{t}\beta(s)ds}}{(m_{0}+(1-m_{0})e^{-\int_{0}^{t}\beta(s)ds})(\frac{1}{2}\beta(t)+2\beta(t)(L(t))^{2}+2M_{1})}\right\},
\end{equation}
and 
\begin{equation}
\eta\leq\min_{0\leq t\leq T}\frac{2}{\beta(t)}\cdot\frac{m_{0}+(1-m_{0})e^{-\int_{0}^{t}\beta(s)ds}}{m_{0}-(1-m_{0})e^{-\int_{0}^{t}\beta(s)ds}}, 
\end{equation}
where $L(t)=\min\left(\left(1-e^{-\int_{0}^{t}\beta(s)ds}\right)^{-1},e^{\int_{0}^{t}\beta(s)ds}L_{0}\right)$.  To ensure that the right-hand sides of \eqref{assump:stepsize:1}-\eqref{assump:stepsize:2} are positive for general VP-SDEs, a sufficient condition is $m_{0}>1/2$.

\subsection{Main Result}

In this section, we state our main theoretical result, which provides a bound on the 2-Wasserstein distance $\mathcal{W}_{2}(\mathcal{L}(\mathbf{y}_{K}),p_{0})$. To facilitate the presentation, we introduce a few quantities with their interpretations given in Table~\ref{table:quantities}. 
For any $0\leq t\leq T$, we define:
\begin{align}
c(t) & :=\frac{m_{0}(g(t))^{2}}{e^{-2\int_{0}^{t}f(s)ds}+m_{0}\int_{0}^{t}e^{-2\int_{s}^{t}f(v)dv}(g(s))^{2}ds}, \label{c:t:defn} 
\\
\mu(T-t)&:=\frac{(g(T-t))^{2}}{\frac{1}{m_{0}}e^{-2\int_{0}^{T-t}f(s)ds}+\int_{0}^{T-t}e^{-2\int_{s}^{T-t}f(v)dv}(g(s))^{2}ds}-f(T-t)
\nonumber
\\
&\qquad\qquad\qquad\qquad\qquad
-\eta(f(T-t))^{2}-\eta(g(T-t))^{4}(L(T-t))^{2},\label{mu:definition}
\\
m(t)&:=\frac{2(g(t))^{2}}{\frac{1}{m_{0}}e^{-2\int_{0}^{t}f(s)ds}+\int_{0}^{t}e^{-2\int_{s}^{t}f(v)dv}(g(s))^{2}ds}-2f(t), \label{eq:mt} 
\end{align}
\begin{align}
c_{1}(T) &:=\sup_{0\leq t\leq T}e^{-\frac{1}{2}\int_{0}^{t}m(T-s)ds}e^{-\int_{0}^{T}f(s)ds}\Vert\mathbf{x}_{0}\Vert_{L_{2}},\label{c:1:defn}
\\
c_{2}(T) & :=\sup_{0\leq t\leq T}\left(e^{-2\int_{0}^{t}f(s)ds} \Vert\mathbf{x}_{0}\Vert_{L_{2}}^2 
+d\int_{0}^{t}e^{-2\int_{s}^{t}f(v)dv}(g(s))^{2}ds\right)^{1/2}, \label{c:2:defn}
\end{align}
and moreover for any $k=1,2,\ldots,K$,
\begin{align}
\gamma_{k,\eta}&:=1-\int_{(k-1)\eta}^{k\eta}\mu(T-t)dt+M_{1}\eta\int_{(k-1)\eta}^{k\eta}(g(T-t))^{2}dt,\label{gamma:k:defn}
\\
h_{k,\eta}&:=c_{1}(T)\int_{(k-1)\eta}^{k\eta}[f(T-s)+(g(T-s))^{2}L(T-s)]ds
\nonumber
\\
&\qquad
+c_{2}(T)\int_{T-k\eta}^{T-(k-1)\eta}f(s)ds
+\left(\int_{T-k\eta}^{T-(k-1)\eta}(g(s))^{2}ds\right)^{1/2}\sqrt{d}.\label{h:k:eta:main}
\end{align}

\begin{table}[htp]
\begin{center}
\begin{tabular}{|c | c | c |} 
 \hline
Quantities & Interpretations & Sources/References \\ [0.5ex] 
 \hline % \hline
 $c(t)$ in \eqref{c:t:defn}  & Contraction rate of $\mathcal{W}_{2}(\mathcal{L}(\mathbf{z}_{T}),p_{0}) $ &  \eqref{eq:contraction1} \\ 
 \hline
$L(T-t) $ in \eqref{eq:Lt} & Lipschitz constant of $\nabla_{\mathbf{x}}\log p_{T-t}(\mathbf{x})$  & Lemma~\ref{lem:smooth} \\
 \hline
 \multirow{2}{*}{$\gamma_{k,\eta}$ in \eqref{gamma:k:defn}}  & Contraction rate of discretization  & \multirow{2}{*}{Proposition~\ref{prop:iterates}} \\
 & and score-matching errors in $\mathbf{y}_{k}$ & \\
 \hline
 $m(t)$ in \eqref{eq:mt} & Contraction rate of $\mathbb{E}\Vert\tilde{\mathbf{x}}_{T}-\mathbf{z}_{T}\Vert^{2} $  & \eqref{eq:int-mt} \\
 \hline
 $c_1(T)$ in \eqref{c:1:defn} & Bound for $ \sup_{0 \le t \le T}\Vert\mathbf{z}_{t}-\tilde{\mathbf{x}}_{t}\Vert_{L_{2}}$   & \eqref{eq:c1Tsource} \\  %[1ex] 
 \hline
  $c_2(T)$ in \eqref{c:2:defn} & $ \sup_{0\leq t\leq T}\Vert\mathbf{x}_{t}\Vert_{L_{2}}$  &  \eqref{c:2:source} \\  %[1ex] 
 \hline
  $h_{k,\eta}$ in \eqref{h:k:eta:main} & Bound for $ \sup_{(k-1)\eta\leq t\leq k\eta} \left\Vert\mathbf{z}_{t}-\mathbf{z}_{(k-1)\eta}\right\Vert_{L_{2}}$  &  Lemma~\ref{lem:second:term} \\  %[1ex] 
 \hline
\end{tabular}
\caption{\label{table:quantities} 
Summary of quantities, their interpretations and the sources}
\end{center}
\end{table}

We are now ready to state our bound on the 2-Wasserstein distance $\mathcal{W}_{2}(\mathcal{L}(\mathbf{y}_{K}),p_{0})$.

\begin{theorem}\label{thm:discrete:2}
Suppose that Assumptions~\ref{assump:p0}, \ref{assump:M:1},  \ref{assump:M} and \ref{assump:stepsize} hold.
Then, we have
\begin{align}
\mathcal{W}_{2}(\mathcal{L}(\mathbf{y}_{K}),p_{0})
\leq e^{-\int_{0}^{K\eta}c(t)dt}  \Vert\mathbf{x}_{0}\Vert_{L_{2}}  
+\mathcal{E}_{1}(f,g, K, \eta, M_1)
+\mathcal{E}_{2}(f,g, K, \eta, M, M_1),\label{main:thm:upper:bound}
\end{align}
where
\begin{align}
\mathcal{E}_{1}(f,g, K, \eta, M_1)
&:=\sum_{k=1}^{K}\prod_{j=k+1}^{K}\gamma_{j,\eta}\cdot\Bigg(M_{1}\eta\left(1+ \Vert\mathbf{x}_{0}\Vert_{L_{2}} 
+c_{2}(T)\right)\int_{(k-1)\eta}^{k\eta}(g(T-t))^{2}dt
\nonumber
\\
&\quad 
+\sqrt{\eta}h_{k,\eta}\left(\int_{(k-1)\eta}^{k\eta}[f(T-t)+(g(T-t))^{2}L(T-t)]^{2}dt\right)^{1/2}\Bigg),
\end{align}
and
\begin{align}
\mathcal{E}_{2}(f,g, K, \eta, M, M_1)
&:=\sum_{k=1}^{K}
\prod_{j=k+1}^{K}\gamma_{j,\eta}\cdot M\int_{(k-1)\eta}^{k\eta}(g(T-t))^{2}dt,
\end{align}
where $c(t)$ is given in \eqref{c:t:defn}, $\gamma_{j,\eta}$ is defined in \eqref{gamma:k:defn}, $c_{2}(T)$ is defined in \eqref{c:2:defn} and $h_{k,\eta}$ is given in \eqref{h:k:eta:main}.  
%and $\mathbf{x}_{\ast}$ is the unique minimizer of $-\log p_{0}$.
\end{theorem}

We can interpret Theorem~\ref{thm:discrete:2} as follows.
The first term in \eqref{main:thm:upper:bound} is the \emph{initialization error}; it characterizes the convergence of the continuous-time
reverse SDE $(\mathbf{z}_{t})$ in \eqref{eq:zt} to the distribution $p_{0}$ without discretization
or score-matching errors. Specifically, it bounds the error $\mathcal{W}_{2}(\mathbf{z}_{T},p_{0})$, which is introduced due to the initialization of the reverse SDE $(\mathbf{z}_{t})$ at $\hat p_T$ instead of $p_T$ (see Proposition~\ref{thm:1} for details). 
The second term $\mathcal{E}_{1}(f,g,K,\eta,M_{1})$ 
and the third term $\mathcal{E}_{2}(f,g,K,\eta,M,M_{1})$ in \eqref{main:thm:upper:bound} quantify
the \emph{discretization error} and the \emph{score-matching error} respectively 
in running the algorithm $(\mathbf{y}_{k})$ in \eqref{eq:yk}. 
Note that Assumption~\ref{assump:stepsize} implies
that $\mu(T-t)$ in \eqref{mu:definition} is positive when $\eta$ is sufficiently small, which further suggests from \eqref{gamma:k:defn} that $\gamma_{j,\eta}\in (0, 1)$ for any $j=1,2,\ldots,K$.   
This quantity $\gamma_{j,\eta}$ that appears
in the definitions of $\mathcal{E}_{1}(f,g,K,\eta,M_{1})$ 
and $\mathcal{E}_{2}(f,g,K,\eta,M,M_{1})$ is important, because it plays the role of a contraction rate of the error $\left\Vert\mathbf{z}_{k\eta}- \mathbf{y}_{k} \right\Vert_{L_{2}}$ over iterations (see Proposition~\ref{prop:iterates} for details). Conceptually, it guarantees that as the number of iterations increases, 
the discretization and score-matching errors in the iterates $(\mathbf{y}_{k})$
do not propagate and grow in time, which helps
us control the overall discretization and score-matching errors.

%%%%%%%%%%%%%%%%%%%%%%%%%%%%%%%%%%

\subsection{Examples}\label{sec:examples}

In this section, we consider several examples of the forward processes and discuss the implications of  Theorem~\ref{thm:discrete:2}. In particular, we consider a variety of choices for $f$ and $g$ in the SDE \eqref{OU:SDE}, and investigate the iteration complexity, i.e., the number of iterates $K$ that is needed
in order to achieve $\epsilon$ accuracy, i.e. $\mathcal{W}_{2}(\mathcal{L}(\mathbf{y}_{K}),p_{0})\leq\epsilon$. While the bound in Theorem~\ref{thm:discrete:2} is quite complex in general, and it can be made more explicit when we consider special $f$ and $g$. 
 We summarize the results in Table~\ref{table:summary}. 
The main idea behind the results in Table~\ref{table:summary}
 is to analyze the three terms in \eqref{main:thm:upper:bound} in Theorem~\ref{thm:discrete:2} 
 carefully for each example. We first choose $T=K\eta$ to be sufficiently large and fixed such that the first term in \eqref{main:thm:upper:bound}, that controls
 the initialization error, is $\mathcal{O}(\epsilon)$. 
 Given $T=K\eta$ being fixed, the second term $\mathcal{E}_{1}$ in \eqref{main:thm:upper:bound}, that controls
 the discretization error, can be upper bounded by a function of $T=K\eta$ and $\eta$, which is $\mathcal{O}(\epsilon)$, by choosing $\eta$ to be sufficiently small. This also determines $K$ since $T=K\eta$ is chosen and fixed from the previous step. Finally, given $T=K\eta$ and $\eta$ being fixed, 
 the third term $\mathcal{E}_{2}$ in \eqref{main:thm:upper:bound}, that controls
 the score-matching error, can be upper bounded by a function of $T=K\eta$, $\eta$ and $M$, which is $\mathcal{O}(\epsilon)$ by ``choosing" $M$ to be sufficiently small.
 For each example, with the specific choice of $f$ and $g$, one needs to spell out the explicit 
 dependence on $T=K\eta$, $\eta$ and $M$ from \eqref{main:thm:upper:bound}
 before we can carry out the above analysis to obtain the results in Table~\ref{table:summary}.
The detailed derivation of these results will be given in Appendix~\ref{appendix:examples}.

\begin{table}[t]
\begin{center}
 \resizebox{\textwidth}{!}{
\begin{tabular}{|c|c|c|c|c|c|}
\hline
$f$ & $g$ & $K$ & $M$ & $\eta$ & References \\
\hline
\hline
0   & $ae^{bt}$  & $\mathcal{O}\left(\frac{d\log(\frac{d}{\epsilon})}{\epsilon^{2}}\right)$ & $\mathcal{O}\left(\frac{\epsilon}{\log(\frac{1}{\epsilon})}\right)$ & $\mathcal{O}\left(\frac{\epsilon^{2}}{d}\right)$ &  \cite{SongICLR2021} \\
\hline
0   & $a$ & $\mathcal{O}\left(\frac{d^{3/2}\log(\frac{d}{\epsilon})}{\epsilon^{3}}\right)$ & $\mathcal{O}\left(\frac{\epsilon}{\sqrt{\log(\frac{d}{\epsilon})}}\right)$ & $\mathcal{O}\left(\frac{\epsilon^{2}}{d\log(\frac{d}{\epsilon})}\right)$ &  \cite{de2021diffusion} \\
\hline
0   & $\sqrt{2at}$  & $\mathcal{O}\left(\frac{d^{5/4}}{\epsilon^{5/2}}\right)$ & $\mathcal{O}\left(\epsilon^{3/2}\right)$ & $\mathcal{O}\left(\frac{\epsilon^{2}}{d}\right)$ & our paper \\
\hline
\multirow{2}{*}{0}   & \multirow{2}{*}{$(b+at)^{c}$}  & \multirow{2}{*}{$\mathcal{O}\left(\frac{d^{\frac{1}{2(2c+1)}+1}}{\epsilon^{\frac{1}{2c+1}+2}}\right)$} & \multirow{2}{*}{$\mathcal{O}\left(\epsilon^{1+\frac{2c}{2c+1}}\right)$} & \multirow{2}{*}{$\mathcal{O}\left(\frac{\epsilon^{2}}{d}\right)$} & our paper \\
		&	&  &  &  & \\
\hline
\hline
$\alpha$   & $\sigma$  & $\mathcal{O}\left(\frac{d\log(\frac{d}{\epsilon})}{\epsilon^{2}}\right)$ & $\mathcal{O}(\epsilon)$ & $\mathcal{O}\left(\frac{\epsilon^{2}}{d}\right)$ &  \cite{de2021diffusion} \\
\hline
$\frac{b+at}{2}$   & $\sqrt{b+at}$  & $\mathcal{O}\left(\frac{d\sqrt{\log(\frac{d}{\epsilon})}}{\epsilon^{2}}\right)$ & $\mathcal{O}(\epsilon)$ & $\mathcal{O}\left(\frac{\epsilon^{2}}{d}\right)$ & \cite{Ho2020} \\
\hline
$\frac{(b+at)^{\rho}}{2}$   & $(b+at)^{\frac{\rho}{2}}$  & $\mathcal{O}\left(\frac{d(\log(\frac{d}{\epsilon}))^{\frac{1}{\rho+1}}}{\epsilon^{2}}\right)$ & $\mathcal{O}(\epsilon)$ & $\mathcal{O}\left(\frac{\epsilon^{2}}{d}\right)$ & our paper \\
\hline
$\frac{ae^{bt}}{2}$   & $\sqrt{a}e^{\frac{bt}{2}}$  & $\mathcal{O}\left(\frac{d\log(\log(\frac{d}{\epsilon}))}{\epsilon^{2}}\right)$ & $\mathcal{O}(\epsilon)$ & $\mathcal{O}\left(\frac{\epsilon^{2}}{d}\right)$ & our paper \\
\hline
\end{tabular}
}
\caption{\label{table:summary} 
The iteration complexity of the algorithm \eqref{eq:yk} in terms
of $\epsilon$ and dimension $d$. Here $f,g$ correspond to the drift and diffusion terms in the forward SDE \eqref{OU:SDE}, and $a$, $b$, $c$, $\alpha$, $\sigma$, $\rho$ are positive constants. $K$ is the number of iterates, $M$
is the score-matching approximation error, and $\eta$ is the stepsize required to achieve accuracy level $\epsilon$ (i.e. $\mathcal{W}_{2}(\mathcal{L}(\mathbf{y}_{K}),p_{0})\leq\epsilon$).}
\end{center}
\end{table}

From Table~\ref{table:summary}, we have the following observations. By ignoring the logarithmic dependence on $\epsilon$, 
we can see that the VE-SDE
from the literature \cite{SongErmon2019} ($f(t)=0$, $g(t)=ae^{bt}$ with $a,b>0$), as well as all the VP-SDE examples in Table~\ref{table:summary}, 
achieve the complexity $\widetilde{\mathcal{O}}\left(d/\epsilon^{2}\right)$.
In particular, the VP-SDEs that we proposed, 
i.e. $f(t)=\frac{1}{2}(b+at)^{\rho}$, $g(t)=(b+at)^{\rho/2}$ and $f(t)=\frac{1}{2}ae^{bt}$, $g(t)=\sqrt{a}e^{bt/2}$
have marginal improvement in terms of the logarithmic dependence on $d$ and $\epsilon$
compared to the existing models in the literature. 
Among the VP-SDE models, $f(t)=\frac{1}{2}ae^{bt}$, $g(t)=\sqrt{a}e^{bt/2}$ has the best complexity performance.
Indeed, the complexity for $f(t)=\frac{1}{2}(b+at)^{\rho}$, $g(t)=(b+at)^{\rho/2}$ is getting
smaller as $\rho$ increases. 
%and since exponential dominates any polynomial, this is consistent
%with the best complexity performance for $f(t)=\frac{1}{2}ae^{bt}$, $g(t)=\sqrt{a}e^{bt/2}$.
However, the complexity in Table~\ref{table:summary} only highlights the dependence
on $d,\epsilon$ and ignores the dependence on other model parameters including $\rho$. 
Therefore, in practice, we expect that for the performance of $f(t)=\frac{1}{2}(b+at)^{\rho}$, $g(t)=(b+at)^{\rho/2}$
is getting better when $\rho$ is getting bigger, as long as it is below a certain threshold,
%(and
%it also indicates that in practice the choice of $f(t)=\frac{1}{2}(b+at)^{\rho}$, $g(t)=(b+at)^{\rho/2}$ 
%for a reasonably large $\rho$ should outperform $f(t)=\frac{1}{2}ae^{bt}$, $g(t)=\sqrt{a}e^{bt/2}$), 
since letting $\rho\rightarrow\infty$ will make the complexity $K$ explode with its hidden dependence on $\rho$.
% \footnote{To see this, notice that by \eqref{assump:stepsize:2} and \eqref{assump:stepsize:VP}, as $\rho\rightarrow\infty$, 
% the stepsize satisfies $$\eta\leq\frac{2}{\beta(T)}\cdot\frac{m_{0}+(1-m_{0})e^{-\int_{0}^{T}\beta(s)ds}}{m_{0}-(1-m_{0})e^{-\int_{0}^{T}\beta(s)ds}}
% \leq\mathcal{O}\left(\frac{1}{\beta(T)}\right)=\mathcal{O}\left(\frac{1}{(b+aT)^{\rho}}\right),$$
% so that $K=\frac{T}{\eta}\geq\Omega(T(b+aT)^{\rho})$ which explodes as $\rho\rightarrow\infty$.} 
This suggests that in practice, the optimal choice among the examples
from Table~\ref{table:summary} could be $f(t)=\frac{1}{2}(b+at)^{\rho}$, $g(t)=(b+at)^{\rho/2}$ 
for a reasonably large $\rho$. Later, in the numerical experiments (Section~\ref{sec:numerical}),
we will see that this is indeed the case.

Another observation from Table~\ref{table:summary}
is that the complexity $K$ has a phase transition, i.e. a discontinuity,
when $f$ decreases from $\alpha$ to $0$ at $\alpha=0$ (by considering the examples $f\equiv\alpha$, $g\equiv\sigma$ and $f\equiv 0$, $g\equiv a$), 
in the sense that the complexity jumps from $\mathcal{O}\left(\frac{d\log(d/\epsilon)}{\epsilon^{2}}\right)$ 
to $\mathcal{O}\left(\frac{d^{3/2}\log(d/\epsilon)}{\epsilon^{3}}\right)$. 
The intuition is that the drift term $\alpha>0$ in the forward process
creates a mean-reverting effect
to make the starting point of the reverse process $\hat{p}_{T}$ closer to the idealized starting point $p_{T}$. Since we hide the dependence of complexity on $\alpha$, and only keep track the dependence on $\epsilon,d$, it creates a phase transition at $\alpha=0$.
A similar phase transition phenomenon occurs for the complexity $K$
when we consider the examples $f\equiv 0$, $g(t)=ae^{bt}$ and $f\equiv 0$, $g(t)\equiv a$, 
where the complexity jumps from $\mathcal{O}\left(\frac{d\log(d/\epsilon)}{\epsilon^{2}}\right)$ 
to $\mathcal{O}\left(\frac{d^{3/2}\log(d/\epsilon)}{\epsilon^{3}}\right)$ 
as $b$ decreases to $0$.

% Finally, we make a remark that in Table~\ref{table:summary}, complexity for 
% the case $f\equiv\alpha$, $g\equiv\sigma$ works
% for any given $\alpha,\sigma>0$, whereas later in the numerical experiments (Section~\ref{sec:numerical}), 
% we will only consider the special case $\alpha=\frac{1}{2}\sigma^{2}$, i.e. under the class of VP-SDEs.

\begin{remark}
In the iteration complexities in Table~\ref{table:summary}, we omit the dependence on the constant $M_{1}$ from Assumption~\ref{assump:M:1}, 
where it is assumed to be a universal constant.
% , and we do not have an explicit form for it except when the data distribution is Gaussian (see the discussions after Assumption~\ref{assump:M:1}). 
% Since we do not know the magnitude of $M_{1}$ in general, it is also interesting to see the dependence on $M_{1}$ for the iteration complexities. On the other hand, due to the explicit error bound in
One can easily include the dependence on $M_{1}$ for all the examples in Table~\ref{table:summary}, due to the explicit error bound we obtain in 
Theorem~\ref{thm:discrete:2}. For example, when $f(t)=0$, $g(t)=ae^{bt}$, the iteration complexity $K$
becomes $\mathcal{O}\left(\log\left(\frac{d}{\epsilon}\right)\max\left\{\frac{d}{\epsilon^{2}},\frac{M_{1}\log(1/\epsilon)d^{3/4}}{\epsilon^{3/2}}\right\}\right)$, see Remark~\ref{rk:VE:5} in Appendix~\ref{appendix:examples}; when $f(t)=\frac{b+at}{2}$, $g(t)=\sqrt{b+at}$, the iteration complexity $K$
becomes $\mathcal{O}\left(\sqrt{\log\left(\frac{d}{\epsilon}\right)}\max\left\{\frac{d}{\epsilon^{2}},\frac{M_{1}\sqrt{d}}{\epsilon}\right\}\right)$, see Remark~\ref{rk:VP:6} in Appendix~\ref{appendix:examples}.
\end{remark}

%%%%%%%%%%%%%%%%%%%%%%%%%%%%%%%%%%%
\subsection{Discussions}\label{sec:discussion}

Our results in Table~\ref{table:summary} naturally lead to several questions:  %\vspace{-3mm}
\begin{itemize}
\item[(1)] 
For the VP-SDE examples in Table~\ref{table:summary}, we have seen
that the iteration complexity is always of order $\widetilde{\mathcal{O}}\left(d/\epsilon^2\right)$ where $\widetilde{\mathcal{O}}$  ignores the logarithmic dependence on $\epsilon,d$. 
One natural question is, for VP-SDE, is the complexity always of order $\widetilde{\mathcal{O}}\left(d/\epsilon^2\right)$? 
\item[(2)] 
For our examples in Table~\ref{table:summary}, the best complexity is of order $\widetilde{\mathcal{O}}\left(d/\epsilon^2\right)$. Another natural question is that are there other choices of $f, g$ so that the complexity becomes better than $\widetilde{\mathcal{O}}(d/\epsilon^{2})$? 
\item[(3)] 
If the answer to question (2) is negative, then are these upper bounds tight? 
\end{itemize}
%\vspace{-3mm}
We have answers and results to these questions as follows.

First, we will show that the answer to question (1) is yes
for a very wide class of general VP-SDE models.
In particular, we show in in the next proposition that under mild assumptions on the function $\beta(t)$, 
the class of VP-SDEs (i.e., $f(t) = \frac{1}{2}\beta(t) $ and $g(t) = \sqrt{ \beta(t) }$ for some nondecreasing {function} $\beta(t)$) will always lead to the complexity $\widetilde{\mathcal{O}}(\frac{d}{\epsilon^{2}})$
where $\widetilde{\mathcal{O}}$ ignores the logarithmic factors. % that matches the lower bound in Proposition~\ref{prop:lower:bound:Gaussian}. 

\begin{proposition}\label{prop:VP}
Under the assumptions of Theorem~\ref{thm:discrete:2}, 
assume that $\beta(t)$ is positive and increasing in $t$
and there exist some $c_{1},c_{2},c_{3}>0$ such that
$\beta(t)\leq c_{1}\left(\int_{0}^{t}\beta(s)ds\right)^{c_{3}}+c_{2}<\infty$ for every $t\geq 0$. 
Then, we have $\mathcal{W}_{2}(\mathcal{L}(\mathbf{y}_{K}),p_{0})\leq\mathcal{O}(\epsilon)$ 
after
$K=\mathcal{O}\left(\frac{d(\log(d/\epsilon))^{3 c_3+1}}{\epsilon^{2}}\right)$
iterations, provided that 
$M\leq\frac{\epsilon}{(\log(\sqrt{d}/\epsilon))^{c_3}}$
and $\eta\leq\frac{\epsilon^{2}}{d(\log(1/\epsilon))^{3/c_3}}$.
\end{proposition}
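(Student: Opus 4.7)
The plan is to apply Theorem~\ref{thm:discrete:2} to the VP-SDE specialization $f(t) = \beta(t)/2$, $g(t) = \sqrt{\beta(t)}$, and to show that each of the three groups of terms on the right-hand side of \eqref{main:thm:upper:bound} is $O(\epsilon)$ under the stated choices of $M$, $\eta$, and $K$. I would first simplify the ingredients in closed form using the VP-SDE identities $e^{-2\int_0^t f(s)ds} = e^{-B(t)}$ and $\int_0^t e^{-2\int_s^t f(v)dv}(g(s))^2 ds = 1 - e^{-B(t)}$, where $B(t) := \int_0^t \beta(s)\,ds$. This yields $c(t) = m_0\beta(t)/(m_0 + (1-m_0)e^{-B(t)})$ and, via \eqref{assump:stepsize:VP}, $\mu(T-t) = \tfrac{1}{2}\beta(T-t)\,r(T-t) - \eta\bigl[\tfrac{1}{4}\beta(T-t)^2 + \beta(T-t)^2 L(T-t)^2\bigr]$, with $r(T-t) := (m_0-(1-m_0)e^{-B(T-t)})/(m_0+(1-m_0)e^{-B(T-t)})$ bounded below by a positive constant because the positivity required in Assumption~\ref{assump:stepsize} forces $m_0 > 1/2$. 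I would also check $L(T-t) \leq \bar L$ for some $\bar L = \bar L(L_0)$ by examining the two branches of \eqref{eq:Lt} separately, and record the simplifications $c_2(T) = O(\sqrt{d})$ and $c_1(T) = O(e^{-B(T)/2}\sqrt{d})$ using $\|\mathbf{x}_0\|_{L_2} = O(\sqrt{d})$ from \eqref{eq:L2-x0}.

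Next, I would choose $T$ so that $B(T) \geq c^{*}\log(\sqrt{d}/\epsilon)$ for a constant $c^{*} > 0$ depending on $m_0$; combined with the lower bound $c(t) \geq c_0 \beta(t)$, this renders the first term in \eqref{main:thm:upper:bound} at most $\epsilon$. The growth hypothesis $\beta(t) \leq c_1 B(t)^{c_3} + c_2$ immediately gives the uniform bound $\beta(T) \leq C(\log(\sqrt{d}/\epsilon))^{c_3}$, which is the source of all the polylogarithmic factors. Requiring $\eta\beta(T)$ to be sufficiently small then guarantees $\mu(T-t) - M_1\eta\beta(T-t) \geq (c/2)\beta(T-t)$ uniformly in $t$, so that $1 - x \leq e^{-x}$ gives $\prod_{j=k+1}^K\bigl(1 - \int_{(j-1)\eta}^{j\eta}\mu(T-t)\,dt + M_1\eta\int_{(j-1)\eta}^{j\eta}\beta(T-t)\,dt\bigr) \leq \exp(-(c/2)B(T-k\eta))$ for every $k$.

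The core estimate is then the telescoping bound $\sum_{k=1}^K e^{-(c/2)B(T-k\eta)} I_k \leq C/c$ with $I_k := \int_{(k-1)\eta}^{k\eta}\beta(T-t)\,dt$, obtained by interpreting the sum as a Riemann-type approximation of $\int_0^{B(T)} e^{-ca/2}\,da$ with correction factor $e^{(c/2)\eta\beta(T)} = O(1)$. Substituting the bounds $h_{k,\eta} \lesssim \sqrt{d\,I_k}$ (which follows from $c_1(T)$ being small, $c_2(T) = O(\sqrt{d})$, and $I_k$ small) and $\bigl(\int_{(k-1)\eta}^{k\eta}[f(T-t)+g(T-t)^2 L(T-t)]^2\,dt\bigr)^{1/2} \lesssim \sqrt{\beta(T)\,I_k}$ into \eqref{main:thm:upper:bound} and applying the telescoping identity, the three remaining pieces become $O(M_1\eta\sqrt{d})$ (the $M_1\eta$ discretization piece), $O(M)$ (the score-matching piece), and $O(\sqrt{\eta d\,\beta(T)})$ (the $\sqrt{\eta}h_{k,\eta}$ discretization piece). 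Demanding each of these to be $\leq \epsilon$ produces the advertised constraints on $M$ and $\eta$ after absorbing the $\beta(T) = O((\log(\sqrt{d}/\epsilon))^{c_3})$ factors; the iteration complexity $K = T/\eta$ then follows by using the growth hypothesis to invert $B(T) = \Theta(\log(\sqrt{d}/\epsilon))$ to estimate $T$. The main obstacle will be the careful polylogarithmic bookkeeping needed so that the three error contributions simultaneously match the stated exponents in the final bound, and verifying that the resulting $\eta$ indeed satisfies the stepsize conditions \eqref{assump:stepsize:1}--\eqref{assump:stepsize:2}.
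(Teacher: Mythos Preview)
Your plan is essentially the paper's own route (the paper packages the VP specialization as an intermediate Corollary before proving the proposition, but the content---closed-form $B(t)$ identities, uniform bound $L(T-t)\le\bar L$, $c_2(T)=O(\sqrt d)$, $m_0>1/2$ from Assumption~\ref{assump:stepsize}, and choice $B(T)\asymp\log(\sqrt d/\epsilon)$---is identical). The one substantive difference is your telescoping step: the paper bounds each $I_k:=\int_{(k-1)\eta}^{k\eta}\beta(T-t)\,dt$ crudely by $\eta\beta(T)$ \emph{before} summing, so that the $h_{k,\eta}$-term contributes $\sqrt{\eta d}\,\beta(T)^{3/2}$ and forces $\eta\le \epsilon^2/(d\beta(T)^3)$, whence the $3c_3$ in the exponent. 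Your sharper estimate $\sum_k e^{-(c/2)B(T-k\eta)}I_k=O(1)$ keeps the $k$-dependence of $I_k$ intact and delivers $O(\sqrt{\eta d\,\beta(T)})$ and $O(M)$ directly; this would in fact prove the proposition with the better exponent $c_3+1$ (and $M\le O(\epsilon)$ would already suffice), a fortiori establishing the stated bound. One small correction to your last sentence: the upper bound on $T$ comes simply from $B(T)\ge\beta(0)\,T$ by monotonicity; the growth hypothesis is needed only to bound $\beta(T)$, not to invert $B(T)$.
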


%%%%%%%%%%%%%%%%%%%%%%%%%%%%%%%
% Proposition~\ref{prop:VP} is a general result
% for a wide class of VP-SDEs.
It is easy to check that the assumptions in Proposition~\ref{prop:VP} are satisfied 
for all VP-SDEs examples in Table~\ref{table:summary}. 
Note that Proposition~\ref{prop:VP} is a general result for a wide class of VP-SDEs, and 
the dependence of the iteration complexity on the logarithmic factors of $d$
and $\epsilon$ may be improved for various examples considered in Table~\ref{table:summary}. The details will be discussed and provided in Appendix~\ref{app:VPSDE}.

%%%%%%%%%%%%%%%%%%%%%%
%\subsection{Lower bound}

Next, we will show that the answer to question (2) is negative.
In particular, we will show in the following proposition that
if we use the upper bound \eqref{main:thm:upper:bound} in Theorem~\ref{thm:discrete:2}, 
then in order to achieve $\epsilon$ accuracy, i.e. $\mathcal{W}_{2}(\mathcal{L}(\mathbf{y}_{K}),p_{0})\leq\epsilon$, 
 the complexity 
$K=\widetilde{\Omega}\left(d/\epsilon^{2}\right)$ under mild assumptions, 
where $\widetilde{\Omega}$ ignores the logarithmic dependence on $\epsilon$ and $d$.

\begin{proposition}\label{prop:lower:bound}
Suppose the assumptions in Theorem~\ref{thm:discrete:2} hold
and we further assume that $\min_{t\geq 0}g(t)>0$
and $\min_{t\geq 0}(f(t)+(g(t))^{2}L(t))>0$
and $\max_{0\leq s\leq t}c(s)\leq c_{1}\left(\int_{0}^{t}c(s)ds\right)^{\rho}+c_{2}$
uniformly in $t$ for some $c_{1},c_{2},\rho>0$, where $c(s)$ is defined in \eqref{c:t:defn}.
We also assume that $\mu(t)\geq\frac{1}{4}m(t)$ for any $0\leq t\leq T$ (which holds for any sufficiently small $\eta$), where $\mu(t),m(t)$ are defined in \eqref{mu:definition} and \eqref{eq:mt}. 
If we use the upper bound \eqref{main:thm:upper:bound}, 
then in order to achieve $\epsilon$ accuracy, i.e. $\mathcal{W}_{2}(\mathcal{L}(\mathbf{y}_{K}),p_{0})\leq\epsilon$, 
we must have
$K=\widetilde{\Omega}\left(\frac{d}{\epsilon^{2}}\right)$, 
where $\widetilde{\Omega}$ ignores the logarithmic dependence on $\epsilon$ and $d$.
\end{proposition}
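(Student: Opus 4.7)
The strategy is to decompose the right-hand side of the upper bound \eqref{main:thm:upper:bound} into the initialization contribution $A := e^{-\int_{0}^{K\eta}c(t)dt}\Vert\mathbf{x}_{0}\Vert_{L_{2}}$ and the $k$-indexed sum $S$ accounting for the score-matching and discretization errors. Enforcing $A + S \leq \epsilon$ (and hence $A \leq \epsilon$ and $S \leq \epsilon$) will produce a lower bound on $T = K\eta$ and an upper bound on $\eta$ whose quotient gives the claimed iteration complexity. The two-term structure is crucial: the initialization term fixes $T$ from below (up to polylogs), while the $\sqrt{d}$ piece hidden inside $h_{k,\eta}$ fixes $\eta$ from above at order $\epsilon^{2}/d$.

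\textbf{Lower bound on $T$.} Under Assumption~\ref{assump:p0}, the $L_{0}$-smoothness of $-\log p_{0}$ translates, via the Cram\'er--Rao inequality applied to the location family $\{p_{0}(\cdot-\theta)\}$, into $\mathrm{Cov}(\mathbf{x}_{0}) \succeq L_{0}^{-1} I_{d}$ and hence $\Vert\mathbf{x}_{0}\Vert_{L_{2}}^{2} \geq d/L_{0}$. Thus $A \leq \epsilon$ forces $F(T) := \int_{0}^{T}c(t)dt \geq \Omega(\log(d/\epsilon))$. Combined with the growth hypothesis $\max_{s \leq t}c(s) \leq c_{1}F(t)^{\rho} + c_{2}$ this yields $F(T) \leq T(c_{1}F(T)^{\rho} + c_{2})$, i.e.\ $T \geq F(T)/(c_{1}F(T)^{\rho} + c_{2}) \geq 1/\mathrm{polylog}(d/\epsilon)$ for every fixed $\rho$.

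\textbf{Upper bound on $\eta$.} In the per-step term $B_{k}^{(3)} := \sqrt{\eta}\,h_{k,\eta}\bigl(\int_{(k-1)\eta}^{k\eta}[f(T-t)+(g(T-t))^{2}L(T-t)]^{2}dt\bigr)^{1/2}$ I focus on the $\sqrt{d}$ piece of $h_{k,\eta}$ in \eqref{h:k:eta:main}: by $\min_{t}g(t) =: g_{\min} > 0$ and continuity of $g$ on $[0,T]$, $h_{k,\eta} \gtrsim g_{\min}\sqrt{\eta d}$. Together with $\bigl(\int[f+g^{2}L]^{2}dt\bigr)^{1/2} \gtrsim \sqrt{\eta}\,(f+g^{2}L)_{\min}$ (strictly positive by the second standing hypothesis) and the outer $\sqrt{\eta}$, we obtain $B_{k}^{(3)} \gtrsim \eta^{3/2}\sqrt{d}$ uniformly in $k$. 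The assumption $\mu(t) \geq m(t)/4$ combined with the boundedness of $f,g,L$ on $[0,T]$ gives a two-sided bound $r_{j} := 1 - \int_{(j-1)\eta}^{j\eta}\mu(T-t)dt + M_{1}\eta\int g^{2}dt \in [1-C\eta,\,1-c\eta]$ for model-dependent constants $0 < c < C$. A standard geometric-series argument then gives
\[
\sum_{k=1}^{K}\prod_{j=k+1}^{K}r_{j}\cdot B_{k}^{(3)} \;\gtrsim\; \eta^{3/2}\sqrt{d}\cdot\frac{1-(1-C\eta)^{K}}{C\eta} \;\gtrsim\; \sqrt{\eta d}\,(1-e^{-CT}).
\]
Forcing this sum to be $\leq \epsilon$ produces $\eta \leq \epsilon^{2}/[d(1-e^{-CT})^{2}]$.

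\textbf{Conclusion and main obstacle.} Combining the two bounds, $K = T/\eta \gtrsim Td(1-e^{-CT})^{2}/\epsilon^{2}$; since $T(1-e^{-CT})^{2} \geq \Omega(\min(T,T^{3})) \geq 1/\mathrm{polylog}(d/\epsilon)$ by the $T$-lower bound, we obtain $K \geq d/(\epsilon^{2}\cdot\mathrm{polylog}(d/\epsilon)) = \widetilde{\Omega}(d/\epsilon^{2})$. The principal technical obstacle is the two-sided control $r_{j} \in [1-C\eta,\,1-c\eta]$ that allows the geometric-series lower bound to survive the time variation of $\mu$, $g$, and $L$; a secondary but non-trivial issue is deriving the polylogarithmic lower bound on $T$ from the growth hypothesis uniformly in the parameter $\rho$, since the exponent of the polylog depends on $\rho$ but is absorbed by $\widetilde{\Omega}$.
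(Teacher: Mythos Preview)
Your proof is correct and follows essentially the same route as the paper's: bound $T$ from below using the initialization term, isolate the $\sqrt d$ contribution inside $h_{k,\eta}$, and run a geometric-series lower bound on the sum to force $\eta=\widetilde{\mathcal O}(\epsilon^2/d)$. Two small remarks: your Cram\'er--Rao step giving $\|\mathbf x_0\|_{L_2}\geq\sqrt{d/L_0}$ is a nice sharpening the paper leaves implicit (it simply takes $\|\mathbf x_0\|_{L_2}=\Theta(\sqrt d)$ as the worst case), while your ``model-dependent constant'' $C$ in $r_j\geq 1-C\eta$ is in fact $\max_{[0,T]}\mu$, which the paper can only bound by $\mathcal O\bigl((\log(d/\epsilon))^\rho\bigr)$ via the growth hypothesis on $c$ --- the resulting extra polylog in the $\eta$ bound is harmless under $\widetilde\Omega$ but should be tracked rather than absorbed into a fixed constant.
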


The assumptions in Proposition~\ref{prop:lower:bound} are mild and one can readily check that they are satisfied for all the examples in Table~\ref{table:summary}. 
If we ignore the dependence on the logarithmic factors
of $d$ and $\epsilon$, we can see from Table~\ref{table:summary}
that the VE-SDE example $f(t)\equiv 0$, $g(t)=ae^{bt}$, 
and all the VP-SDE examples achieve the lower bound
in Proposition~\ref{prop:lower:bound}.

In Proposition~\ref{prop:lower:bound}, we showed
that using the upper bound \eqref{main:thm:upper:bound} in Theorem~\ref{thm:discrete:2}, 
we have the lower bound on the complexity 
$K=\widetilde{\Omega}\left(d/\epsilon^{2}\right)$.
Therefore, the answer to question (2) is negative.
This leads to question (3), which is, whether
the iteration complexity $\tilde{\mathcal{O}}(d/\epsilon^{2})$ (see Proposition~\ref{prop:VP} and Table~\ref{table:summary}) obtained
from the upper bound in Theorem~\ref{thm:discrete:2} is tight or not.
This leads us to investigate a lower bound for the number of iterates that is needed to achieve $\epsilon$ accuracy.
% We want to understand whether the examples in Table~\ref{table:summary}
% can achieve the lower bound in some sense.
In the following proposition, we will show that the lower bound for 
the iteration complexity of algorithm \eqref{eq:yk} is at least $\Omega\left(\sqrt{d}/\epsilon\right)$ by constructing
a special example when the initial distribution $p_{0}$ is Gaussian.

\begin{proposition}\label{prop:lower:bound:Gaussian}
Consider the special case when
$\mathbf{x}_{0}$ follows a Gaussian distribution 
$\mathbf{x}_{0}\sim\mathcal{N}(0,\sigma_{0}^{2}I_{d})$.
%Assume that $f,g$ satisfy some mild condition (so that \eqref{assc0} holds).
Then, in order to achieve the 2-Wasserstein $\epsilon$ accuracy, 
the iteration complexity has a lower bound $\Omega\left(\frac{\sqrt{d}}{\epsilon}\right)$,
i.e. if there exists some $T=T(\epsilon)$ and $\bar{\eta}=\bar{\eta}(\epsilon)$ 
such that
$\mathcal{W}_{2}(\mathcal{L}(\mathbf{y}_{K}),p_{0})\leq\epsilon$
for any $K\geq\bar{K}:=T/\bar{\eta}$ (with $\eta=T/K\leq\bar{\eta}$),
then we must have
$\bar{K}=\Omega\left(\frac{\sqrt{d}}{\epsilon}\right)$. 
%where we ignored the logarithmic dependence on $\epsilon$ and $d$.
\end{proposition}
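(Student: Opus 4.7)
The strategy is to exploit that under the Gaussian hypothesis $p_{0}=\mathcal{N}(0,\sigma_{0}^{2}I_{d})$, every distribution in sight is isotropic centered Gaussian, which reduces the problem to a one-dimensional variance-mismatch lower bound. By the explicit solution \eqref{SDE:solution}, $\mathbf{x}_{t}\sim\mathcal{N}(0,v_{t}I_{d})$ with $v_{t}=\sigma_{0}^{2}e^{-2\int_{0}^{t}f(s)ds}+\int_{0}^{t}e^{-2\int_{s}^{t}f(v)dv}(g(s))^{2}ds$, so the true score $\nabla\log p_{t}(\mathbf{x})=-\mathbf{x}/v_{t}$ is exactly linear. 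I would take the score estimator to be exact, $s_{\theta}(\mathbf{x},t)=\nabla\log p_{t}(\mathbf{x})$, so that Assumption~\ref{assump:M} holds with $M=0$; this is the most favourable case for the algorithm, so any lower bound must already hold here. The recursion \eqref{eq:yk} then becomes a linear Gaussian recursion and every iterate stays isotropic, $\mathbf{y}_{k}\sim\mathcal{N}(0,\sigma_{k}^{2}I_{d})$, with scalar variance
\[
\sigma_{k}^{2}=\Bigl(1+A_{k}-B_{k}/v_{T-(k-1)\eta}\Bigr)^{2}\sigma_{k-1}^{2}+B_{k},\qquad \sigma_{0}^{2}=\hat v_{T},
\]
where $A_{k}:=\int_{(k-1)\eta}^{k\eta}f(T-t)\,dt$ and $B_{k}:=\int_{(k-1)\eta}^{k\eta}(g(T-t))^{2}\,dt$.

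Next, I would invoke the closed-form identity $\mathcal{W}_{2}(\mathcal{N}(0,\sigma_{K}^{2}I_{d}),\mathcal{N}(0,\sigma_{0}^{2}I_{d}))=\sqrt{d}\,|\sigma_{K}-\sigma_{0}|$ to translate the target accuracy $\mathcal{W}_{2}(\mathcal{L}(\mathbf{y}_{K}),p_{0})\leq\epsilon$ into $|\sigma_{K}-\sigma_{0}|\leq\epsilon/\sqrt d$. The heart of the argument is then a matching lower bound $|\sigma_{K}-\sigma_{0}|\geq c_{0}\eta$ with $c_{0}$ independent of $d$ and $\epsilon$. To extract it I would Taylor expand $(1+A_{k}-B_{k}/v_{T-(k-1)\eta})^{2}$ in powers of $\eta$ and recognize the resulting recursion as the Euler discretization of the variance ODE $W'_{t}=2[f(T-t)-g(T-t)^{2}/v_{T-t}]\,W_{t}+(g(T-t))^{2}$ with $W_{0}=\hat v_{T}$. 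The terminal value $W_{T}$ equals $\sigma_{0}^{2}$ up to the initialization shift coming from replacing $p_{T}$ by $\hat p_{T}$, and a standard Euler analysis shows $\sigma_{K}^{2}-W_{T}$ is of order exactly $\eta$, with a leading coefficient proportional to an integral of $[f(T-t)-g(T-t)^{2}/v_{T-t}]^{2}\,W_{t}$ over $[0,T]$. The mild condition encoded in \eqref{assc0} is precisely the non-degeneracy that simultaneously keeps this leading coefficient bounded away from zero and keeps the initialization shift of order $o(\eta)$, yielding $|\sigma_{K}-\sigma_{0}|\geq c_{0}\eta$ for all $\eta$ sufficiently small.

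Combining the two bounds gives $c_{0}\eta\leq\epsilon/\sqrt d$, hence $\bar\eta\leq\epsilon/(c_{0}\sqrt d)$, and therefore $\bar K=T/\bar\eta\geq Tc_{0}\sqrt d/\epsilon=\Omega(\sqrt d/\epsilon)$, provided $T$ is bounded below by a positive constant. The latter is automatic under the hypothesis of the proposition: when $T\to 0$ one has $\hat v_{T}\to 0$ while $\sigma_{0}$ is fixed, so the Wasserstein constraint $\mathcal{W}_{2}(\mathcal{L}(\mathbf{y}_{K}),p_{0})\leq\epsilon$ cannot be met no matter how many steps are taken.

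The \textbf{main obstacle} is the Taylor-expansion step: showing that the $O(\eta^{2})$ local truncation errors of the Euler scheme sum to a genuinely $\Omega(\eta)$ global bias, rather than cancelling to higher order, and simultaneously that the $\eta$-independent initialization shift $|W_{T}-\sigma_{0}^{2}|$ does not happen to cancel that bias. Euler integrators can accidentally attain higher-order accuracy on special linear problems, so careful bookkeeping of the quadratic-in-$\eta$ terms, together with the quantitative non-degeneracy built into \eqref{assc0}, is required to rule out such cancellations; once that is done the remaining algebra is routine.
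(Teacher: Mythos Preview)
Your overall strategy coincides with the paper's: both reduce to a scalar variance recursion via the isotropic Gaussian structure, invoke $\mathcal{W}_{2}(\mathcal{N}(0,\sigma_{K}^{2}I_{d}),\mathcal{N}(0,\sigma_{0}^{2}I_{d}))=\sqrt{d}\,|\sigma_{K}-\sigma_{0}|$, and Taylor-expand to exhibit the decomposition
\[
\hat\sigma_{K}^{2}\;=\;\sigma_{0}^{2}\;-\;e^{-2\int_{0}^{T}\gamma(s)\,ds}\sigma_{0}^{2}\;+\;c_{0}\eta\;+\;O(\eta^{2}),
\]
with the middle term being your $W_{T}-\sigma_{0}^{2}$ (the $\eta$-independent initialization shift) and $c_{0}$ the leading discretization coefficient.

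The gap is in how you resolve the ``main obstacle'' you correctly flag. You claim that \eqref{assc0} ``keeps the initialization shift of order $o(\eta)$,'' but \eqref{assc0} says only $c_{0}=\Theta(1)$ as $T\to\infty$; it says nothing about the shift $-e^{-2\int_{0}^{T}\gamma(s)\,ds}\sigma_{0}^{2}$, which for fixed $T$ is a nonzero constant in $\eta$, certainly not $o(\eta)$. Since this shift is negative while $c_{0}$ may be positive, the two can cancel at a particular $\eta$, and your asserted uniform bound $|\sigma_{K}-\sigma_{0}|\geq c_{0}\eta$ does not follow. This is a genuine hole, not just a missing detail.

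The paper closes it by exploiting the hypothesis more carefully: accuracy is assumed for \emph{every} $\eta\leq\bar\eta$, not just at $\bar\eta$. First letting $\eta\to 0$ in the expansion isolates the initialization shift and forces $e^{-2\int_{0}^{T}\gamma(s)\,ds}\sigma_{0}^{2}\leq O(\epsilon/\sqrt d)$; this is also what delivers $T\geq\Omega(1)$ (more directly than your $T\to 0$ heuristic). Then, returning to $\eta=\bar\eta$ and applying the triangle inequality,
\[
|c_{0}|\,\bar\eta\;\leq\;|\hat\sigma_{K}^{2}-\sigma_{0}^{2}|\;+\;e^{-2\int_{0}^{T}\gamma(s)\,ds}\sigma_{0}^{2}\;+\;O(\bar\eta^{2})\;\leq\;O\!\left(\tfrac{\epsilon}{\sqrt d}\right),
\]
gives $\bar\eta\leq O(\epsilon/\sqrt d)$ with no cancellation issue. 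This two-step use of the ``for all $K\geq\bar K$'' clause is the missing idea in your outline.
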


The answer to question (3) is complicated.
On the one hand, the lower bound $\Omega\left(\sqrt{d}/\epsilon\right)$ in
Proposition~\ref{prop:lower:bound:Gaussian} does not match
the upper bound $\tilde{\mathcal{O}}(d/\epsilon^{2})$. 
Note that the complexity $\Omega\left(\sqrt{d}/\epsilon\right)$ in Proposition~\ref{prop:lower:bound:Gaussian}
matches the upper bound for the complexity of an unadjusted Langevin algorithm 
under an additional assumption which is a growth condition on the third-order
derivative of the log-density of the target distribution, see \cite{Li2022}.
However, under our current assumptions, 
it is shown in \cite{DK2017} that the upper bound for the complexity of an unadjusted Langevin algorithm
matches the upper bound $\tilde{\mathcal{O}}(d/\epsilon^{2})$ in Table~\ref{table:summary}
that is deduced from Theorem~\ref{thm:discrete:2}. 
Hence, we speculate that the upper bound we obtained in Theorem~\ref{thm:discrete:2}
is tight under our current assumptions, and it may not be improvable unless
additional assumptions are imposed.
It will be left as a future research direction to explore
whether under additional assumptions on the data distribution $p_{0}$, 
one can improve the upper bound in Theorem~\ref{thm:discrete:2}
and hence improve the complexity to match the lower bound $\Omega\left(\sqrt{d}/\epsilon\right)$ in Proposition~\ref{prop:lower:bound:Gaussian}, and furthermore, whether under the current assumptions, 
there exists an example other than the Gaussian distribution as illustrated in Proposition~\ref{prop:lower:bound:Gaussian}
that can match the upper bound $\tilde{\mathcal{O}}(d/\epsilon^{2})$.

%%%%%%%%%%%%%%%%%%%%%%%%%%%%%

%%%%%%%%%%%%%%%%%%%%%%%%%%%%

\section{Numerical Experiments}\label{sec:numerical}
In this section, we conduct numerical experiments based on various forward SDEs for unconditional image generation on the CIFAR-10 image dataset. Due to limitations in computational resources, 
the purpose of our experiments is not to beat or match the state-of-the-art numerical results such as FID scores. Instead, our goal is to compare the performances of diffusion models with different forward processes and better understand the impacts of such model choices numerically in addition to our theoretical findings. % (see the iteration complexity results in Table~\ref{table:summary}).

%

%%%%%%%%%%%%%%%%%%%%%%%%%%%%%%%%%%%%%%%%
\subsection{SDEs for the Forward Process}\label{section:forming:SDE}

We consider Variance Exploding (VE) SDEs and Variance Preserving (VP) SDEs as the forward processes in our experiments.  %\gao{need to mention which one is new model if we want to say our new model outperforms the existing models}

First, we consider various VE-SDEs which takes the form $d\mathbf{x}_{t} = \sqrt{\frac{d[\sigma^2(t)]}{dt}}d\mathbf{B}_{t} := g(t)d\mathbf{B}_{t},$
% \begin{equation*}
%     d\mathbf{x}_{t} = \sqrt{\frac{d[\sigma^2(t)]}{dt}}d\mathbf{B}_{t} := g(t)d\mathbf{B}_{t},
% \end{equation*}
where $\sigma^2(t)$ is some non-decreasing function representing the scale of noise slowly added to the data over time $t \in [0,1]$. 
The transition kernel of VE-SDE is given by
    $p_{t|0}(\mathbf{x}_{t} | \mathbf{x}_{0}) = \mathcal{N}\left(\mathbf{x}_{t};\mathbf{x}_{0},\left[\sigma^2(t)-\sigma^2(0)\right]I_d\right)$.
We consider several different noise functions $\sigma(t)$ (equivalently $g(t)$) below, and in the experiments we maintain the choice of $\sigma_{\min}:= \sigma(0)$ and $\sigma_{\max}= \sigma(1)$ as in \cite{SongICLR2021}, where $\sigma_{\min} \ll \sigma_{\max}.$

% Let $\sigma_i$'s for $i=1,2,...,N$ be the discretized noise levels, we investigate the following cases of VE-SDE, where we maintain the choice of $\sigma_{\min}$ and $\sigma_{\max}$ in \cite{SongICLR2021} such that $\sigma_{\min}$ is not too small for mathematical and theoretical stability and $\sigma_{\max}$ not too large to alleviate the effect on the data: 
%\vspace{-4mm}
\begin{itemize}
    \item [(1)] $g(t)=ab^t$ for some $a, b >0.$ (see \cite{SongErmon2019, SongICLR2021}). In this case, we have $\sigma(t)=\sigma_{\min}\left(\frac{\sigma_{\max}}{\sigma_{\min}}\right)^t$. 
    %and the forward SDE is given by:
    % \begin{equation*}
    %     d\mathbf{x}_{t} = \sigma_{\min}\left(\frac{\sigma_{\max}}{\sigma_{\min}}\right)^t\sqrt{\log\left(\frac{\sigma_{\max}}{\sigma_{\min}}\right)}d\mathbf{B}_{t}. %,\quad t\in(0,1],
    % \end{equation*}
    % with the transition kernel:
    % \begin{equation}
    %     p_{t|0}(\mathbf{x}_{t}|\mathbf{x}_{0}) = \mathcal{N}\left(\mathbf{x}_{t};\mathbf{x}_{0},\sigma^2_{\min} \left[\left(\frac{\sigma_{\max}}{\sigma_{\min}}\right)^{2t} -1 \right] I_d\right).
    % \end{equation}
    
    \item [(2)] $g(t)=\text{const}$, which yields $(\sigma(t))^2=\sigma^2_{\min}+(\sigma^2_{\max}-\sigma^2_{\min})t$.
    % and the forward SDE is given by
    % \begin{equation*}
    %     d\mathbf{x}_{t} = \sqrt{\sigma_{\max}^2-\sigma_{\min}^2}d\mathbf{B}_{t}.
    % \end{equation*}
    % with transition kernel
    % \begin{equation}
    %     p_{t|0}(\mathbf{x}_{t} | \mathbf{x}_{0}) = \mathcal{N}\left(\mathbf{x}_{t};\mathbf{x}_{0},\left[(\sigma^2_{\max}-\sigma^2_{\min})t\right]I_d\right).
    % \end{equation}
    
    \item [(3)] $g(t)=\sqrt{2at}$ for some constant $a>0$, where $(\sigma(t))^2=\sigma^2_{\min}+(\sigma^2_{\max}-\sigma^2_{\min})t^2$.
    % and the forward SDE becomes:
    % \begin{equation*}
    %     d\mathbf{x}_{t} = \sqrt{2\left(\sigma_{\max}^2-\sigma_{\min}^2\right)t}d\mathbf{B}_{t}.
    % \end{equation*}
    % with transition kernel
    % \begin{equation}
    %     p_{t|0}(\mathbf{x}_{t} | \mathbf{x}_{0}) = \mathcal{N}\left(\mathbf{x}_{t};\mathbf{x}_{0},\left[(\sigma^2_{\max}-\sigma^2_{\min})t^2\right]I_d\right).
    % \end{equation}

    \item [(4)] $g(t) = (b+at)^{\rho-\frac{1}{2}}$ for some $a, b>0$, where
    $\sigma(t)=\left({\sigma_{\min}}^{\frac{1}{\rho}}+\left({\sigma_{\max}}^{\frac{1}{\rho}}-{\sigma_{\min}}^{\frac{1}{\rho}}\right)t\right)^{\rho}$. 
This noise schedule function $\sigma(t)$ is motivated by \cite{Karras2022}, where they consider non-uniform discretization and the discretization/time steps are defined according to a sequence of noise levels based on $\sigma(t)$. 
\end{itemize}
  %  \vspace{-4mm}
Next, we consider various VP-SDEs in the numerical experiments, where 
VP-SDE can be written as (see e.g. \cite{SongICLR2021})
   $ d\mathbf{x}_t = -\frac{1}{2}\beta(t)\mathbf{x}_tdt + \sqrt{\beta(t)}d\mathbf{B}_t$, 
where $\beta(t)$ represents the noise scales over time $t \in [0, 1]$. 
% The transition kernel is given by
% \begin{equation*}
%     p_{t|0}(\mathbf{x}_{t} | \mathbf{x}_{0}) = \mathcal{N}\left(\mathbf{x}_{t};\mathbf{x}_{0}e^{-\frac{1}{2}\int_{0}^{t}\beta(s)ds},\left[1-e^{-\int_{0}^{t}\beta(s)ds}\right]I_d\right).
% \end{equation*}
We consider the following choices of $\beta(t)$ in our experiments with $\beta(0) = \beta_{\min} \ll \beta(1) = \beta_{\max}$ (except the constant $\beta(t)$ case). 

%\vspace{-4mm}

\begin{itemize}
    \item [(1)] $\beta(t)=b+at=\beta_{\min}+(\beta_{\max}-\beta_{\min})t$ for $t\in(0,1]$ (see \cite{Ho2020}). 
    % The transition kernel of the VP-SDE for linear $\beta(t)$ is shown to be \cite{SongICLR2021}:
    % \begin{align}\label{margin:prob:beta}
    %     &p_{t|0}(\mathbf{x}_{t} | \mathbf{x}_{0}) \nonumber\\
    %     &= \mathcal{N}\left(\mathbf{x}_{t};\mathbf{x}_{0}e^{-\frac{1}{4}t^2(\beta_{\max}-\beta_{\min})-\frac{1}{2}t\beta_{\min}},\left[1-e^{-\frac{1}{2}t^2(\beta_{\max}-\beta_{\min})-t\beta_{\min}}\right]I_d\right).
    % \end{align}
    
    \item [(2)] $\beta(t)=\beta_{\text{const}}$, which is a constant. 
    % For this case, the forward SDE has the same transition kernel as Eq.~\eqref{margin:prob:beta}, except we replace $\beta_{\max}$ and $\beta_{\min}$ by $\beta_{\text{const}}$.
    
    \item [(3)] $\beta(t)=(b+at)^{\rho}=\left({\beta_{\min}}^{\frac{1}{\rho}} + \left({\beta_{\max}}^{\frac{1}{\rho}}-{\beta_{\min}}^{\frac{1}{\rho}}\right)t\right)^{\rho}$ for $t\in(0,1]$. 
    % Then the transition kernel of this VP-SDE is given by:
    % \begin{align*}
    %     &p_{t|0}(\mathbf{x}_{t} | \mathbf{x}_{0})\nonumber \\
    %     &= \mathcal{N}\left(\mathbf{x}_{t};\mathbf{x}_{0}\exp\left(\frac{b^{\rho+1}-(b+at)^{\rho+1}}{2a(\rho+1)}\right),I_d-I_d\exp\left(\frac{b^{\rho+1}-(b+at)^{\rho+1}}{a(\rho+1)}\right)\right),
    % \end{align*}
    % where $a = {\beta_{\max}}^{\frac{1}{\rho}}-{\beta_{\min}}^{\frac{1}{\rho}}$ and $b = {\beta_{\min}}^{\frac{1}{\rho}}$.
    
    \item [(4)] $\beta(t)=ab^t=\beta_{\min}\cdot\left(\frac{\beta_{\max}}{\beta_{\min}}\right)^t$ for $t\in(0,1]$.
    
    % Here, we have the transition kernel:
    % \begin{align*}
    %     p_{t|0}(\mathbf{x}_{t} | \mathbf{x}_{0}) 
    %     = \mathcal{N}\left(\mathbf{x}_{t};\mathbf{x}_{0}\exp\left(\frac{\beta_{\min}}{2\log b}-\frac{\beta_{\min}b^t}{2\log b}\right),I_d-I_d\exp\left(\frac{\beta_{\min}}{\log b}-\frac{\beta_{\min}b^t}{\log b}\right)\right),
    % \end{align*}
    % where $b = \frac{\beta_{\max}}{\beta_{\min}}$.
\end{itemize}

%\vspace{-4mm}

%%%%%%%%%%%%%%%%%%%%%%%%%%%%%%%%%%%%
\subsection{Experiment Setup}\label{sec:experiment-setup}
In this section we discuss the setup of the experiment.

\noindent\textbf{Setup}\tab We focus on image generation with the ``DDPM++ cont. (VP)” and “NCSN++ cont. (VE)" architectures from \cite{SongICLR2021}, whose generation processes correspond to the discretizations of the reverse-time VP-SDE and VE-SDE respectively. The models are trained on the popular $32\times32$ image dataset CIFAR-10, and we use the code base and structures in \cite{SongICLR2021}.
However, we only have access to NVIDIA GeForce GTX 1080Ti, which has less available memory than the models in \cite{SongICLR2021} requires; thus we reduce the number of channels in the residual blocks of DDPM++ and NCSN++ from 128 to 32, which effectively downsizes the filters dimension of the convolutional layers of the blocks by four times (see Appendix~H in \cite{SongICLR2021} for details of the neural network architecture). This will likely reduce the neural network's capability to capture more intricate details in the original data. However, since the purpose of our experiments is not to beat the state-of-the-art results, we focus on the performance comparison of different forward SDE models based on (non-deep) neural networks in \cite{SongICLR2021} for score estimations. All models are trained for 3 million iterations (compared to 1.3 million iterations in \cite{SongICLR2021}), since our models converge slower due to limitations of computing resources.

%Additionally, the training and sampling batch sizes are also reduced due to the memory issue. Hence, our models generally require more iterations and converge more slowly compared to the benchmark in \cite{SongICLR2021}. After testing, we find that the performance of our models stagnates after 3 million iterations; thus we present our final results based on the metrics at 3 million iterations. 

% We use Eq.~\eqref{eq:score-matching} as the learning objective, which implies continuous setting in \cite{SongICLR2021}.  

\noindent\textbf{Relevant hyperparameters}\tab  
We choose the following configuration for the forward SDEs in the experiments. For VE-SDEs described in Section~\ref{section:forming:SDE}, we use
$\sigma_{\min}=0.01$ and $\sigma_{\max}=50$ \cite{song2020improved}. For VP-SDEs, we choose $\beta_{\text{const}}=0.005$ for constant $\beta(t)$; $\beta_{\min}=10^{-4}$, $\beta_{\max}=0.03$ for $\beta(t)=ab^t$, and $\beta_{\min}=10^{-4}$, $\beta_{\max}=0.02$ (\cite{Ho2020}) for the other VP-SDEs to maintain the progression of $\alpha_i:=\prod_{j=1}^{i}(1-\beta_i)$, where $\{\beta_i\}_{i=1}^{N}$ is the discretization of $\beta(t)$. Note that $\alpha_i$ slowly progresses from $1-\beta_{\min}$ to 0 for all our VP-SDEs, and one can equivalently use $\alpha_i$'s to demonstrate the noise schedules instead of $\beta_i$ (similar to \cite{Nichol2021}).
Figure~\ref{schedules} shows the difference of $\alpha_i$ for DDPM models (corresponding to discretization of VP-SDEs) and $\sigma_i$ (discretized noise level $\sigma(t)$) for NCSN (a.k.a. SMLD) models (corresponding to discretization of VE-SDEs). 

% To justify a fair comparison among VP-SDEs, a natural choice is to fix $\beta_{\max}$. However, we observe that for a common choice of $\beta_{\max}$, the transition kernel of some VP-SDEs progresses either too slowly or too quickly towards the desired terminal distribution $\mathcal{N}(\mathbf{x}_t; 0, I_d)$ in discrete implementation. In a discrete setting, as mentioned in \cite{Ho2020} and \cite{SongICLR2021}, we can formulate the marginal of $x_i$ at time step $i$  for $i=1,2,...,N$ using the cumulative product of $\beta_i$, $\alpha_i:=\prod_{j=1}^{i}(1-\beta_i)$, where $\{x_i\}_{i=1}^{N}$ and $\{\beta_i\}_{i=1}^{N}$ are the discretization of $x_t$ and $\beta(t)$ by time steps $t$. Specifically, it can be shown that the transition kernel at each time step $i$ is $p(\mathbf{x}_{i} | \mathbf{x}_{0})=\mathcal{N}(\mathbf{x}_i;\sqrt{\alpha_i}\mathbf{x}_0, (1-\alpha_i)I_d)$ (see e.g. \cite{SongICLR2021}), where $\alpha_i$'s slowly progress from $1-\beta_0$ to approaching 0 for all of our VP-SDEs. Hence we can equivalently use $\alpha_i$'s to demonstrate the noise schedules instead of $\beta_i$ (similar to \cite{Nichol2021}). Therefore, for the VP-SDEs, we maintain the relative progression of $\alpha_i$ by tuning $\beta_{\max}$ for some VP-SDEs accordingly, such that $\alpha_N$ is sufficiently close to 0. For VE-SDEs, we fix $\sigma_{\max}$ as it was chosen in \cite{song2020improved}.

\begin{figure}
    \begin{subfigure}{.5\textwidth}
        \centering
        \includegraphics[width=1\linewidth]{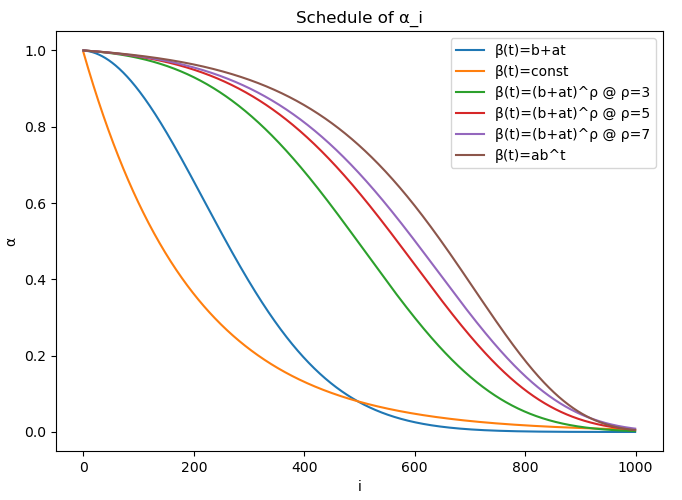}
        \caption{DDPM}
    \end{subfigure}
    \begin{subfigure}{.5\textwidth}
        \centering
        \includegraphics[width=1\linewidth]{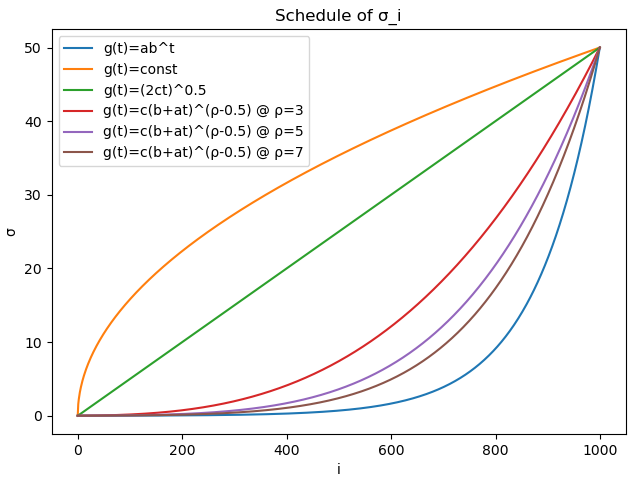}
        \caption{NCSN}
    \end{subfigure}
    \caption{The schedules of $(\alpha_i)$ for DDPM models and $(\sigma_i)$ for NCSN (a.k.a. SMLD) models with different forward SDEs.}
    \label{schedules}
\end{figure}

\noindent\textbf{Sampling}\tab %One can use any numerical SDE solver to discretize the reverse-time SDE \eqref{eq:u} for generation of new samples. In our experiments, 
We generate samples using the Euler-Maruyama solver for discretizing the reverse-time SDEs, referred to as the predictor in \cite{SongICLR2021}.  We set the number of discretized time steps $N=1000$, which follows \cite{Ho2020} and \cite{SongICLR2021}. %, \hoang{and effectively results in the number of function evaluations (NFE) being 1000}.

\noindent\textbf{Performance Metrics}\tab The quality of the generated image samples is evaluated with Fr\'{e}chet Inception Distance (FID, lower is better), which was first introduced by \cite{heusel2017gans} for measuring the $2$-Wasserstein distance between the distribution of generated images and the distribution of real images. We also report the Inception Score (IS) (see \cite{salimans2016improved}) of the generated images as a secondary measure. However, IS (higher is better) only evaluates how realistic the generated images are without a comparison to real images. Each FID and IS measure is evaluated based on $20,000$ samples.

\subsection{Empirical Results}

Table~\ref{table:scores} and Figures~\ref{figure:FID}--\ref{figure:IS} show the performances of various diffusion models corresponding to different forward SDEs that we used. 
We have the following two important observations.  

First, the experimental results are in good agreement with our theoretical prediction on the iteration complexity in Table~\ref{table:summary}. With the same number of discretized time steps, models (forward SDEs) with lower order of iteration complexity generally obtain a better FID score and IS (lower FID and higher IS) over training iterations. In addition, as predicted by the theory in Table~\ref{table:summary}, VE-SDE models generally perform worse than VP-SDEs. Among the VE-SDE models, the choice of $f\equiv 0$ and $g(t)=ab^{t}$ 
leads to the best performance in terms of FID and IS scores. We also remark that VE-SDE models can perform significantly better with a corrector (see \cite{SongICLR2021} for Predictor-Corrector sampling), and get close to the performance of VP-SDE models. However, our experimental results are based on the stochastic sampler without any corrector in order to fit the setup of our theory.

Second, our experimental results show that our proposed VP-SDE with a polynomial variance schedule $\beta(t)=(b+at)^{\rho}$ for some $\rho$ or an exponential variance schedule $\beta(t)=ab^{t}$ can outperform the other existing models, at least with simpler neural network architectures. The optimal 
$\rho$ is around $5$ according to Table~\ref{table:scores}. This is again consistent with our discussion in Section~\ref{sec:examples}.

We also choose the best performing models from Table~\ref{table:scores} and test the deeper neural network architecture in \cite{SongICLR2021} (which doubles the number of residual blocks per resolution, except for reduced batch size and reduced number of filters due to our memory limitation, as mentioned in Section~\ref{sec:experiment-setup}). The results are shown in Figure~\ref{deep:FID:IS} and Table~\ref{table:deep:scores}. 
We can see that the performance of different models remains consistent in a more complex architecture setting.

\begin{table}[ht]
    \begin{center}
        \begin{tabular}{ | l | c | c || c |}
            \hline
            \textbf{Model} & \textbf{FID$\downarrow$} & \textbf{IS$\uparrow$} & References \\ [0.5ex]
            \hline\hline
            DDPM (VP - $\beta(t) = \text{const}$) & 17.46 & 8.19 & \cite{de2021diffusion} \\\hline
            DDPM (VP - $\beta(t)=b+at$) & 11.26 & 8.21 & \cite{Ho2020} \\\hline
            DDPM (VP - $\beta(t)=(b+at)^{\rho},\rho=2$) & 9.77 & 8.33 &  \\\cline{1-3}
            DDPM (VP - $\beta(t)=(b+at)^{\rho},\rho=3$) & 9.67 & 8.32 &  \\\cline{1-3}
            DDPM (VP - $\beta(t)=(b+at)^{\rho},\rho=5$) & 9.64 & 8.41 & our paper \\\cline{1-3}
            DDPM (VP - $\beta(t)=(b+at)^{\rho},\rho=7$) & 10.22 & 8.41 &  \\\cline{1-3}
            DDPM (VP - $\beta(t)=(b+at)^{\rho},\rho=10$) & 10.27 & 8.51 &  \\\hline
            DDPM (VP - $\beta(t)=ab^t$) & 9.98 & 8.39 & our paper \\\hline\hline
            NCSN (VE - $g(t)=ab^t$) & 22.11 & 8.18 & \cite{SongICLR2021} \\\hline
            NCSN (VE - $g(t)=\text{const}$) & 461.42 & 1.18 & \cite{de2021diffusion} \\\hline
            NCSN (VE - $g(t)=\sqrt{2at}$) & 457.04 & 1.20 &  our paper \\\hline
            NCSN (VE - $g(t)=(b+at)^{\rho-\frac{1}{2}},\rho=2$) & 369.51 & 1.34 &  \\\cline{1-3}
            NCSN (VE - $g(t)=(b+at)^{\rho-\frac{1}{2}},\rho=3$) & 233.20 & 1.95 &   \\\cline{1-3}
            NCSN (VE - $g(t)=(b+at)^{\rho-\frac{1}{2}},\rho=5$) & 137.55 & 4.01 &  our paper \\\cline{1-3}
            NCSN (VE - $g(t)=(b+at)^{\rho-\frac{1}{2}},\rho=7$) & 159.66 & 3.11 &  \\\cline{1-3}
            NCSN (VE - $g(t)=(b+at)^{\rho-\frac{1}{2}},\rho=10$) & 99.89 & 4.91 &  \\\hline\hline
        \end{tabular}
        \caption{Performances of different SDE models on CIFAR-10 at 3,000,000 iterations}\label{table:scores}
    \end{center}
\end{table}
\begin{figure}[ht]
    \begin{subfigure}{.5\textwidth}
        \centering
        \includegraphics[width=1\linewidth]{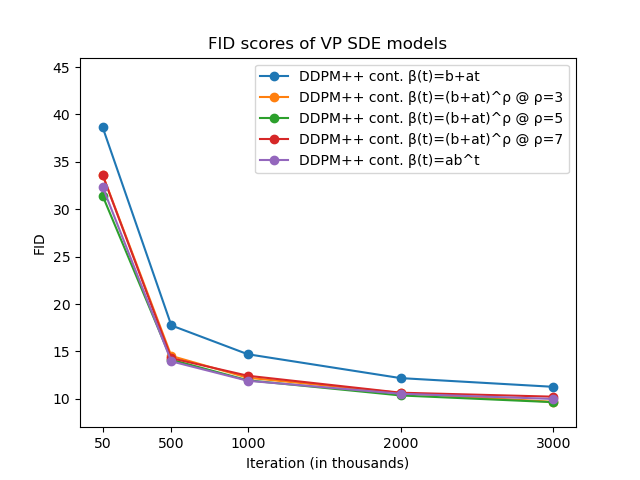}
        \caption{DDPM}
    \end{subfigure}
    \begin{subfigure}{.5\textwidth}
        \centering
        \includegraphics[width=1\linewidth]{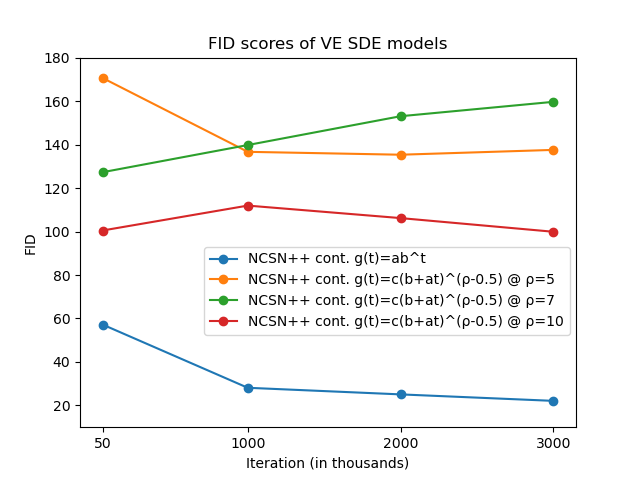}
        \caption{NCSN (aka SMLD)}
    \end{subfigure}
    \caption{The FID score progressions of different SDE models on CIFAR-10}
    \label{figure:FID}
\end{figure}
\begin{figure}[ht]
    \begin{subfigure}{.5\textwidth}
        \centering
        \includegraphics[width=1\linewidth]{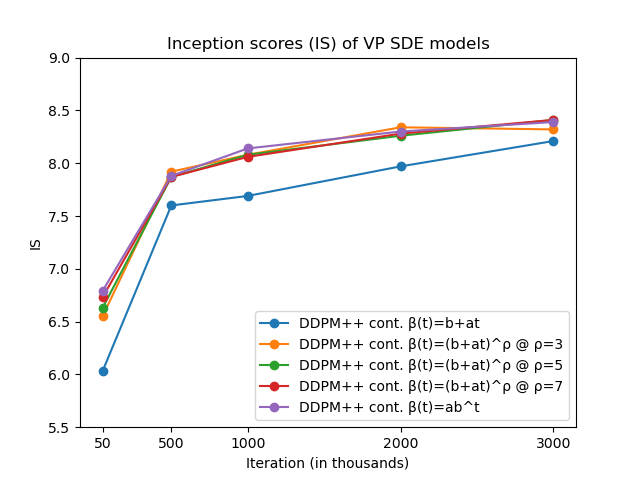}
        \caption{DDPM}
    \end{subfigure}
    \begin{subfigure}{.5\textwidth}
        \centering
        \includegraphics[width=1\linewidth]{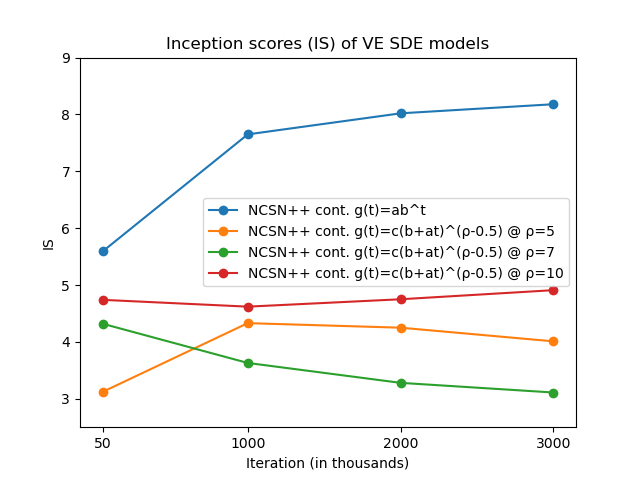}
        \caption{NCSN (aka SMLD)}
    \end{subfigure}
    \caption{The inception score (IS) progressions of different SDE models on CIFAR-10}
    \label{figure:IS}
\end{figure}

\begin{table}[ht]
    \begin{center}
        \begin{tabular}{| l | c | c |}
            \hline
            \textbf{Model} & \textbf{FID$\downarrow$} & \textbf{IS$\uparrow$} \\ [0.5ex]
            \hline\hline
            DDPM deep (VP - $\beta(t)=b+at$) & 9.22 & 8.25 \\\hline
            DDPM deep (VP - $\beta(t)=(b+at)^{\rho},\rho=5$) & 8.20 & 8.55 \\\hline
            DDPM deep (VP - $\beta(t)=ab^t$) & 8.14 & 8.44 \\\hline
            NCSN deep (VE - $g(t)=ab^t$) & 20.00 & 8.41 \\\hline
        %    NCSN deep (VE - $g(t)=c(b+at)^{\rho-\frac{1}{2}},\rho=7$) & 130.46 & 4.08 \\\hline
        \end{tabular}
        \caption{Performances of deep versions of the best performing models from Table~\ref{table:scores} on CIFAR-10 at 3,000,000 iterations. SMLD is also known as NCSN.}
        \label{table:deep:scores}
    \end{center}
\end{table}

\begin{figure}[ht]
    \begin{subfigure}{.5\textwidth}
        \centering
        \includegraphics[width=1\linewidth]{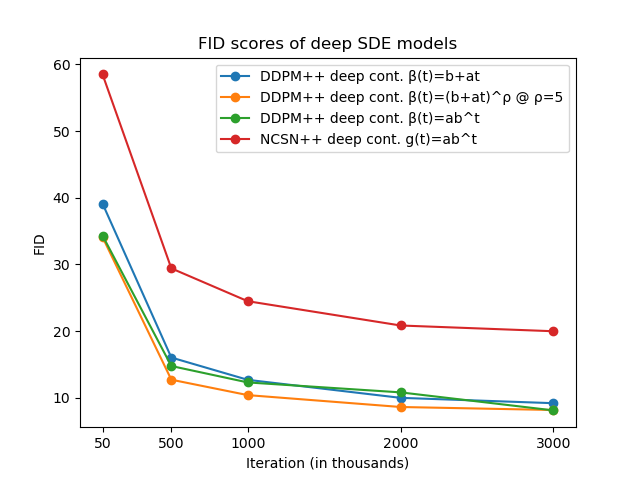}
        \caption{FID}
    \end{subfigure}
    \begin{subfigure}{.5\textwidth}
        \centering
        \includegraphics[width=1\linewidth]{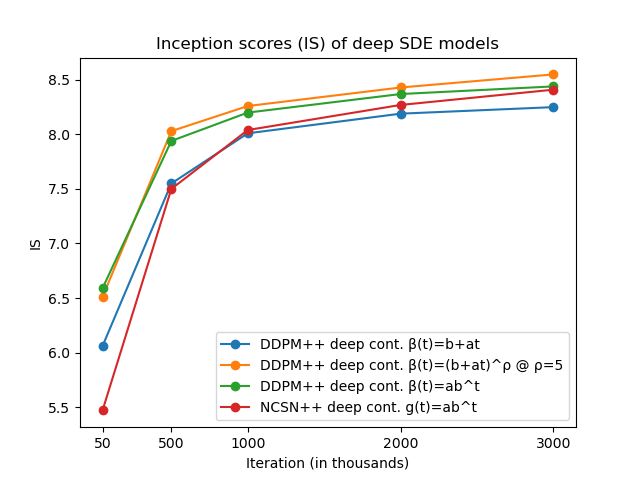}
        \caption{IS}
    \end{subfigure}
    \caption{The FID and IS score progressions of the deep version of the best performing SDE models on CIFAR-10}
    \label{deep:FID:IS}
\end{figure}

%%%%%%%%%%%%%%%%%%%%%%%%%%%%%%%%%%%%%%%%%%

%\clearpage

\section{Analysis: Proofs of the Main Results}

\subsection{Proof of Theorem~\ref{thm:discrete:2}}

To prove Theorem~\ref{thm:discrete:2}, we study the three sources of errors discussed in Section~\ref{sec:prelim}: (1) the initialization of the algorithm at $\hat p_T$ instead of $p_T$, (2) the estimation error of the score function, and (3) the discretization error of the continuous-time process \eqref{eq:u}.

First, we study the error introduced due to the initialization at $\hat p_T$ instead of $p_T$. 
Recall the reverse SDE $\mathbf{z}_{t}$ given in \eqref{eq:zt}.
% \begin{equation}
% d\mathbf{z}_{t}=\left[f(T-t)\mathbf{z}_{t}+(g(T-t))^{2}\nabla\log p_{T-t}(\mathbf{z}_{t})\right]dt
% +g(T-t)d\bar{\mathbf{B}}_{t}, \quad \mathbf{z}_{0}\sim\hat{p}_{T}.
% \end{equation}
 As discussed in Section~\ref{sec:prelim},  the distribution
of $\mathbf{z}_{T}$ differs from $p_{0}$, because $\mathbf{z}_{0}\sim\hat{p}_{T} \ne p_T$. 
The following result provides a bound on $\mathcal{W}_{2}(\mathcal{L}(\mathbf{z}_{T}),p_{0})$. 

%\vspace{-3mm}

\begin{proposition}\label{thm:1}
Assume that $p_{0}$ is $m_{0}$-strongly-log-concave.  
%i.e. $-\log p_{0}$ is $m_{0}$-strongly convex.
Then, we have
\begin{align}\label{eq:contraction1}
\mathcal{W}_{2}(\mathcal{L}(\mathbf{z}_{T}),p_{0})
\leq e^{-\int_{0}^{T}c(t)dt}\Vert\mathbf{x}_{0}\Vert_{L_{2}},
%\leq e^{-\int_{0}^{T}c(t)dt}\left(\sqrt{2d/m_{0}}+\Vert\mathbf{x}_{\ast}\Vert\right),
\end{align}
where $c(t)$ is given in \eqref{c:t:defn}. 
%\begin{equation}
%c(t):=\frac{m_{0}(g(t))^{2}}{e^{-2\int_{0}^{t}f(s)ds}+m_{0}\int_{0}^{t}e^{-2\int_{s}^{t}f(v)dv}(g(s))^{2}ds}.
%\end{equation}
\end{proposition}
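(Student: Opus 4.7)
The approach I would take is a synchronous coupling of the reverse-time SDE $(\tilde{\mathbf{x}}_t)$ in \eqref{eq:yt} and the process $(\mathbf{z}_t)$ in \eqref{eq:zt}. Both solve the same SDE, but from different initial distributions ($p_T$ versus $\hat{p}_T$). Since $\tilde{\mathbf{x}}_T\sim p_0$, we have $\mathcal{W}_2(\mathcal{L}(\mathbf{z}_T),p_0)=\mathcal{W}_2(\mathcal{L}(\mathbf{z}_T),\mathcal{L}(\tilde{\mathbf{x}}_T))$, so it suffices to bound the $L_2$-distance between $\mathbf{z}_T$ and $\tilde{\mathbf{x}}_T$ under a suitable coupling. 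I would take $(\tilde{\mathbf{x}}_0,\mathbf{z}_0)$ to realize the optimal $\mathcal{W}_2$-coupling of $(p_T,\hat{p}_T)$ and then drive both SDEs by the same Brownian motion $(\bar{\mathbf{B}}_t)$. The diffusion coefficients then cancel, so $\tilde{\mathbf{x}}_t-\mathbf{z}_t$ satisfies a pathwise ODE.

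The crux of the proof, and the main obstacle, is establishing that the marginal $p_t$ of the forward process is $\mu_t$-strongly log-concave with $(g(t))^2\mu_t=c(t)$. From the explicit representation \eqref{SDE:solution}, writing $a_t=e^{-\int_0^t f(s)ds}$ and $\sigma_t^2=\int_0^t e^{-2\int_s^t f(v)dv}(g(s))^2 ds$, we have $\mathbf{x}_t = a_t \mathbf{x}_0 + \sigma_t Z$ with $Z\sim\mathcal{N}(0,I_d)$ independent of $\mathbf{x}_0$. Thus $p_t$ is the convolution of the density of $a_t\mathbf{x}_0$, which is $m_0/a_t^2$-strongly log-concave by Assumption~\ref{assump:p0}, with the centred Gaussian $\mathcal{N}(0,\sigma_t^2 I_d)$, which is $1/\sigma_t^2$-strongly log-concave. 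The classical convolution rule for strongly log-concave densities (via the Brascamp--Lieb inequality, or Pr\'ekopa's theorem) then yields the strong-log-concavity constant
\begin{equation*}
\mu_t=\frac{(m_0/a_t^2)(1/\sigma_t^2)}{m_0/a_t^2+1/\sigma_t^2}=\frac{m_0}{a_t^2+m_0\sigma_t^2},
\end{equation*}
and a direct comparison with \eqref{c:t:defn} gives $(g(t))^2\mu_t=c(t)$.

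With this in hand, differentiating $\|\tilde{\mathbf{x}}_t-\mathbf{z}_t\|^2$ along the coupled paths and using strong monotonicity of $-\nabla\log p_{T-t}$ gives
\begin{equation*}
\frac{d}{dt}\mathbb{E}\|\tilde{\mathbf{x}}_t-\mathbf{z}_t\|^2 \leq 2\bigl[f(T-t)-(g(T-t))^2\mu_{T-t}\bigr]\mathbb{E}\|\tilde{\mathbf{x}}_t-\mathbf{z}_t\|^2 = 2[f(T-t)-c(T-t)]\,\mathbb{E}\|\tilde{\mathbf{x}}_t-\mathbf{z}_t\|^2.
\end{equation*}
Gr\"onwall's inequality and the change of variables $s=T-t$ then yield
\begin{equation*}
\|\tilde{\mathbf{x}}_T-\mathbf{z}_T\|_{L_2}\leq \exp\!\Bigl(\int_0^T f(s)\,ds-\int_0^T c(s)\,ds\Bigr)\,\mathcal{W}_2(p_T,\hat{p}_T).
\end{equation*}
Combining with the initial bound \eqref{eq:phatp}, $\mathcal{W}_2(p_T,\hat{p}_T)\leq e^{-\int_0^T f(s)ds}\|\mathbf{x}_0\|_{L_2}$, the $f$-contributions cancel exactly and the stated bound $e^{-\int_0^T c(t)\,dt}\|\mathbf{x}_0\|_{L_2}$ follows. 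The hard step is the strong-log-concavity identification; the remaining coupling/Gr\"onwall argument is routine.
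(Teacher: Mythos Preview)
Your proposal is correct and follows essentially the same approach as the paper: synchronous coupling of $(\tilde{\mathbf{x}}_t)$ and $(\mathbf{z}_t)$, identification of the strong-log-concavity constant of $p_t$ via the convolution rule for strongly log-concave densities, a Gr\"onwall/integrating-factor contraction estimate, and the initial bound \eqref{eq:phatp}. The only cosmetic difference is that the paper introduces the auxiliary rate $m(t)=2c(t)-2f(t)$ and applies It\^o's formula to $e^{\int_0^t m(T-s)\,ds}\|\tilde{\mathbf{x}}_t-\mathbf{z}_t\|^2$, whereas you work directly with $c(t)$ and Gr\"onwall; the two are equivalent.
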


The main challenge in analyzing the SDE $\mathbf{z}_t$ lies in studying the term $\nabla\log p_{T-t}(\mathbf{z}_{t})$. In general, this term is neither linear in $\mathbf{z}_t$
nor admits a closed-form expression. However, 
when $p_{0}$ is strongly log-concave, we are able to show
that $\log p_{T-t}(\mathbf{x})$ is also strongly concave. This fact, together with It\^{o}'s formula for SDEs,  allows us to establish
Proposition~\ref{thm:1}.  The proof of Proposition~\ref{thm:1} is given in Section~\ref{sec:thm1}.

%%%%%%%%%%%%%%%%%%%%%%

Now we consider the algorithm \eqref{eq:yk} with iterates $(\mathbf{y}_{k})$, and bound the errors due to score estimations and discretizations together. 
For any $k=0,1,2,\ldots,K$, $\mathbf{y}_{k}$
has the same distribution as $\hat{\mathbf{y}}_{k\eta}$, 
where $\hat{\mathbf{y}}_{t}$ is a continuous-time process
with the dynamics:
\begin{equation}\label{continuous:hat:y}
d\hat{\mathbf{y}}_{t}=\left[f(T-t)\hat{\mathbf{y}}_{\lfloor t/\eta\rfloor\eta}+(g(T-t))^{2}s_{\theta}\left(\hat{\mathbf{y}}_{\lfloor t/\eta\rfloor\eta},T-\lfloor t/\eta\rfloor\eta\right)\right]dt
+g(T-t)d\bar{\mathbf{B}}_{t},
\end{equation}
with $\hat{\mathbf{y}}_{0}\sim\hat{p}_{T}$.
We have the following result 
that provides an upper bound 
for $\Vert\mathbf{z}_{k\eta}-\hat{\mathbf{y}}_{k\eta} \Vert_{L_{2}}$ in terms of $\Vert\mathbf{z}_{(k-1)\eta}-\hat{\mathbf{y}}_{(k-1)\eta} \Vert_{L_{2}}$. This result plays a key role in the proof of Theorem~\ref{thm:discrete:2}.
%\vspace{-3mm}

\begin{proposition}\label{prop:iterates}
Assume that $p_{0}$ is $m_{0}$-strongly-log-concave, 
i.e. $-\log p_{0}$ is $m_{0}$-strongly convex
and $\nabla\log p_{0}$ is $L_{0}$-Lipschitz.
For any $k=1,2,\ldots,K$,
% \begin{align}\label{eq:recursion}
% &\left\Vert\mathbf{z}_{k\eta}-\hat{\mathbf{y}}_{k\eta}\right\Vert_{L_{2}}^{2}
% \nonumber
% \\
% &\leq\Bigg\{\left(1-\int_{(k-1)\eta}^{k\eta}\mu(T-t)dt+M_{1}\eta\int_{(k-1)\eta}^{k\eta}(g(T-t))^{2}dt\right)
% \left\Vert\mathbf{z}_{(k-1)\eta}-\hat{\mathbf{y}}_{(k-1)\eta}\right\Vert_{L_{2}}
% \nonumber
% \\
% &\qquad
% +M_{1}\eta\left(1+\sqrt{2d/m_{0}}+\Vert\mathbf{x}_{\ast}\Vert
% +c_{2}(T)\right)\int_{(k-1)\eta}^{k\eta}(g(T-t))^{2}dt
% \nonumber
% \\
% &\qquad\qquad\qquad\qquad\qquad\qquad\qquad\qquad
% +M\int_{(k-1)\eta}^{k\eta}(g(T-t))^{2}dt
% \nonumber
% \\
% &\qquad\qquad
% +\sqrt{\eta}h_{k,\eta}\left(\int_{(k-1)\eta}^{k\eta}[f(T-t)+(g(T-t))^{2}L(T-t)]^{2}dt\right)^{1/2}\Bigg\}^{2},
% \end{align}
\begin{align}
&\left\Vert\mathbf{z}_{k\eta}-\hat{\mathbf{y}}_{k\eta}\right\Vert_{L_{2}}
\leq\gamma_{k,\eta}\Vert\mathbf{z}_{(k-1)\eta}-\hat{\mathbf{y}}_{(k-1)\eta}\Vert_{L_{2}}
\nonumber
\\
& \qquad \qquad  \qquad 
+M_{1}\eta\left(1+\Vert\mathbf{x}_{0}\Vert_{L_{2}}
+c_{2}(T)\right)\int_{(k-1)\eta}^{k\eta}(g(T-t))^{2}dt
% \nonumber
% \\
% &\qquad\qquad\qquad\qquad\qquad\qquad\qquad
+M\int_{(k-1)\eta}^{k\eta}(g(T-t))^{2}dt
\nonumber
\\
&\qquad\qquad  \qquad \quad 
+\sqrt{\eta}h_{k,\eta}\left(\int_{(k-1)\eta}^{k\eta}[f(T-t)+(g(T-t))^{2}L(T-t)]^{2}dt\right)^{1/2},\label{L:2:iterates}
\end{align} 
where $\gamma_{k,\eta}$ is defined in \eqref{gamma:k:defn}, $c_{2}(T)$ is defined in \eqref{c:2:defn}
and $h_{k,\eta}$ is given in \eqref{h:k:eta:main}.
\end{proposition}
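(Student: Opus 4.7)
The plan is to couple $\mathbf{z}_t$ and $\hat{\mathbf{y}}_t$ through the same driving Brownian motion on $[(k-1)\eta,k\eta]$, so that the diffusion terms cancel and the error $e_t:=\mathbf{z}_t-\hat{\mathbf{y}}_t$ satisfies a pathwise ODE. Writing the true reverse drift as $V(t,x):=f(T-t)x+(g(T-t))^2\nabla\log p_{T-t}(x)$ and the frozen/approximated drift as $\tilde V_k(t,x):=f(T-t)x+(g(T-t))^2 s_\theta(x,T-(k-1)\eta)$, I would decompose $\dot e_t=(\mathrm{A})+(\mathrm{B})+(\mathrm{C})$ with $(\mathrm{A})=V(t,\mathbf{z}_t)-V(t,\hat{\mathbf{y}}_t)$ (dissipation), $(\mathrm{B})=V(t,\hat{\mathbf{y}}_t)-V(t,\hat{\mathbf{y}}_{(k-1)\eta})$ (drift freezing), and $(\mathrm{C})=V(t,\hat{\mathbf{y}}_{(k-1)\eta})-\tilde V_k(t,\hat{\mathbf{y}}_{(k-1)\eta})$ (time-Lipschitz plus score-matching error).

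Next I would apply the chain rule to $\|e_t\|^2$. For $(\mathrm{A})$, strong log-concavity of $p_{T-t}$ with the inherited constant $\tilde m(T-t)=1/\left(\tfrac{1}{m_0}e^{-2\int_0^{T-t}f(s)ds}+\int_0^{T-t}e^{-2\int_s^{T-t}f(v)dv}(g(s))^2\,ds\right)$ combined with the linear drift term $+f(T-t)$ yields $2\langle e_t,(\mathrm{A})\rangle\le -m(T-t)\|e_t\|^2$ via \eqref{eq:mt}. For $(\mathrm{B})$, I would use the spatial Lipschitz bound $\|V(t,y_1)-V(t,y_2)\|\le[f(T-t)+(g(T-t))^2 L(T-t)]\|y_1-y_2\|$ from Lemma~\ref{lem:smooth} and apply Young's inequality separately to the $f$- and $g^2L$-components with weights $2\eta(f(T-t))^2$ and $2\eta(g(T-t))^4(L(T-t))^2$, producing $2\langle e_t,(\mathrm{B})\rangle\le 2\eta[(f(T-t))^2+(g(T-t))^4(L(T-t))^2]\|e_t\|^2+\eta^{-1}\|\hat{\mathbf{y}}_t-\hat{\mathbf{y}}_{(k-1)\eta}\|^2$. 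Adding $(\mathrm{A})$ and $(\mathrm{B})$ reproduces exactly the coefficient $-2\mu(T-t)\|e_t\|^2$ from \eqref{mu:definition}. For $(\mathrm{C})$, Cauchy--Schwarz gives $2\langle e_t,(\mathrm{C})\rangle\le 2\|e_t\|\cdot\|(\mathrm{C})\|$, with $(\mathrm{C})$ further split into a time-Lipschitz piece bounded by $(g(T-t))^2 M_1\eta(1+\|\hat{\mathbf{y}}_{(k-1)\eta}\|)$ (Assumption~\ref{assump:M:1}) and a score-matching piece whose $L_2$ norm is at most $(g(T-t))^2 M$ (Assumption~\ref{assump:M}).

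The third step is to take expectations and integrate the resulting differential inequality for $\|e_t\|_{L_2}^2$ over $[(k-1)\eta,k\eta]$ via Gronwall. Using $\sqrt{a^2+b^2}\le a+b$ and the expansion $e^{-\int\mu\,dt}\le 1-\int\mu\,dt+O\bigl((\int\mu\,dt)^2\bigr)$, valid under Assumption~\ref{assump:stepsize}, extracts a recursion for $\|e_{k\eta}\|_{L_2}$ with leading coefficient $1-\int\mu\,dt$. I would then identify the residuals with the quantities appearing in \eqref{L:2:iterates} by two estimates. First, the triangle inequality $\|\hat{\mathbf{y}}_t-\hat{\mathbf{y}}_{(k-1)\eta}\|_{L_2}\le\|\mathbf{z}_t-\mathbf{z}_{(k-1)\eta}\|_{L_2}+\|e_t\|_{L_2}+\|e_{(k-1)\eta}\|_{L_2}$ combined with $\|\mathbf{z}_t-\mathbf{z}_{(k-1)\eta}\|_{L_2}\le h_{k,\eta}$ from Lemma~\ref{lem:second:term}, followed by a Cauchy--Schwarz in time, produces the $\sqrt{\eta}\,h_{k,\eta}\bigl(\int[f+g^2 L]^2\,dt\bigr)^{1/2}$ term. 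Second, I would split $\|\hat{\mathbf{y}}_{(k-1)\eta}\|_{L_2}\le\|\mathbf{x}_0\|_{L_2}+c_2(T)+\|e_{(k-1)\eta}\|_{L_2}$ by bounding $\|\mathbf{z}_{(k-1)\eta}\|_{L_2}$ via Proposition~\ref{thm:1}, the elementary estimate $c_1(T)\le\|\mathbf{x}_0\|_{L_2}$, and the definition of $c_2(T)$.

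The main obstacle will be the bookkeeping in this last step: the $\|e_{(k-1)\eta}\|_{L_2}$ summand generated by splitting $\|\hat{\mathbf{y}}_{(k-1)\eta}\|_{L_2}$ in the time-Lipschitz bound for $(\mathrm{C})$ must be absorbed into the coefficient of $\|e_{(k-1)\eta}\|_{L_2}$, producing precisely the extra $+M_1\eta\int(g(T-t))^2\,dt$ correction to the contraction factor of \eqref{L:2:iterates}. Assumption~\ref{assump:stepsize:1}, whose denominator contains the additional term $M_1(g(t))^2$, is designed exactly so that the resulting composite coefficient $1-\int\mu\,dt+M_1\eta\int(g(T-t))^2\,dt$ remains in $(0,1)$, closing the estimate.
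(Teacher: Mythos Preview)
Your plan departs from the paper's argument in two structural ways, and one of them creates a genuine gap. The paper does \emph{not} run a differential Gronwall on $\|e_t\|^2$; it writes $\mathbf{z}_{k\eta}-\hat{\mathbf{y}}_{k\eta}$ in integrated form and applies the triangle inequality in $L_2$ to three pieces. Crucially, the paper's ``freezing'' piece compares $\mathbf{z}_t$ to $\mathbf{z}_{(k-1)\eta}$, not $\hat{\mathbf{y}}_t$ to $\hat{\mathbf{y}}_{(k-1)\eta}$. That choice makes the discretization residual exactly $\sup_t\|\mathbf{z}_t-\mathbf{z}_{(k-1)\eta}\|_{L_2}\le h_{k,\eta}$ via Lemma~\ref{lem:second:term}, with no reference to $e_t$ whatsoever. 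The $\eta(f)^2+\eta(g^2L)^2$ correction inside $\mu$ then arises not from a Young split of a cross term, but from bounding the \emph{square} of the integrated frozen drift $\bigl\|\int_{(k-1)\eta}^{k\eta}[f\,e_{(k-1)\eta}+(g)^2(\nabla\log p_{T-t}(\mathbf{z}_{(k-1)\eta})-\nabla\log p_{T-t}(\hat{\mathbf{y}}_{(k-1)\eta}))]\,dt\bigr\|^2$ by Cauchy--Schwarz in time.

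Your decomposition with $(\mathrm{B})=V(t,\hat{\mathbf{y}}_t)-V(t,\hat{\mathbf{y}}_{(k-1)\eta})$ is where the argument breaks. The Young split you propose, with weight $\eta$ chosen so that the $\|e_t\|^2$ coefficient matches \eqref{mu:definition}, necessarily puts an $\eta^{-1}$ prefactor on $\|\hat{\mathbf{y}}_t-\hat{\mathbf{y}}_{(k-1)\eta}\|^2$. After integrating over an interval of length $\eta$ and invoking your triangle inequality $\|\hat{\mathbf{y}}_t-\hat{\mathbf{y}}_{(k-1)\eta}\|_{L_2}\le h_{k,\eta}+\|e_t\|_{L_2}+\|e_{(k-1)\eta}\|_{L_2}$, the $\|e_{(k-1)\eta}\|_{L_2}^2$ summand enters the right-hand side with an $O(1)$ coefficient (not $O(\eta)$), which swamps the $1-\int\mu\,dt$ contraction and cannot be absorbed the way the $M_1\eta\int g^2\,dt$ term from $(\mathrm{C})$ can. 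Even the $h_{k,\eta}$ part comes out as $h_{k,\eta}$ after the square root, whereas the target in \eqref{L:2:iterates} is $\sqrt{\eta}\,h_{k,\eta}\bigl(\int[f+g^2L]^2\,dt\bigr)^{1/2}=O(\eta)\,h_{k,\eta}$; summing that looser residual over $k$ diverges as $\eta\to 0$. The fix is to freeze on $\mathbf{z}$: replace $(\mathrm{B})$ by $V(t,\mathbf{z}_t)-V(t,\mathbf{z}_{(k-1)\eta})$ and treat it as a separate additive $L_2$ term (no Young coupling with $e_t$), which is exactly the paper's route.
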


We remark that the coefficient $\gamma_{j,\eta}$ in front of the term $\left\Vert\mathbf{z}_{(k-1)\eta}-\hat{\mathbf{y}}_{(k-1)\eta}\right\Vert_{L_{2}}$ in  \eqref{L:2:iterates} lies in between zero and one. Indeed, it follows from Assumption~\ref{assump:stepsize} 
and the definition of $\mu(t)$ in \eqref{mu:definition} that
$\mu(t)\geq M_{1}\eta(g(t))^{2}$ for every $0\leq t\leq T$
and $\eta\max_{0\leq t\leq T}\mu(t)<1$ such that
for any $j=1,2,\ldots,K$, 
$0\leq\gamma_{j,\eta}\leq 1$ where $\gamma_{j,\eta}$ is defined in \eqref{gamma:k:defn}.

%%%%%%%%%%%%%%%%%%%%%%%%%%%

%%%%%%%%%%%%%%%%%%%%%%%%%%%%%
%\subsubsection{Proof of Theorem~\ref{thm:discrete:2}}

Now we are ready to prove Theorem~\ref{thm:discrete:2}.

\subsubsection{Completing the Proof of Theorem~\ref{thm:discrete:2}}

\begin{proof}% [Proof of Theorem~\ref{thm:discrete:2}]
% It follows from Proposition~\ref{prop:iterates} that for any $k=1,2,\ldots,K$, 
% \begin{align}
% &\left\Vert\mathbf{z}_{k\eta}-\hat{\mathbf{y}}_{k\eta}\right\Vert_{L_{2}}
% \nonumber
% \\
% &\leq\left(1-\int_{(k-1)\eta}^{k\eta}\mu(T-t)dt+M_{1}\eta\int_{(k-1)\eta}^{k\eta}(g(T-t))^{2}dt\right)
% \Vert\mathbf{z}_{(k-1)\eta}-\hat{\mathbf{y}}_{(k-1)\eta}\Vert_{L_{2}}
% \nonumber
% \\
% &\qquad
% +M_{1}\eta\left(1+\sqrt{2d/m_{0}}+\Vert\mathbf{x}_{\ast}\Vert
% +c_{2}(T)\right)\int_{(k-1)\eta}^{k\eta}(g(T-t))^{2}dt
% \nonumber
% \\
% &\qquad\qquad\qquad\qquad\qquad\qquad\qquad
% +M\int_{(k-1)\eta}^{k\eta}(g(T-t))^{2}dt
% \nonumber
% \\
% &\qquad\qquad
% +\sqrt{\eta}h_{k,\eta}\left(\int_{(k-1)\eta}^{k\eta}[f(T-t)+(g(T-t))^{2}L(T-t)]^{2}dt\right)^{1/2}.\label{L:2:iterates}
% \end{align}
Since $\hat{\mathbf{y}}_{k\eta}$ has the same distribution as $\mathbf{y}_{k}$, 
by applying \eqref{L:2:iterates}, 
we have
\begin{align*}
&\mathcal{W}_{2}(\mathcal{L}(\mathbf{z}_{K\eta}),\mathcal{L}(\mathbf{y}_{K}))
\leq
\left\Vert\mathbf{z}_{K\eta}-\hat{\mathbf{y}}_{K\eta}\right\Vert_{L_{2}}
\nonumber
\\
&\leq
\sum_{k=1}^{K}
\prod_{j=k+1}^{K}\gamma_{j,\eta}
\cdot\Bigg(M_{1}\eta\left(1+\Vert\mathbf{x}_{0}\Vert_{L_{2}}
+c_{2}(T)\right)\int_{(k-1)\eta}^{k\eta}(g(T-t))^{2}dt
\nonumber
\\
&\qquad\qquad\qquad\qquad\qquad\qquad\qquad\qquad
+M\int_{(k-1)\eta}^{k\eta}(g(T-t))^{2}dt
\nonumber
\\
&\qquad\qquad
+\sqrt{\eta}h_{k,\eta}\left(\int_{(k-1)\eta}^{k\eta}\left[f(T-t)+(g(T-t))^{2}L(T-t)\right]^{2}dt\right)^{1/2}\Bigg).
\end{align*}
Moreover, we recall that $T=K\eta$ and by triangle inequality
for $2$-Wasserstein distance,
$\mathcal{W}_{2}(\mathcal{L}(\mathbf{y}_{K}),p_{0})
\leq
\mathcal{W}_{2}(\mathcal{L}(\mathbf{y}_{K}),\mathcal{L}(\mathbf{z}_{K\eta}))
+\mathcal{W}_{2}(\mathcal{L}(\mathbf{z}_{K\eta}),p_{0})$.
The proof is completed by applying
Proposition~\ref{thm:1}.
\end{proof}

%%%%%%%%%%%%%%%%%%%%%%%%

\subsubsection{Proof of Proposition~\ref{thm:1}}\label{sec:thm1}
\begin{proof}
%First of all, it directly follows from \eqref{SDE:solution} that
%\begin{equation}
%\mathcal{W}_{2}(p_{T},\hat{p}_{T})
%\leq
%e^{-\int_{0}^{T}f(s)ds}\Vert\mathbf{x}_{0}\Vert_{L_{2}}.
%\end{equation}
%%%%%%%%%%%%%%%%%%%%%%%%%%%%%%%%%%%%%%
%We recall that
%\begin{equation}\label{tilde:x:definition}
%d\tilde{\mathbf{x}}_{t}=\left[f(T-t)\tilde{\mathbf{x}}_{t}+(g(T-t))^{2}\nabla\log p_{T-t}(\tilde{\mathbf{x}}_{t})\right]dt
%+g(T-t)d\bar{\mathbf{B}}_{t},
%\end{equation}
%with the initial distribution $\tilde{\mathbf{x}}_{0}\sim p_{T}$ and
%\begin{equation*}
%d\mathbf{z}_{t}=\left[f(T-t)\mathbf{z}_{t}+(g(T-t))^{2}\nabla\log p_{T-t}(\mathbf{z}_{t})\right]dt
%+g(T-t)d\bar{\mathbf{B}}_{t},
%\end{equation*}
%with the initial distribution $\mathbf{z}_{0}\sim\hat{p}_{T}$.
For the forward SDE \eqref{OU:SDE}, the transition density is Gaussian, and we have
$p_{t}(\mathbf{x}_{t})=\int_{\mathbb{R}^{d}}p(\mathbf{x}_{t}|\mathbf{x}_{0})p_{0}(\mathbf{x}_{0})d\mathbf{x}_{0}$,
where
\begin{equation*}
p(\mathbf{x}_{t}|\mathbf{x}_{0})
=\frac{1}{\left(2\pi\int_{0}^{t}e^{-2\int_{s}^{t}f(v)dv}(g(s))^{2}ds\right)^{d/2}}\exp\left(-\frac{\Vert\mathbf{x}_{t}-e^{-\int_{0}^{t}f(s)ds}\mathbf{x}_{0}\Vert^{2}}{2\int_{0}^{t}e^{-2\int_{s}^{t}f(v)dv}(g(s))^{2}ds}\right).
\end{equation*}
This implies that
\begin{align*}
\log p_{T-t}(\mathbf{x})
&=\log\int_{\mathbb{R}^{d}}\exp\left(-\frac{\Vert\mathbf{x}-e^{-\int_{0}^{T-t}f(s)ds}\mathbf{x}_{0}\Vert^{2}}{2\int_{0}^{T-t}e^{-2\int_{s}^{T-t}f(v)dv}(g(s))^{2}ds}\right)
p_{0}(\mathbf{x}_{0})d\mathbf{x}_{0}
\nonumber
\\
&\qquad\qquad\qquad\qquad\qquad\qquad
-\frac{d}{2}\log\left(2\pi\int_{0}^{t}e^{-2\int_{s}^{t}f(v)dv}(g(s))^{2}ds\right)
\nonumber
\\
&=\log\int_{\mathbb{R}^{d}}\exp\left(-\frac{\Vert\mathbf{x}-\mathbf{x}_{0}\Vert^{2}}{2\int_{0}^{T-t}e^{-2\int_{s}^{T-t}f(v)dv}(g(s))^{2}ds}\right)
p_{0}\left(e^{\int_{0}^{T-t}f(s)ds}\mathbf{x}_{0}\right)d\mathbf{x}_{0}
\nonumber
\\
&\qquad\qquad\qquad\qquad
+de^{\int_{0}^{T-t}f(s)ds}
-\frac{d}{2}\log\left(2\pi\int_{0}^{t}e^{-2\int_{s}^{t}f(v)dv}(g(s))^{2}ds\right),
\label{eq:conv}
\end{align*}
where we applied change-of-variable to obtain the last equation.
Note that for any two functions $p,q:\mathbb{R}^{d}\rightarrow\mathbb{R}$, 
where $p$ is $m_{p}$-strongly-log-concave and $q$ is $m_{q}$-strongly-log-concave
(i.e. $-\log p$ is $m_{p}$-strongly-convex and $-\log q$ is $m_{q}$-strongly-convex)
then it is known that the convolution of $p$ and $q$, i.e. $\int_{\mathbb{R}^{d}}p(\mathbf{x}-\mathbf{y})q(\mathbf{y})d\mathbf{y}$
is $(m_{p}^{-1}+m_{q}^{-1})^{-1}$-strongly-log-concave; see e.g. Proposition~7.1 in \cite{Saumard2014}. 
It is easy to see that the function
$\mathbf{x}\mapsto\exp\left(-\frac{\Vert\mathbf{x}\Vert^{2}}{2\int_{0}^{T-t}e^{-2\int_{s}^{T-t}f(v)dv}(g(s))^{2}ds}\right)$
is $\frac{1}{\int_{0}^{T-t}e^{-2\int_{s}^{T-t}f(v)dv}(g(s))^{2}ds}$-strongly-log-concave,
and the function
$\mathbf{x}\mapsto p_{0}\left(e^{\int_{0}^{T-t}f(s)ds}\mathbf{x}\right)$ 
is $m_{0}\left(e^{\int_{0}^{T-t}f(s)ds}\right)^{2}$-strongly-log-concave
since we assumed that $\mathbf{x}\mapsto p_{0}(\mathbf{x})$ is $m_{0}$-strongly-log-concave. 
Hence, we conclude that
$\log p_{T-t}(\mathbf{x})$
is $a(T-t)$-strongly-concave, where
\begin{equation}\label{eq:aT-t}
a(T-t):=\frac{1}{\frac{1}{m_{0}}e^{-2\int_{0}^{T-t}f(s)ds}+\int_{0}^{T-t}e^{-2\int_{s}^{T-t}f(v)dv}(g(s))^{2}ds}.
\end{equation}

Next, let us recall the definition of $m(T-t)$ in \eqref{eq:mt}
and the dynamics of $\tilde{\mathbf{x}}_{t}$ and $\mathbf{z}_{t}$ in \eqref{eq:Reverse} and \eqref{eq:zt} respectively. 
% \begin{equation}\label{eq:mT-t}
% m(T-t):=2(g(T-t))^{2}a(T-t)-2f(T-t),\qquad 0\leq t\leq T,
% \end{equation}
% where $a(T-t)$ is defined in \eqref{eq:aT-t}.
By It\^{o}'s formula, 
\begin{align*}
&d\left(\Vert\tilde{\mathbf{x}}_{t}-\mathbf{z}_{t}\Vert^{2}e^{\int_{0}^{t}m(T-s)ds}\right)
\nonumber
\\
&=m(T-t)e^{\int_{0}^{t}m(T-s)ds}\Vert\tilde{\mathbf{x}}_{t}-\mathbf{z}_{t}\Vert^{2}dt
+2e^{\int_{0}^{t}m(T-s)ds}\langle\tilde{\mathbf{x}}_{t}-\mathbf{z}_{t},d\tilde{\mathbf{x}}_{t}-d\mathbf{z}_{t}\rangle 
\nonumber
\\
&=m(T-t)e^{\int_{0}^{t}m(T-s)ds}\Vert\tilde{\mathbf{x}}_{t}-\mathbf{z}_{t}\Vert^{2}dt
+2e^{\int_{0}^{t}m(T-s)ds}\langle\tilde{\mathbf{x}}_{t}-\mathbf{z}_{t},f(T-t)(\tilde{\mathbf{x}}_{t}-\mathbf{z}_{t})\rangle dt
\nonumber
\\
&\qquad
+2e^{\int_{0}^{t}m(T-s)ds}\left\langle\tilde{\mathbf{x}}_{t}-\mathbf{z}_{t},(g(T-t))^{2}\left(\nabla\log p_{T-t}(\tilde{\mathbf{x}}_{t})-\nabla\log p_{T-t}(\mathbf{z}_{t})\right)\right\rangle dt
\nonumber
\\
&\leq
e^{\int_{0}^{t}m(T-s)ds}\left(m(T-t)+2f(T-t)-2(g(T-t))^{2}a(T-t)\right)
\Vert\tilde{\mathbf{x}}_{t}-\mathbf{z}_{t}\Vert^{2}dt
=0.
\end{align*}
This implies that
%\begin{equation}\label{L:2:constract}
$\Vert\tilde{\mathbf{x}}_{t}-\mathbf{z}_{t}\Vert^{2}e^{\int_{0}^{t}m(T-s)ds}
\leq
\Vert\tilde{\mathbf{x}}_{0}-\mathbf{z}_{0}\Vert^{2},$
%\end{equation}
so that
\begin{equation}\label{eq:int-mt}
\mathbb{E}\Vert\tilde{\mathbf{x}}_{T}-\mathbf{z}_{T}\Vert^{2}
\leq e^{-\int_{0}^{T}m(T-s)ds}\mathbb{E}\Vert\tilde{\mathbf{x}}_{0}-\mathbf{z}_{0}\Vert^{2}.
\end{equation}
Consider a coupling of $(\tilde{\mathbf{x}}_{0},\mathbf{z}_{0})$ such that $\tilde{\mathbf{x}}_{0}\sim p_{T}$, $\mathbf{z}_{0}\sim\hat{p}_{T}$
and $\mathbb{E}\Vert\tilde{\mathbf{x}}_{0}-\mathbf{z}_{0}\Vert^{2}=\mathcal{W}_{2}^{2}(p_{T},\hat{p}_{T})$.
Together with \eqref{eq:phatp}, we conclude that
\begin{align*}
\mathcal{W}_{2}^{2}(\mathcal{L}(\mathbf{z}_{T}),p_{0})
=\mathcal{W}_{2}^{2}(\mathcal{L}(\mathbf{z}_{T}), \mathcal{L}(\tilde{\mathbf{x}}_{T}))
&\leq\mathbb{E}\Vert\tilde{\mathbf{x}}_{T}-\mathbf{z}_{T}\Vert^{2}
\\
&\leq e^{-\int_{0}^{T}m(T-s)ds}\mathcal{W}_{2}^{2}(p_{T},\hat{p}_{T})
\\
&\leq e^{-\int_{0}^{T}m(s)ds}e^{-2\int_{0}^{T}f(s)ds}\Vert\mathbf{x}_{0}\Vert_{L_{2}}^{2}
\\
&=e^{-2\int_{0}^{T}c(t)dt}\Vert\mathbf{x}_{0}\Vert_{L_{2}}^{2}.
\end{align*}
% where
% \begin{equation*}
% c(t)=f(t) + \frac{m(t)}{2} =\frac{m_{0}(g(t))^{2}}{e^{-2\int_{0}^{t}f(s)ds}+m_{0}\int_{0}^{t}e^{-2\int_{s}^{t}f(v)dv}(g(s))^{2}ds},
% \end{equation*}
The proof is complete. %Applying \eqref{eq:L2-x0} to bound $\Vert\mathbf{x}_{0}\Vert_{L_{2}}$, we immediately obtain the desired result. 
%\begin{equation}
%\Vert\mathbf{x}_{0}\Vert_{L_{2}}^{2}
%\leq
%\left(\sqrt{2d/m_{0}}+\Vert\mathbf{x}_{\ast}\Vert\right)^{2}.
%\end{equation}
%This completes the proof.
\end{proof}

%%%%%%%%%%%%%%%%%%%%%%%%%%%%
%\vspace{-8mm}
\subsubsection{Proof of Proposition~\ref{prop:iterates}}

We first state a key technical lemma, which will be used in the proof of Proposition~\ref{prop:iterates}. 
The proof of the following result will be provided in Appendix~\ref{sec:proof:lem:smooth}.

\begin{lemma}\label{lem:smooth}
Suppose that Assumption~\ref{assump:p0} holds.
%Assume that $p_{0}$ is $m_{0}$-strongly-log-concave and 
%moreover $\log p_{0}$ is $L_{0}$-smooth, 
%i.e. 
%\begin{equation}
%\Vert\nabla\log p_{0}(\mathbf{x})-\nabla\log p_{0}(\mathbf{y})\Vert\leq L_{0}\Vert\mathbf{x}-\mathbf{y}\Vert,
%\qquad\text{for any $\mathbf{x},\mathbf{y}\in\mathbb{R}^{d}$}.
%\end{equation}
Then, $\nabla_{\mathbf{x}}\log p_{T-t}(\mathbf{x})$ is $L(T-t)$-Lipschitz in $\mathbf{x}$, where  $L(T-t)$ is given in \eqref{eq:Lt}.
%\begin{equation}
%L(T-t):=\min\left(\left(\int_{0}^{T-t}e^{-2\int_{s}^{T-t}f(v)dv}(g(s))^{2}ds\right)^{-1},
%\left(e^{\int_{0}^{T-t}f(s)ds}\right)^{2}L_{0}\right).
%\end{equation} 
\end{lemma}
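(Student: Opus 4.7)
The plan is to first write $p_{T-t}$ as the convolution of the density of $\alpha_\tau \mathbf{x}_0$ with a centred Gaussian of variance $\sigma_\tau^2 I_d$, where $\tau := T-t$, $\alpha_\tau := e^{-\int_0^\tau f(s)ds}$, and $\sigma_\tau^2 := \int_0^\tau e^{-2\int_s^\tau f(v)dv}(g(s))^2 ds$; this is immediate from the explicit solution \eqref{SDE:solution}, which gives $\mathbf{x}_\tau = \alpha_\tau \mathbf{x}_0 + \sigma_\tau \mathbf{Z}$ with $\mathbf{Z}\sim\mathcal{N}(0,I_d)$ independent of $\mathbf{x}_0$. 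Differentiating twice under the integral (a Tweedie-type computation) would then produce the key identity
\[
\nabla^2 \log p_{T-t}(\mathbf{x}) \;=\; -\frac{1}{\sigma_\tau^2}\, I_d \;+\; \frac{\alpha_\tau^2}{\sigma_\tau^4}\, \mathrm{Cov}\bigl(\mathbf{x}_0 \mid \mathbf{x}_\tau = \mathbf{x}\bigr),
\]
reducing the problem to two-sided control of the conditional covariance of $\mathbf{x}_0$ given $\mathbf{x}_\tau$.

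I would next deduce two separate lower bounds on $\nabla^2 \log p_{T-t}$, which after negation give the two terms in the minimum defining $L(T-t)$. The first is free: since the conditional covariance is positive semidefinite, $\nabla^2 \log p_{T-t} \succeq -\sigma_\tau^{-2} I_d$, corresponding to $\sigma_\tau^{-2}$. For the second term $\alpha_\tau^{-2} L_0$, I would examine the posterior $r_{\mathbf{x}}(\mathbf{x}_0) \propto p_0(\mathbf{x}_0)\exp\!\bigl(-\|\mathbf{x}-\alpha_\tau \mathbf{x}_0\|^2/(2\sigma_\tau^2)\bigr)$. Its negative log has Hessian $\nabla^2(-\log p_0)(\mathbf{x}_0) + (\alpha_\tau^2/\sigma_\tau^2) I_d$, which by Assumption~\ref{assump:p0} is bounded above pointwise by $\bigl(L_0 + \alpha_\tau^2/\sigma_\tau^2\bigr) I_d$. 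I would then invoke a Cram\'er--Rao type lower bound stating that any density $e^{-V}$ with $\nabla^2 V \preceq \Lambda I_d$ satisfies $\mathrm{Cov} \succeq \Lambda^{-1} I_d$; this follows from the integration-by-parts identity $\mathbb{E}[\nabla V (\nabla V)^\top] = \mathbb{E}[\nabla^2 V]$ combined with the classical Cram\'er--Rao inequality for location families. Plugging the resulting bound $\mathrm{Cov}(\mathbf{x}_0 \mid \mathbf{x}_\tau = \mathbf{x}) \succeq \sigma_\tau^2 (L_0\sigma_\tau^2 + \alpha_\tau^2)^{-1}I_d$ back into the Hessian identity and simplifying would give $\nabla^2 \log p_{T-t} \succeq -L_0/\alpha_\tau^2 \cdot I_d$.

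To conclude, log-concavity of $p_0$ combined with Pr\'ekopa--Leindler applied to the Gaussian convolution ensures $p_{T-t}$ is log-concave, so $\nabla^2 \log p_{T-t} \preceq 0$ and $\|\nabla^2 \log p_{T-t}\|_{\mathrm{op}}$ equals the magnitude of its most negative eigenvalue. Combining the two lower bounds yields $\|\nabla^2 \log p_{T-t}\|_{\mathrm{op}} \leq \min(\sigma_\tau^{-2}, L_0/\alpha_\tau^2) = L(T-t)$, which gives the stated Lipschitz estimate on $\nabla \log p_{T-t}$. The main technical step will be the Cram\'er--Rao style covariance lower bound applied to the posterior $r_{\mathbf{x}}$: although the integration-by-parts identity is standard, one must justify the vanishing of boundary terms, which is guaranteed here by the strong log-concavity of $-\log p_0$ and its transfer to $r_{\mathbf{x}}$, ensuring exponential tail decay.
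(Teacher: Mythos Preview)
Your argument is correct, and the overall structure---express $p_{T-t}$ as a Gaussian convolution, control the Hessian of its log, and read off the Lipschitz constant---matches the paper. The route to the two bounds is somewhat different, however.

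The paper invokes the Saumard--Wellner identity (Proposition~7.1 in \cite{Saumard2014}) for the Hessian of the log of a convolution $q_1 \ast q_0$: with $q_i = e^{-\varphi_i}$ and independent $\mathbf{X}_i \sim q_i$,
\[
\nabla^2(-\log p_{T-t})(\mathbf{x}) = -\mathrm{Var}\bigl(\nabla\varphi_i(\mathbf{X}_i)\,\big|\,\mathbf{X}_0+\mathbf{X}_1=\mathbf{x}\bigr) + \mathbb{E}\bigl[\nabla^2\varphi_i(\mathbf{X}_i)\,\big|\,\mathbf{X}_0+\mathbf{X}_1=\mathbf{x}\bigr],\qquad i\in\{0,1\}.
\]
Since the conditional variance is PSD, each choice of $i$ immediately gives an upper bound on $\nabla^2(-\log p_{T-t})$: taking $i=1$ (the Gaussian factor) yields $\sigma_\tau^{-2}$, and taking $i=0$ (the rescaled $p_0$) yields $\alpha_\tau^{-2}L_0$. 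Both terms in the minimum defining $L(T-t)$ thus fall out of the \emph{same} identity, used symmetrically.

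Your Tweedie formula is exactly the $i=1$ case of this identity (since $\nabla\varphi_1(\mathbf{x}_1)=\mathbf{x}_1/\sigma_\tau^2$ and $\mathrm{Var}(\mathbf{X}_1\mid\text{sum})=\alpha_\tau^2\,\mathrm{Cov}(\mathbf{x}_0\mid\mathbf{x}_\tau)$), so your first bound coincides with the paper's. For the second bound you take a genuinely different path: rather than switching to the $i=0$ representation, you stay with Tweedie and lower-bound the posterior covariance via a Cram\'er--Rao argument. This is correct---strong log-concavity of the posterior guarantees the integrability and tail decay needed for the Bartlett identity---and in fact yields the slightly sharper intermediate bound $\nabla^2\log p_{T-t}\succeq -L_0/(L_0\sigma_\tau^2+\alpha_\tau^2)\,I_d$ before you relax it to $-L_0\alpha_\tau^{-2}\,I_d$. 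The trade-off is that the paper's symmetric use of Saumard--Wellner avoids the Cram\'er--Rao detour entirely and treats both bounds on equal footing, whereas your approach is more hands-on but recovers a marginally tighter constant along the way.
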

% \gao{Gao remark: it was stated on p.7 of \cite{chen2022improved} that the smoothness of
% $\nabla \log p_0$ implies the smoothness of $\nabla \log p_t$, and they quoted Lemma 28 of \cite{lee2021universal}. Maybe ling can take a look and check whether we need to provide a proof of the above lemma by ourselves. Or we can say for completeness we provide a proof.}

\begin{proof}%[Proof of Proposition~\ref{prop:iterates}]
%First, we recall that
%\begin{align*}
%&\mathbf{z}_{k\eta}=\mathbf{z}_{(k-1)\eta}+\int_{(k-1)\eta}^{k\eta}\left[f(T-t)\mathbf{z}_{t}+(g(T-t))^{2}\nabla\log p_{T-t}(\mathbf{z}_{t})\right]dt
%+\int_{(k-1)\eta}^{k\eta}g(T-t)d\bar{\mathbf{B}}_{t},
%\\
%&\hat{\mathbf{y}}_{k\eta}=\hat{\mathbf{y}}_{(k-1)\eta}+\int_{(k-1)\eta}^{k\eta}\left[f(T-t)\hat{\mathbf{y}}_{(k-1)\eta}+(g(T-t))^{2}s_{\theta}\left(\hat{\mathbf{y}}_{(k-1)\eta},T-(k-1)\eta\right)\right]dt
%\nonumber
%\\
%&\qquad\qquad\qquad\qquad\qquad
%+\int_{(k-1)\eta}^{k\eta}g(T-t)d\bar{\mathbf{B}}_{t}. 
%\end{align*}
By recalling the dynamics of $\mathbf{\mathbf{z}}_{t}$ and $\hat{\mathbf{y}}_{t}$ from \eqref{eq:zt} and \eqref{continuous:hat:y}, it follows that
\begin{align*}
&\mathbf{z}_{k\eta}-\hat{\mathbf{y}}_{k\eta}
\\
&=\mathbf{z}_{(k-1)\eta}-\hat{\mathbf{y}}_{(k-1)\eta}
+\int_{(k-1)\eta}^{k\eta}f(T-t)\left(\mathbf{z}_{(k-1)\eta}-\hat{\mathbf{y}}_{(k-1)\eta}\right)dt
\nonumber
\\
&\qquad\qquad
+\int_{(k-1)\eta}^{k\eta}(g(T-t))^{2}\left(\nabla\log p_{T-t}(\mathbf{z}_{(k-1)\eta})-\nabla\log p_{T-t}\left(\hat{\mathbf{y}}_{(k-1)\eta}\right)\right)dt
\nonumber
\\
&\qquad
+\int_{(k-1)\eta}^{k\eta}\Big[f(T-t)(\mathbf{z}_{t}-\mathbf{z}_{(k-1)\eta})
\\
&\qquad\qquad\qquad
+(g(T-t))^{2}\left(\nabla\log p_{T-t}(\mathbf{z}_{t})-\nabla\log p_{T-t}(\mathbf{z}_{(k-1)\eta})\right)\Big]dt
\nonumber
\\
&\qquad\qquad
+\int_{(k-1)\eta}^{k\eta}(g(T-t))^{2}\left(\nabla\log p_{T-t}\left(\hat{\mathbf{y}}_{(k-1)\eta}\right)-s_{\theta}\left(\hat{\mathbf{y}}_{(k-1)\eta},T-(k-1)\eta\right)\right)dt.
\end{align*}
This implies that
\begin{align}
&\left\Vert\mathbf{z}_{k\eta}-\hat{\mathbf{y}}_{k\eta}\right\Vert_{L_{2}}
\nonumber
\\
&\leq\Bigg\Vert\mathbf{z}_{(k-1)\eta}-\hat{\mathbf{y}}_{(k-1)\eta}
+\int_{(k-1)\eta}^{k\eta}f(T-t)\left(\mathbf{z}_{(k-1)\eta}-\hat{\mathbf{y}}_{(k-1)\eta}\right)dt
\nonumber
\\
&\qquad\quad
+\int_{(k-1)\eta}^{k\eta}(g(T-t))^{2}\left(\nabla\log p_{T-t}(\mathbf{z}_{(k-1)\eta})-\nabla\log p_{T-t}\left(\hat{\mathbf{y}}_{(k-1)\eta}\right)\right)dt\Bigg\Vert_{L_{2}}
\nonumber
\\
&+\Bigg\Vert\int_{(k-1)\eta}^{k\eta}\Big[f(T-t)(\mathbf{z}_{t}-\mathbf{z}_{(k-1)\eta})
\nonumber
\\
&\qquad\qquad\qquad\qquad
+(g(T-t))^{2}\left(\nabla\log p_{T-t}(\mathbf{z}_{t})-\nabla\log p_{T-t}(\mathbf{z}_{(k-1)\eta})\right)\Big]dt\Bigg\Vert_{L_{2}}
\nonumber
\\
&\quad
+\left\Vert\int_{(k-1)\eta}^{k\eta}(g(T-t))^{2}\left(\nabla\log p_{T-t}\left(\hat{\mathbf{y}}_{(k-1)\eta}\right)-s_{\theta}\left(\hat{\mathbf{y}}_{(k-1)\eta},T-(k-1)\eta\right)\right)dt\right\Vert_{L_{2}}.\label{two:terms}
\end{align}
Next, we provide upper bounds for the three terms in \eqref{two:terms}.

\textbf{Bounding  the first term in \eqref{two:terms}.} %We first bound the first term in \eqref{two:terms} by Lemma~\ref{lem:first:term}. 
%\begin{lemma}\label{lem:first:term}
%We have
%\begin{align*}
%&\Bigg\Vert\mathbf{z}_{(k-1)\eta}-\hat{\mathbf{y}}_{(k-1)\eta}
%+\int_{(k-1)\eta}^{k\eta}f(T-t)\left(\mathbf{z}_{(k-1)\eta}-\hat{\mathbf{y}}_{(k-1)\eta}\right)dt
%\nonumber
%\\
%&\qquad\qquad
%+\int_{(k-1)\eta}^{k\eta}(g(T-t))^{2}\left(\nabla\log p_{T-t}(\mathbf{z}_{(k-1)\eta})-\nabla\log p_{T-t}\left(\hat{\mathbf{y}}_{(k-1)\eta}\right)\right)dt\Bigg\Vert_{L_{2}}
%\\
%&\leq
%\left(1-\int_{(k-1)\eta}^{k\eta}\mu(T-t)dt\right)
%\left\Vert\mathbf{z}_{(k-1)\eta}-\hat{\mathbf{y}}_{(k-1)\eta}\right\Vert_{L_{2}},
%\end{align*}
%where 
%\begin{equation}
%\mu(T-t):=(g(T-t))^{2}a(T-t)-f(T-t)-\eta(f(T-t))^{2}-\eta(g(T-t))^{4}(L(T-t))^{2},\quad 0\leq t\leq T.
%\end{equation}
%\end{lemma}
%\begin{proof}[Proof of Lemma~\ref{lem:first:term}]
We can compute that
\begin{align*}
&\Bigg\Vert\mathbf{z}_{(k-1)\eta}-\hat{\mathbf{y}}_{(k-1)\eta}
+\int_{(k-1)\eta}^{k\eta}f(T-t)\left(\mathbf{z}_{(k-1)\eta}-\hat{\mathbf{y}}_{(k-1)\eta}\right)dt
\nonumber
\\
&\qquad\qquad
+\int_{(k-1)\eta}^{k\eta}(g(T-t))^{2}\left(\nabla\log p_{T-t}(\mathbf{z}_{(k-1)\eta})-\nabla\log p_{T-t}\left(\hat{\mathbf{y}}_{(k-1)\eta}\right)\right)dt\Bigg\Vert^{2}
\\
&=\left\Vert\mathbf{z}_{(k-1)\eta}-\hat{\mathbf{y}}_{(k-1)\eta}\right\Vert^{2}
+\Bigg\Vert\int_{(k-1)\eta}^{k\eta}f(T-t)\left(\mathbf{z}_{(k-1)\eta}-\hat{\mathbf{y}}_{(k-1)\eta}\right)dt
\nonumber
\\
&\qquad\qquad
+\int_{(k-1)\eta}^{k\eta}(g(T-t))^{2}\left(\nabla\log p_{T-t}(\mathbf{z}_{(k-1)\eta})-\nabla\log p_{T-t}\left(\hat{\mathbf{y}}_{(k-1)\eta}\right)\right)dt\Bigg\Vert^{2}
\\
&\qquad\qquad\qquad
+2\int_{(k-1)\eta}^{k\eta}f(T-t)\left\Vert\mathbf{z}_{(k-1)\eta}-\hat{\mathbf{y}}_{(k-1)\eta}\right\Vert^{2}dt
\\
&\qquad+2\int_{(k-1)\eta}^{k\eta}\bigg\langle\mathbf{z}_{(k-1)\eta}-\hat{\mathbf{y}}_{(k-1)\eta}, 
\nonumber
\\
&\qquad\qquad\qquad\qquad\qquad
(g(T-t))^{2}\left(\nabla\log p_{T-t}(\mathbf{z}_{(k-1)\eta})-\nabla\log p_{T-t}\left(\hat{\mathbf{y}}_{(k-1)\eta}\right)\right)\bigg\rangle dt.
\end{align*}
From the proof of Proposition~\ref{thm:1}, we know that $\log p_{T-t}(\mathbf{x})$
is $a(T-t)$-strongly-concave, where $a(T-t)$ is given in \eqref{eq:aT-t}. Hence
we have
\begin{align*}
&\Bigg\Vert\mathbf{z}_{(k-1)\eta}-\hat{\mathbf{y}}_{(k-1)\eta}
+\int_{(k-1)\eta}^{k\eta}f(T-t)\left(\mathbf{z}_{(k-1)\eta}-\hat{\mathbf{y}}_{(k-1)\eta}\right)dt
\nonumber
\\
&\qquad\qquad
+\int_{(k-1)\eta}^{k\eta}(g(T-t))^{2}\left(\nabla\log p_{T-t}(\mathbf{z}_{(k-1)\eta})-\nabla\log p_{T-t}\left(\hat{\mathbf{y}}_{(k-1)\eta}\right)\right)dt\Bigg\Vert^{2}
%\\
% &\leq\left(1-\int_{(k-1)\eta}^{k\eta}m(T-t)dt\right)\left\Vert\mathbf{z}_{(k-1)\eta}-\hat{\mathbf{y}}_{(k-1)\eta}\right\Vert^{2}
% +\Bigg\Vert\int_{(k-1)\eta}^{k\eta}f(T-t)\left(\mathbf{z}_{(k-1)\eta}-\hat{\mathbf{y}}_{(k-1)\eta}\right)dt
% \nonumber
% \\
% &\qquad\qquad
% +\int_{(k-1)\eta}^{k\eta}(g(T-t))^{2}\left(\nabla\log p_{T-t}(\mathbf{z}_{(k-1)\eta})-\nabla\log p_{T-t}\left(\hat{\mathbf{y}}_{(k-1)\eta}\right)\right)dt\Bigg\Vert^{2}
\\
&\leq\left(1-\int_{(k-1)\eta}^{k\eta}m(T-t)dt\right)\left\Vert\mathbf{z}_{(k-1)\eta}-\hat{\mathbf{y}}_{(k-1)\eta}\right\Vert^{2}
\nonumber
\\
&\qquad\qquad
+\Bigg(\int_{(k-1)\eta}^{k\eta}f(T-t)\left\Vert\mathbf{z}_{(k-1)\eta}-\hat{\mathbf{y}}_{(k-1)\eta}\right\Vert dt
\nonumber
\\
&\qquad\qquad\qquad\qquad
+\int_{(k-1)\eta}^{k\eta}(g(T-t))^{2}L(T-t)\left\Vert\mathbf{z}_{(k-1)\eta}-\hat{\mathbf{y}}_{(k-1)\eta}\right\Vert dt\Bigg)^{2}
\\
&\leq
\Bigg(1-\int_{(k-1)\eta}^{k\eta}m(T-t)dt+2\eta\int_{(k-1)\eta}^{k\eta}(f(T-t))^{2}dt
\nonumber
\\
&\qquad\qquad\qquad
+2\eta\int_{(k-1)\eta}^{k\eta}(g(T-t))^{4}(L(T-t))^{2}dt\Bigg)
\cdot\left\Vert\mathbf{z}_{(k-1)\eta}-\hat{\mathbf{y}}_{(k-1)\eta}\right\Vert^{2},
\end{align*}
where we applied Cauchy-Schwartz inequality and Lemma~\ref{lem:smooth}, and $m(T-t)$ is defined in \eqref{eq:mt}.
%\begin{equation}
%m(T-t):=2(g(T-t))^{2}a(T-t)-2f(T-t),\qquad 0\leq t\leq T.
%\end{equation}
Hence, we conclude that
\begin{align}\label{first:term}
&\Bigg\Vert\mathbf{z}_{(k-1)\eta}-\hat{\mathbf{y}}_{(k-1)\eta}
+\int_{(k-1)\eta}^{k\eta}f(T-t)\left(\mathbf{z}_{(k-1)\eta}-\hat{\mathbf{y}}_{(k-1)\eta}\right)dt
\nonumber
\\
&\qquad\qquad
+\int_{(k-1)\eta}^{k\eta}(g(T-t))^{2}\left(\nabla\log p_{T-t}\left(\mathbf{z}_{(k-1)\eta}\right)-\nabla\log p_{T-t}\left(\hat{\mathbf{y}}_{(k-1)\eta}\right)\right)dt\Bigg\Vert_{L_{2}}
\nonumber \\
&\leq
\left(1-\int_{(k-1)\eta}^{k\eta}\mu(T-t)dt\right)
\left\Vert\mathbf{z}_{(k-1)\eta}-\hat{\mathbf{y}}_{(k-1)\eta}\right\Vert_{L_{2}},
\end{align}
where we used the inequality $\sqrt{1-x}\leq 1-\frac{x}{2}$ for any $0\leq x\leq 1$
and the definition of $\mu(T-t)$ in \eqref{mu:definition}
which can be rewritten as
\begin{equation*}
\mu(T-t):=(g(T-t))^{2}a(T-t)-f(T-t)-\eta(f(T-t))^{2}-\eta(g(T-t))^{4}(L(T-t))^{2},\quad 0\leq t\leq T,
\end{equation*}
where $a(T-t)$ is given in \eqref{eq:aT-t}.
%This completes the proof.
%\end{proof}

%%%%%%%%%%%%%%%%%%%%%%%%%%%%%%%%%%%%%%%
\textbf{Bounding  the second term in \eqref{two:terms}.} %Next, let us provide an upper bound for the second term in \eqref{two:terms}
Using Lemma~\ref{lem:smooth}, we can compute that
\begin{align*}
&\left\Vert\int_{(k-1)\eta}^{k\eta}\left[f(T-t)(\mathbf{z}_{t}-\mathbf{z}_{(k-1)\eta})+(g(T-t))^{2}\left(\nabla\log p_{T-t}(\mathbf{z}_{t})-\nabla\log p_{T-t}(\mathbf{z}_{(k-1)\eta})\right)\right]dt\right\Vert^{2}
\\
&\leq
\left(\int_{(k-1)\eta}^{k\eta}[f(T-t)+(g(T-t))^{2}L(T-t)]\Vert\mathbf{z}_{t}-\mathbf{z}_{(k-1)\eta}\Vert dt\right)^{2}
\\
&\leq
\eta\int_{(k-1)\eta}^{k\eta}[f(T-t)+(g(T-t))^{2}L(T-t)]^{2}\Vert\mathbf{z}_{t}-\mathbf{z}_{(k-1)\eta}\Vert^{2}dt,
\end{align*}
which implies that 
\begin{align}\label{eq:2ndtermUB}
&\left\Vert\int_{(k-1)\eta}^{k\eta}\left[f(T-t)(\mathbf{z}_{t}-\mathbf{z}_{(k-1)\eta})+(g(T-t))^{2}\left(\nabla\log p_{T-t}(\mathbf{z}_{t})-\nabla\log p_{T-t}\left(\mathbf{z}_{(k-1)\eta}\right)\right)\right]dt\right\Vert_{L_{2}}
\nonumber \\
% &\leq\left(\mathbb{E}\left[\eta\int_{(k-1)\eta}^{k\eta}[f(T-t)+(g(T-t))^{2}L(T-t)]^{2}\Vert\mathbf{z}_{t}-\mathbf{z}_{(k-1)\eta}\Vert^{2}dt\right]\right)^{1/2}
% \nonumber
% \\
&\leq
\left(\eta\int_{(k-1)\eta}^{k\eta}[f(T-t)+(g(T-t))^{2}L(T-t)]^{2}dt
\cdot\sup_{(k-1)\eta\leq t\leq k\eta}\mathbb{E}\Vert\mathbf{z}_{t}-\mathbf{z}_{(k-1)\eta}\Vert^{2}
\right)^{1/2}
\nonumber
\\
&=
\sqrt{\eta}\left(\int_{(k-1)\eta}^{k\eta}[f(T-t)+(g(T-t))^{2}L(T-t)]^{2}dt\right)^{1/2}
\sup_{(k-1)\eta\leq t\leq k\eta}\left\Vert\mathbf{z}_{t}-\mathbf{z}_{(k-1)\eta}\right\Vert_{L_{2}}.
\end{align}

%%%%%%%%%%%%%%%%%%%%%%%%%%%%%%%
\textbf{Bounding the third term in \eqref{two:terms}.}  %Finally, let us bound the third term in \eqref{two:terms}.
We notice that
\begin{align*}
&\left\Vert\int_{(k-1)\eta}^{k\eta}(g(T-t))^{2}\left(\nabla\log p_{T-t}(\hat{\mathbf{y}}_{(k-1)\eta})-s_{\theta}\left(\hat{\mathbf{y}}_{(k-1)\eta},T-(k-1)\eta\right)\right)dt\right\Vert_{L_{2}}
\\
&\leq
\left\Vert\int_{(k-1)\eta}^{k\eta}(g(T-t))^{2}\left(\nabla\log p_{T-(k-1)\eta}(\hat{\mathbf{y}}_{(k-1)\eta})-s_{\theta}\left(\hat{\mathbf{y}}_{(k-1)\eta},T-(k-1)\eta\right)\right)dt\right\Vert_{L_{2}}
\\
&\qquad
+\left\Vert\int_{(k-1)\eta}^{k\eta}(g(T-t))^{2}\left(\nabla\log p_{T-t}\left(\hat{\mathbf{y}}_{(k-1)\eta}\right)-\nabla\log p_{T-(k-1)\eta}\left(\hat{\mathbf{y}}_{(k-1)\eta}\right)\right)dt\right\Vert_{L_{2}}.
\end{align*}
By Assumption~\ref{assump:M}, we have  
\begin{align}
&\left\Vert\int_{(k-1)\eta}^{k\eta}(g(T-t))^{2}\left(\nabla\log p_{T-(k-1)\eta}(\hat{\mathbf{y}}_{(k-1)\eta})-s_{\theta}\left(\hat{\mathbf{y}}_{(k-1)\eta},T-(k-1)\eta\right)\right)dt\right\Vert_{L_{2}}
\nonumber
\\
&\leq M\int_{(k-1)\eta}^{k\eta}(g(T-t))^{2}dt.\label{assump:can:be:Weakened}
\end{align}
Moreover, by Assumption~\ref{assump:M:1}, we have
\begin{align}
&\left\Vert\int_{(k-1)\eta}^{k\eta}(g(T-t))^{2}\left(\nabla\log p_{T-t}\left(\hat{\mathbf{y}}_{(k-1)\eta}\right)-\nabla\log p_{T-(k-1)\eta}\left(\hat{\mathbf{y}}_{(k-1)\eta}\right)\right)dt\right\Vert_{L_{2}}
\nonumber
\\
&\leq
\int_{(k-1)\eta}^{k\eta}(g(T-t))^{2}\left\Vert\nabla\log p_{T-t}\left(\hat{\mathbf{y}}_{(k-1)\eta}\right)-\nabla\log p_{T-(k-1)\eta}\left(\hat{\mathbf{y}}_{(k-1)\eta}\right)\right\Vert_{L_{2}}dt
\nonumber
\\
&\leq
\int_{(k-1)\eta}^{k\eta}(g(T-t))^{2}M_{1}\eta\left(1+\left\Vert\hat{\mathbf{y}}_{(k-1)\eta}\right\Vert_{L_{2}}\right)dt
\nonumber
\\
&\leq
M_{1}\eta\left(1+\left\Vert\mathbf{z}_{(k-1)\eta}-\hat{\mathbf{y}}_{(k-1)\eta}\right\Vert_{L_{2}}
+\Vert\mathbf{z}_{(k-1)\eta}\Vert_{L_{2}}\right)
\int_{(k-1)\eta}^{k\eta}(g(T-t))^{2}dt.\label{by:apply:1}
\end{align}
Furthermore, we can compute that
\begin{equation}\label{by:apply:2}
\left\Vert\mathbf{z}_{(k-1)\eta}\right\Vert_{L_{2}}
\leq
\left\Vert\mathbf{z}_{(k-1)\eta}-\tilde{\mathbf{x}}_{(k-1)\eta}\right\Vert_{L_{2}}
+\left\Vert\tilde{\mathbf{x}}_{(k-1)\eta}\right\Vert_{L_{2}},
\end{equation}
where $\tilde{\mathbf{x}}_{t}$ is defined in \eqref{eq:Reverse}. 
Moreover, by the proof of Proposition~\ref{thm:1}, we have
\begin{equation}\label{by:apply:3}
\left\Vert\mathbf{z}_{(k-1)\eta}-\tilde{\mathbf{x}}_{(k-1)\eta}\right\Vert_{L_{2}}
\leq
\left(\mathbb{E}\Vert\tilde{\mathbf{x}}_{0}-\mathbf{z}_{0}\Vert^{2}\right)^{1/2}
=e^{-\int_{0}^{T}f(s)ds}\Vert\mathbf{x}_{0}\Vert_{L_{2}}
\leq
\Vert\mathbf{x}_{0}\Vert_{L_{2}},
\end{equation}
where we applied \eqref{SDE:solution} to obtain
the equality in the above equation.
Moreover, since $(\tilde{\mathbf{x}}_{t})_{0\leq t\leq T}$ is the time-reversal process
of $(\mathbf{x}_{t})_{0\leq t\leq T}$, we have
\begin{align}
\left\Vert\tilde{\mathbf{x}}_{(k-1)\eta}\right\Vert_{L_{2}}
=\left\Vert\mathbf{x}_{T-(k-1)\eta}\right\Vert_{L_{2}}
%\\
%&\leq
%\sup_{0\leq t\leq T}\left(e^{-2\int_{0}^{t}f(s)ds}\mathbb{E}\Vert\mathbf{x}_{0}\Vert^{2}
%+d\int_{0}^{t}e^{-2\int_{s}^{t}f(v)dv}(g(s))^{2}ds\right)^{1/2}
%\nonumber
%\\
\leq
\sup_{0\leq t\leq T}\Vert\mathbf{x}_{t}\Vert_{L_{2}}=:c_{2}(T).\label{by:apply:4}
\end{align}

Next, let us show that $c_{2}(T)$ can be computed as given by the formula in \eqref{c:2:defn}.
By applying It\^{o}'s formula to equation~\eqref{SDE:solution}, we have
\begin{align*}
&d\left(\Vert\mathbf{x}_{t}\Vert^{2}e^{2\int_{0}^{t}f(s)ds}\right)
\\
&=2f(t)\Vert\mathbf{x}_{t}\Vert^{2}e^{2\int_{0}^{t}f(s)ds}dt
+2e^{2\int_{0}^{t}f(s)ds}\langle\mathbf{x}_{t},d\mathbf{x}_{t}\rangle
+e^{2\int_{0}^{t}f(s)ds}\cdot d\cdot (g(t))^{2}dt.
\end{align*}
By taking expectations, we obtain
\begin{equation*}
d\left(\mathbb{E}\Vert\mathbf{x}_{t}\Vert^{2}e^{2\int_{0}^{t}f(s)ds}\right)
=e^{2\int_{0}^{t}f(s)ds}\cdot d\cdot (g(t))^{2}dt,
\end{equation*}
so that
%\begin{align}\label{x:t:L:2:estimate}
\begin{equation*}
\mathbb{E}\Vert\mathbf{x}_{t}\Vert^{2}
=e^{-2\int_{0}^{t}f(s)ds}\mathbb{E}\Vert\mathbf{x}_{0}\Vert^{2}
+d\int_{0}^{t}e^{-2\int_{s}^{t}f(v)dv}(g(s))^{2}ds.
\end{equation*}
% \\
% &\leq
% e^{-2\int_{0}^{t}f(s)ds}\mathbb{E}\Vert\mathbf{x}_{0}\Vert^{2} 
% +d\int_{0}^{t}e^{-2\int_{s}^{t}f(v)dv}(g(s))^{2}ds,
%\end{align}
Therefore, we conclude that
\begin{align}\label{c:2:source}
c_{2}(T)=\sup_{0\leq t\leq T}\Vert\mathbf{x}_{t}\Vert_{L_{2}}
=\sup_{0\leq t\leq T}\left(e^{-2\int_{0}^{t}f(s)ds}\Vert\mathbf{x}_{0}\Vert_{L_{2}}^{2}
+d\int_{0}^{t}e^{-2\int_{s}^{t}f(v)dv}(g(s))^{2}ds\right)^{1/2}.
\end{align}

Therefore, by applying \eqref{by:apply:1}, \eqref{by:apply:2}, \eqref{by:apply:3} and \eqref{by:apply:4}, we have
\begin{align*}
&\left\Vert\int_{(k-1)\eta}^{k\eta}(g(T-t))^{2}\left(\nabla\log p_{T-t}\left(\hat{\mathbf{y}}_{(k-1)\eta}\right)-\nabla\log p_{T-(k-1)\eta}\left(\hat{\mathbf{y}}_{(k-1)\eta}\right)\right)dt\right\Vert_{L_{2}}
\\
&\leq  %M_{1}\eta\int_{(k-1)\eta}^{k\eta}(g(T-t))^{2}dt + 
\left(M_{1}\eta\left\Vert\mathbf{z}_{(k-1)\eta}-\hat{\mathbf{y}}_{(k-1)\eta}\right\Vert_{L_{2}}
+M_{1}\eta\left(1+\Vert\mathbf{x}_{0}\Vert_{L_{2}}
+c_{2}(T)\right)\right)\int_{(k-1)\eta}^{k\eta}(g(T-t))^{2}dt.
\end{align*}
It follows that the third term in \eqref{two:terms} is upper bounded by
\begin{align}\label{eq:3rdterm}
\left(M_{1}\eta\left\Vert\mathbf{z}_{(k-1)\eta}-\hat{\mathbf{y}}_{(k-1)\eta}\right\Vert_{L_{2}}
+M_{1}\eta\left(1+\Vert\mathbf{x}_{0}\Vert_{L_{2}}
+c_{2}(T)\right)+M\right)
\int_{(k-1)\eta}^{k\eta}(g(T-t))^{2}dt.
\end{align}

%%%%%%%%%%%%%%%%%%%%%%%%%%%%%%%%%%%%

\textbf{Bounding \eqref{two:terms}.}
On combining \eqref{first:term}, \eqref{eq:2ndtermUB} and \eqref{eq:3rdterm}, we conclude that
\begin{align} \label{eq:final2t}
&\left\Vert\mathbf{z}_{k\eta}-\hat{\mathbf{y}}_{k\eta}\right\Vert_{L_{2}}^{2}
\\
&\leq\Bigg\{\gamma_{k,\eta}
\left\Vert\mathbf{z}_{(k-1)\eta}-\hat{\mathbf{y}}_{(k-1)\eta}\right\Vert_{L_{2}}
\nonumber
\\
&\quad
+M_{1}\eta\left(1+\Vert\mathbf{x}_{0}\Vert_{L_{2}}
+c_{2}(T)\right)\int_{(k-1)\eta}^{k\eta}(g(T-t))^{2}dt
+M\int_{(k-1)\eta}^{k\eta}(g(T-t))^{2}dt
\nonumber
\\
&\qquad
+\sqrt{\eta}\left(\int_{(k-1)\eta}^{k\eta}[f(T-t)+(g(T-t))^{2}L(T-t)]^{2}dt\right)^{1/2}
\sup_{(k-1)\eta\leq t\leq k\eta}\Vert\mathbf{z}_{t}-\mathbf{z}_{(k-1)\eta}\Vert_{L_{2}}\Bigg\}^{2}, \nonumber
\end{align}
where we used the definition of $\gamma_{k,\eta}$ in \eqref{gamma:k:defn}.
We need one more result, which provides an upper bound
for $\sup_{(k-1)\eta\leq t\leq k\eta}\Vert\mathbf{z}_{t}-\mathbf{z}_{(k-1)\eta}\Vert_{L_{2}}$. The proof of Lemma~\ref{lem:second:term} is given in Appendix~\ref{app:2ndterm}.

\begin{lemma}\label{lem:second:term}
For any $k=1,2,\ldots,K$,
\begin{align*}
\sup_{(k-1)\eta\leq t\leq k\eta}\left\Vert\mathbf{z}_{t}-\mathbf{z}_{(k-1)\eta}\right\Vert_{L_{2}}
&\leq c_{1}(T)\int_{(k-1)\eta}^{k\eta}\left[f(T-s)+(g(T-s))^{2}L(T-s)\right]ds
\\
&\quad
+c_{2}(T)\int_{T-k\eta}^{T-(k-1)\eta}f(s)ds
+\left(\int_{T-k\eta}^{T-(k-1)\eta}(g(s))^{2}ds\right)^{1/2}\sqrt{d},
\end{align*}
where $c_{1}(T)$ and $c_{2}(T)$ are given in \eqref{c:1:defn}-\eqref{c:2:defn} respectively.
% that
% \begin{align*}
% &c_{1}(T):=\sup_{0\leq t\leq T}e^{-\frac{1}{2}\int_{0}^{t}m(T-s)ds}e^{-\int_{0}^{T}f(s)ds} \Vert\mathbf{x}_{0}\Vert_{L_{2}} ,
% \\
% &c_{2}(T):=\sup_{0\leq t\leq T}\left(e^{-2\int_{0}^{t}f(s)ds} \Vert\mathbf{x}_{0}\Vert_{L_{2}}^2 
% +d\int_{0}^{t}e^{-2\int_{s}^{t}f(v)dv}(g(s))^{2}ds\right)^{1/2}.
% \end{align*}
\end{lemma}

By applying Lemma~\ref{lem:second:term}, 
we conclude from \eqref{eq:final2t} that
\begin{align*}
&\left\Vert\mathbf{z}_{k\eta}-\hat{\mathbf{y}}_{k\eta}\right\Vert_{L_{2}}
\nonumber
\\
&\leq \gamma_{k,\eta}
\left\Vert\mathbf{z}_{(k-1)\eta}-\hat{\mathbf{y}}_{(k-1)\eta}\right\Vert_{L_{2}}
\nonumber
\\
&\qquad
+M_{1}\eta\left(1+ \Vert\mathbf{x}_{0}\Vert_{L_{2}} 
+c_{2}(T)\right)\int_{(k-1)\eta}^{k\eta}(g(T-t))^{2}dt
% \nonumber
% \\
% &\qquad\qquad\qquad\qquad\qquad\qquad\qquad\qquad
+M\int_{(k-1)\eta}^{k\eta}(g(T-t))^{2}dt
\nonumber
\\
&\qquad\qquad
+\sqrt{\eta}h_{k,\eta}\left(\int_{(k-1)\eta}^{k\eta}[f(T-t)+(g(T-t))^{2}L(T-t)]^{2}dt\right)^{1/2},
\end{align*}
where $h_{k,\eta}$ is defined in \eqref{h:k:eta:main}. The proof of Proposition~\ref{prop:iterates} is hence complete.
\end{proof}

%Next, in Lemma~\ref{lem:second:term}, which is used in the proof of Proposition~\ref{prop:iterates}, 
%we provide an upper bound
%for $\Vert\mathbf{z}_{t}-\mathbf{z}_{(k-1)\eta}\Vert_{L_{2}}$
%for any $(k-1)\eta\leq t\leq k\eta$.

%%%%%%%%%%%%%%%%%%%%%%%%%%%%%%%%%%

% \subsection{Proof of Proposition~\ref{prop:VP}}

% \begin{proof}%[Proof of Proposition~\ref{prop:VP}]
% The proof of Proposition~\ref{prop:VP} will be postponed
% to Appendix~\ref{app:VPSDE}.
% \end{proof}

%%%%%%%%%%%%%%%%%%%%%
%\vspace{-5mm}
\subsection{Proof of Proposition~\ref{prop:lower:bound}}

\begin{proof}%[Proof of Proposition~\ref{prop:lower:bound}]
By \eqref{h:k:eta:main}, we have
$h_{k,\eta}\geq\min_{0\leq t\leq T}g(t)\sqrt{\eta}\sqrt{d}.$
Therefore, we have
\begin{align*}
&\text{RHS of \eqref{main:thm:upper:bound}}
\nonumber
\\
&\geq 
e^{-\int_{0}^{K\eta}c(t)dt} \Vert\mathbf{x}_{0}\Vert_{L_{2}} 
+\sum_{k=1}^{K}
\prod_{j=k+1}^{K}\left(1-\int_{(j-1)\eta}^{j\eta}\mu(T-t)dt\right)\nonumber
\\
&\quad
\cdot\left(M\int_{(k-1)\eta}^{k\eta}(g(T-t))^{2}dt+\sqrt{\eta}h_{k,\eta}\left(\int_{(k-1)\eta}^{k\eta}[f(T-t)+(g(T-t))^{2}L(T-t)]^{2}dt\right)^{1/2}\right)
\nonumber
\\
&\geq
e^{-\int_{0}^{K\eta}c(t)dt} \Vert\mathbf{x}_{0}\Vert_{L_{2}} 
\\
&\qquad
+\sum_{k=1}^{K}\left(1-\eta\max_{0\leq t\leq T}\mu(t)\right)^{K-k}
\left(\sqrt{\eta}\min_{0\leq t\leq T}g(t)\sqrt{\eta}\sqrt{d}\sqrt{\eta}\min_{0\leq t\leq T}\left(f(t)+(g(t))^{2}L(t)\right)\right)
\nonumber
\\
% &=e^{-\int_{0}^{K\eta}c(t)dt}\Vert\mathbf{x}_{0}\Vert_{L_{2}} 
% +\sqrt{\eta}\sqrt{d}\frac{1-\left(1-\eta\max_{0\leq t\leq T}\mu(t)\right)^{K}}{\max_{0\leq t\leq T}\mu(t)}
% \left(\min_{0\leq t\leq T}g(t)\min_{0\leq t\leq T}\left(f(t)+(g(t))^{2}L(t)\right)\right) 
% \nonumber
% \\
&\geq
e^{-\int_{0}^{K\eta}c(t)dt} \Vert\mathbf{x}_{0}\Vert_{L_{2}} 
\\
&\qquad\qquad
+\sqrt{\eta}\sqrt{d}\frac{1-e^{-K\eta\max_{0\leq t\leq T}\mu(t)}}{\max_{0\leq t\leq T}\mu(t)}
\left(\min_{0\leq t\leq T}g(t)\min_{0\leq t\leq T}\left(f(t)+(g(t))^{2}L(t)\right)\right),
\end{align*}
where the equality above is due to the formula for the finite sum of a geometric series 
and we used the inequality that $1-x\leq e^{-x}$ for any $0\leq x\leq 1$ to obtain the last inequality above.
Therefore, in order for $\text{RHS of \eqref{main:thm:upper:bound}}\leq\epsilon$, 
we must have
$e^{-\int_{0}^{K\eta}c(t)dt}\Vert\mathbf{x}_{0}\Vert_{L_{2}} \leq\epsilon$,
which implies that $K\eta\rightarrow\infty$ as $\epsilon\rightarrow 0$ and in particular 
\begin{equation}\label{T:lower:bound}
T=K\eta=\Omega(1), 
\end{equation}
% (since under our assumptions on $f$ and $g$, $c(t)$ which is defined in \eqref{c:t:defn} is positive and continuous so that $\int_{0}^{t}c(s)ds$ is finite for any $t\in(0,\infty)$ and strictly increasing
% from $0$ to $\infty$ as $t$ increases from $0$ to $\infty$), 
and we also need
\begin{equation}\label{eqn:implies:that}
\sqrt{\eta}\sqrt{d}\frac{1-e^{-K\eta\max_{0\leq t\leq T}\mu(t)}}{\max_{0\leq t\leq T}\mu(t)}
\left(\min_{0\leq t\leq T}g(t)\min_{0\leq t\leq T}\left(f(t)+(g(t))^{2}L(t)\right)\right)
\leq\epsilon.
\end{equation}
Note that by the definition of $\mu(t)$ in \eqref{eq:mt} and $c(t)$ in \eqref{c:t:defn}, we have
$\max_{0\leq t\leq T}\mu(t)\leq\max_{0\leq t\leq T}c(t)=\max_{0\leq t\leq K\eta}c(t)$.
Note Assumption~\ref{assump:p0} implies that 
% $\mathbf{x}_{0} \sim p_0$ is $L_{2}$-integrable. 
% Indeed,  by Lemma~11 in \cite{distMCMC}, we have
\begin{align}\label{eq:L2-x0}
\Vert\mathbf{x}_{0}\Vert_{L_{2}}\leq\sqrt{2d/m_{0}}+\Vert\mathbf{x}_{\ast}\Vert,
\end{align}
where $\mathbf{x}_{\ast}$ is the unique minimizer of $-\log p_{0}$. See Lemma~11 in \cite{distMCMC}.
Hence we have $e^{-\int_{0}^{K\eta}c(t)dt}=\mathcal{O}(\epsilon/\sqrt{d})$. 
Together with the assumption that $\max_{0\leq s\leq t}c(s)\leq c_{1}\left(\int_{0}^{t}c(s)ds\right)^{\rho}+c_{2}$
uniformly in $t$ for some $c_{1},c_{2},\rho>0$, 
it is easy to see that $\max_{0\leq t\leq K\eta}c(t)=\mathcal{O}\left(\left(\log\left(\sqrt{d}/\epsilon\right)\right)^{\rho}\right)$ and hence $\max_{0\leq t\leq T}\mu(t)=\mathcal{O}\left(\left(\log\left(\sqrt{d}/\epsilon\right)\right)^{\rho}\right)$.
Moreover, under our assumption $\mu(t)\geq\frac{1}{4}m(t)$ for any $0\leq t\leq T$, where $\mu(t),m(t)$ are defined in \eqref{mu:definition} and \eqref{eq:mt} and since we assumed $\min_{t\geq 0}g(t)>0$, we have $m(t)>0$ for any $t$.
Together with $T=K\eta = \Omega(1)$ from \eqref{T:lower:bound}, we have $\max_{0\leq t\leq T}\mu(t)\geq\Omega(1)$.
Since $K\eta\rightarrow\infty$ as $\epsilon\rightarrow 0$, we have $1-e^{-K\eta\max_{0\leq t\leq T}\mu(t)}=\Omega(1)$.
Therefore, it follows from \eqref{eqn:implies:that} that $\eta = \widetilde{\mathcal{O}}\left(\frac{\epsilon^{2}}{d}\right)$,
% \begin{equation*}
% \eta = \widetilde{\mathcal{O}}\left(\frac{\epsilon^{2}}{d}\right),
% \end{equation*}
where $\widetilde{\mathcal{O}}$ ignores the logarithmic dependence on $\epsilon$ and $d$
and we used the assumptions 
that $\min_{t\geq 0}g(t)>0$
and $\min_{t\geq 0}(f(t)+(g(t))^{2}L(t))>0$.
Hence, we conclude that we have the following lower bound for the complexity: $K=\widetilde{\Omega}\left(\frac{d}{\epsilon^{2}}\right),$
% \begin{equation*}
% K=\widetilde{\Omega}\left(\frac{d}{\epsilon^{2}}\right),
% \end{equation*}
where $\widetilde{\Omega}$ ignores the logarithmic dependence on $\epsilon$ and $d$.
This completes the proof.
\end{proof}

%%%%%%%%%%%%%%%%%%%%%%%%%%%%%%
%\vspace{-8mm}
\subsection{Proof of Proposition~\ref{prop:lower:bound:Gaussian}}

\begin{proof} %[Proof of Proposition~\ref{prop:lower:bound:Gaussian}]
% For the special case when
% $\mathbf{x}_{0}$ follows a Gaussian distribution
When $\mathbf{x}_{0}\sim\mathcal{N}(0,\sigma_{0}^{2}I_{d})$,
we can compute that
\begin{align*}
% &\nabla_{\mathbf{x}}\log p_{T-t}(\mathbf{x})
% \nonumber
% \\
\nabla_{\mathbf{x}}\log p_{T-t}(\mathbf{x}) &=\nabla_{\mathbf{x}}\log\int_{\mathbb{R}^{d}}\exp\left(-\frac{\Vert\mathbf{x}-e^{-\int_{0}^{T-t}f(s)ds}\mathbf{x}_{0}\Vert^{2}}{2\int_{0}^{T-t}e^{-2\int_{s}^{T-t}f(v)dv}(g(s))^{2}ds}\right)
p_{0}(\mathbf{x}_{0})d\mathbf{x}_{0}
\nonumber
\\
&=\nabla_{\mathbf{x}}\log\int_{\mathbb{R}^{d}}\exp\left(-\frac{\Vert\mathbf{x}-e^{-\int_{0}^{T-t}f(s)ds}\mathbf{x}_{0}\Vert^{2}}{2\int_{0}^{T-t}e^{-2\int_{s}^{T-t}f(v)dv}(g(s))^{2}ds}\right)
\exp\left(-\frac{\Vert\mathbf{x}_{0}\Vert^{2}}{2\sigma_{0}^{2}}\right)d\mathbf{x}_{0}
\nonumber
\\
&=\nabla_{\mathbf{x}}\log\int_{\mathbb{R}^{d}}\exp\left(-\frac{\Vert((a_{1}(T-t))^{2}\sigma_{0}^{2}+a_{2}(T-t))\mathbf{x}_{0}-a_{1}(T-t)\sigma_{0}^{2}\mathbf{x}\Vert^{2}}{2a_{2}(T-t)\sigma_{0}^{2}((a_{1}(T-t))^{2}\sigma_{0}^{2}+a_{2}(T-t))}\right)
\nonumber
\\
&\qquad\qquad\qquad\qquad\qquad\qquad\cdot
\exp\left(-\frac{\Vert\mathbf{x}\Vert^{2}}{2((a_{1}(T-t))^{2}\sigma_{0}^{2}+a_{2}(T-t))}\right)d\mathbf{x}_{0}
\nonumber
\\
&=-\frac{1}{(a_{1}(T-t))^{2}\sigma_{0}^{2}+a_{2}(T-t)}\mathbf{x},
\end{align*}
where
\begin{align}\label{a:b:T:t:defn}
a_{1}(T-t):=e^{-\int_{0}^{T-t}f(s)ds},
\qquad
a_{2}(T-t):=\int_{0}^{T-t}e^{-2\int_{s}^{T-t}f(v)dv}(g(s))^{2}ds.
\end{align}
Therefore, under the assumption $\mathbf{x}_{0}\sim\mathcal{N}(0,\sigma_{0}^{2}I_{d})$, 
the discretization of \eqref{eq:zt} is given by:
\begin{equation}\label{y:dynamics:linear}
\mathbf{y}_{k}=\left(1-\int_{(k-1)\eta}^{k\eta}\alpha(T-t)dt\right)\mathbf{y}_{k-1}
+\left(\int_{(k-1)\eta}^{k\eta}(g(T-t))^{2}dt\right)^{1/2}\xi_{k},
\end{equation}
where 
\begin{equation}\label{alpha:T:t:defn}
\alpha(T-t):=\frac{(g(T-t))^{2}}{(a_{1}(T-t))^{2}\sigma_{0}^{2}+a_{2}(T-t)}-f(T-t),   
\end{equation}
where $a_{1}(T-t)$ and $a_{2}(T-t)$ are defined in \eqref{a:b:T:t:defn},
and $\xi_{k}$ are i.i.d. Gaussian random vectors $\mathcal{N}(0,I_{d})$
and $\mathbf{y}_{0}$ follows the same distribution as $\hat{p}_{T}$ in \eqref{eq:hatp}.
% so that
% \begin{equation}
% \mathbf{y}_{0}=\int_{0}^{T}e^{-\int_{s}^{T}f(v)dv}g(s)d\mathbf{B}_{s}.
% \end{equation}

Since $\mathbf{y}_{K}$ and $\mathbf{x}_{0}$ are both centered Gaussian random vectors,
by using the explicit formula for the $\mathcal{W}_{2}$ distance between two Gaussian distributions, 
we have
\begin{equation*}
\mathcal{W}_{2}\left(\mathcal{L}(\mathbf{y}_{K}),\mathcal{L}(\mathbf{x}_{0})\right)
=\left(\text{Tr}\left(\Sigma_{K}+\sigma_{0}^{2}I_{d}-2\left(\Sigma_{K}^{1/2}\sigma_{0}^{2}I_{d}\Sigma_{K}^{1/2}\right)^{1/2}\right)\right)^{1/2},
\end{equation*}
where
$\Sigma_{k}=\mathbb{E}\left[\mathbf{y}_{k}\mathbf{y}_{k}^{\top}\right]$, $k=0,1,\ldots,K$,
is the covariance matrix of $\mathbf{y}_{k}$.
It is easy to compute that for any $k=1,2,\ldots,K$,
$\Sigma_{k}=\left(1-\int_{(k-1)\eta}^{k\eta}\alpha(T-t)dt\right)^{2}\Sigma_{k-1}
+\int_{(k-1)\eta}^{k\eta}(g(T-t))^{2}dt\cdot I_{d}$,
where $\alpha(T-t)$ is defined in \eqref{alpha:T:t:defn}
with
$\Sigma_{0}=\int_{0}^{T}e^{-2\int_{s}^{T}f(v)dv}(g(s))^{2}ds\cdot I_{d}$.
Therefore, one can deduce that
$\Sigma_{k}=\hat{\sigma}_{k}^{2}I_{d}$, $k=0,1,\ldots,K$,
where
\begin{equation}\label{hat:sigma:iterates}
\hat{\sigma}_{k}^{2}=\left(1-\int_{(k-1)\eta}^{k\eta}\alpha(T-t)dt\right)^{2}\hat{\sigma}_{k-1}^{2}
+\int_{(k-1)\eta}^{k\eta}(g(T-t))^{2}dt,
\end{equation}
with
$\hat{\sigma}_{0}^{2}=\int_{0}^{T}e^{-2\int_{s}^{T}f(v)dv}(g(s))^{2}ds$.
Moreover, we get
\begin{align}
\mathcal{W}_{2}\left(\mathcal{L}(\mathbf{y}_{K}),\mathcal{L}(\mathbf{x}_{0})\right)
=\left(\text{Tr}\left(\hat{\sigma}_{K}^{2}I_{d}+\sigma_{0}^{2}I_{d}-2\left(\hat{\sigma}_{K}\sigma_{0}^{2}\hat{\sigma}_{K}\right)^{1/2}I_{d}\right)\right)^{1/2}
=\sqrt{d}\left|\hat{\sigma}_{K}-\sigma_{0}\right|.
\end{align}
One can easily compute from \eqref{hat:sigma:iterates} that
\begin{align}
\hat{\sigma}_{K}^{2}
&=\sum_{j=1}^{K}\prod_{i=j+1}^{K}\left(1-\int_{(i-1)\eta}^{i\eta}\alpha(T-t)dt\right)^{2}
\int_{(j-1)\eta}^{j\eta}(g(T-t))^{2}dt
\nonumber
\\
&\qquad\qquad\qquad
+\prod_{i=1}^{K}\left(1-\int_{(i-1)\eta}^{i\eta}\alpha(T-t)dt\right)^{2}\int_{0}^{T}e^{-2\int_{s}^{T}f(v)dv}(g(s))^{2}ds,
\end{align}
where $\alpha(T-t)$ is defined in \eqref{alpha:T:t:defn}.

By discrete approximation of a Riemann integral, with fixed $K\eta=T$, 
we have
\begin{equation*}
\left|\hat{\sigma}_{K}^{2}-\int_{0}^{T}e^{-2\int_{t}^{T}\alpha(T-s)ds}(g(T-t))^{2}dt
-e^{-2\int_{0}^{T}\alpha(s)ds}\int_{0}^{T}e^{-2\int_{s}^{T}f(v)dv}(g(s))^{2}ds
\right|
=\Theta(\eta),
\end{equation*}
as $\eta\rightarrow 0$.
Indeed, one can show that there exists some $c_{0}\in\mathbb{R}$, such that
\begin{equation}\label{expansion:to:show}
\hat{\sigma}_{K}^{2}
=\int_{0}^{T}e^{-2\int_{t}^{T}\alpha(T-s)ds}(g(T-t))^{2}dt
+e^{-2\int_{0}^{T}\alpha(s)ds}\int_{0}^{T}e^{-2\int_{s}^{T}f(v)dv}(g(s))^{2}ds
+c_{0}\eta+\mathcal{O}(\eta^{2}),
\end{equation}
as $\eta\rightarrow 0$.
Next, let us show that \eqref{expansion:to:show} holds 
as well as spell out
the constant $c_{0}$ explicitly.

First, we can compute that
\begin{align*}
&\log 
\prod_{i=1}^{K}\left(1-\int_{(i-1)\eta}^{i\eta}\alpha(T-t)dt\right)^{2}
=2\sum_{i=1}^{K}\log\left(1-\int_{(i-1)\eta}^{i\eta}\alpha(T-t)dt\right)
\nonumber
\\
&=2\sum_{i=1}^{K}\left(-\int_{(i-1)\eta}^{i\eta}\alpha(T-t)dt-\frac{1}{2}\left(\int_{(i-1)\eta}^{i\eta}\alpha(T-t)dt\right)^{2}+\mathcal{O}(\eta^{3})\right),
\end{align*}
which implies that
\begin{align*}
\log 
\prod_{i=1}^{K}\left(1-\int_{(i-1)\eta}^{i\eta}\alpha(T-t)dt\right)^{2}
-\log\left(e^{-2\int_{0}^{T}\alpha(T-t)dt}\right)
% \nonumber
% \\
% &=-\sum_{i=1}^{K}\left(\int_{(i-1)\eta}^{i\eta}\alpha(T-t)dt\right)^{2}
% +\mathcal{O}(\eta^{2})
=-\eta\int_{0}^{T}(\alpha(T-t))^{2}dt+\mathcal{O}(\eta^{2}).
\end{align*}
Therefore, we have
\begin{align*}
\prod_{i=1}^{K}\left(1-\int_{(i-1)\eta}^{i\eta}\alpha(T-t)dt\right)^{2}
&=e^{-2\int_{0}^{T}\alpha(s)ds}
e^{\log 
\prod_{i=1}^{K}\left(1-\int_{(i-1)\eta}^{i\eta}\alpha(T-t)dt\right)^{2}
-\log\left(e^{-2\int_{0}^{T}\alpha(T-t)dt}\right)}
\nonumber
\\
&=e^{-2\int_{0}^{T}\alpha(s)ds}
\left(1-\eta\int_{0}^{T}(\alpha(T-t))^{2}dt+\mathcal{O}(\eta^{2})\right).
\end{align*}
Similarly, we can show that
\begin{align*}
&\sum_{j=1}^{K}\prod_{i=j+1}^{K}\left(1-\int_{(i-1)\eta}^{i\eta}\alpha(T-t)dt\right)^{2}
\int_{(j-1)\eta}^{j\eta}(g(T-t))^{2}dt
\nonumber
\\
% &=\sum_{j=1}^{K}e^{-2\int_{j\eta}^{K\eta}\alpha(T-t)dt}
% \int_{(j-1)\eta}^{j\eta}(g(T-t))^{2}dt
% \nonumber
% \\
% &\qquad\qquad\qquad
% -\eta\sum_{j=1}^{K}e^{-2\int_{j\eta}^{K\eta}\alpha(T-t)dt}
% \int_{j\eta}^{K\eta}(\alpha(T-t))^{2}dt
% \int_{(j-1)\eta}^{j\eta}(g(T-t))^{2}dt
% +\mathcal{O}(\eta^{2})\nonumber
% \\
&=\sum_{j=1}^{K}\int_{(j-1)\eta}^{j\eta}
e^{-2\int_{j\eta}^{K\eta}\alpha(T-s)ds}(g(T-t))^{2}dt
\nonumber
\\
&\qquad\qquad
-\eta\int_{0}^{T}e^{-2\int_{t}^{T}\alpha(T-s)ds}
\left(\int_{t}^{T}(\alpha(T-s))^{2}ds\right)(g(T-t))^{2}dt
+\mathcal{O}(\eta^{2}).
\end{align*}
Moreover, 
\begin{align*}
&\sum_{j=1}^{K}\int_{(j-1)\eta}^{j\eta}
e^{-2\int_{j\eta}^{K\eta}\alpha(T-s)ds}(g(T-t))^{2}dt
\nonumber
\\
% &=\sum_{j=1}^{K}\int_{(j-1)\eta}^{j\eta}
% e^{-2\int_{t}^{K\eta}\alpha(T-s)ds}
% \left(1-2\int_{j\eta}^{t}\alpha(T-s)ds+\mathcal{O}(\eta^{2})\right)(g(T-t))^{2}dt
% \nonumber
% \\
&=\int_{0}^{T}
e^{-2\int_{t}^{T}\alpha(T-s)ds}(g(T-t))^{2}dt
+\eta\int_{0}^{T}
e^{-2\int_{t}^{T}\alpha(T-s)ds}\alpha(T-t)(g(T-t))^{2}dt
+\mathcal{O}(\eta^{2}).
\end{align*}
Hence, we conclude that \eqref{expansion:to:show} holds with
% \begin{align}
% c_{0}&:=-e^{-2\int_{0}^{T}\alpha(s)ds}\int_{0}^{T}(\alpha(T-t))^{2}dt
% \int_{0}^{T}e^{-2\int_{s}^{T}f(v)dv}(g(s))^{2}ds
% \nonumber
% \\
% &\qquad\qquad
% -\int_{0}^{T}e^{-2\int_{t}^{T}\alpha(T-s)ds}
% \left(\int_{t}^{T}(\alpha(T-s))^{2}ds\right)(g(T-t))^{2}dt
% \nonumber
% \\
% &\qquad\qquad\qquad\qquad
% +\int_{0}^{T}e^{-2\int_{t}^{T}\alpha(T-s)ds}\alpha(T-t)(g(T-t))^{2}dt,
% \end{align}
% which can be rewritten as
\begin{align}
c_{0}&=-e^{-2\int_{0}^{T}\alpha(s)ds}\int_{0}^{T}(\alpha(t))^{2}dt
\int_{0}^{T}e^{-2\int_{s}^{T}f(v)dv}(g(s))^{2}ds
\nonumber
\\
&\qquad
-\int_{0}^{T}e^{-2\int_{0}^{t}\alpha(s)ds}
\left(\int_{0}^{t}(\alpha(s))^{2}ds\right)(g(t))^{2}dt
+\int_{0}^{T}e^{-2\int_{0}^{t}\alpha(s)ds}\alpha(t)(g(t))^{2}dt.\label{c:1:formula}
\end{align}
%where we recall that $\alpha(t)$ defined in \eqref{alpha:T:t:defn}. 
% We assume that $f,g$ are given such that
% \begin{align}\label{assc0}
% c_{0}=\Theta(1), \qquad \text{as $T \rightarrow \infty$}.
% \end{align}
% Assumption \eqref{assc0} is a mild assumption on $f,g$. In particular, one can easily verify that all the VP-SDE examples in Table~\ref{table:summary} satisfy \eqref{assc0}.

On the other hand, $\tilde{\mathbf{x}}_{T-t}$ has the same distribution as $\mathbf{x}_{t}$
for any $0\leq t\leq T$, where
\begin{equation*}
d\tilde{\mathbf{x}}_{t}=-\alpha(T-t)\tilde{\mathbf{x}}_{t}dt
+g(T-t)d\bar{\mathbf{B}}_{t},
\end{equation*}
with $\tilde{\mathbf{x}}_{0}\sim p_{T}$ such that
\begin{equation*}
\tilde{\mathbf{x}}_{0}=e^{-\int_{0}^{T}f(s)ds}\mathbf{x}_{0}+\int_{0}^{T}e^{-\int_{s}^{T}f(v)dv}g(s)d\mathbf{B}_{s}.
\end{equation*}
Since  $\tilde{\mathbf{x}}_{T}$ has the same distribution as $\mathbf{x}_{0}$,
their covariance matrices are the same such that
\begin{align*}
&\mathbb{E}\left[\tilde{\mathbf{x}}_{T}\tilde{\mathbf{x}}_{T}^{\top}\right]
=\int_{0}^{T}e^{-2\int_{t}^{T}\alpha(T-s)ds}(g(T-t))^{2}dt\cdot I_{d}
\nonumber
\\
&\qquad
+e^{-2\int_{0}^{T}\alpha(s)ds}\left(e^{-2\int_{0}^{T}f(s)ds}\sigma_{0}^{2}
+\int_{0}^{T}e^{-2\int_{s}^{T}f(v)dv}(g(s))^{2}ds\right)\cdot I_{d}
=\mathbb{E}\left[\mathbf{x}_{0}\mathbf{x}_{0}^{\top}\right]=\sigma_{0}^{2}I_{d}.
\end{align*}
This implies that
\begin{equation}\label{sigma:K:2:sigma:0:2}
\hat{\sigma}_{K}^{2}
=\sigma_{0}^{2}-e^{-2\int_{0}^{T}\alpha(s)ds}e^{-2\int_{0}^{T}f(s)ds}\sigma_{0}^{2}
+c_{0}\eta+\mathcal{O}(\eta^{2}),    
\end{equation}
where $\alpha(\cdot)$ is defined in \eqref{alpha:T:t:defn}
and $c_{0}$ is given in \eqref{c:1:formula}, and by the definition of $\alpha(\cdot)$, 
we can equivalently write \eqref{sigma:K:2:sigma:0:2}
as
\begin{equation}\label{sigma:K:2:sigma:0:2:2}
\hat{\sigma}_{K}^{2}
=\sigma_{0}^{2}-e^{-2\int_{0}^{T}\gamma(s)ds}\sigma_{0}^{2}
+c_{0}\eta+\mathcal{O}(\eta^{2}),    
\end{equation}
where
\begin{equation}\label{defn:gamma:t}
\gamma(t):=\frac{(g(t))^{2}}{(a_{1}(t))^{2}\sigma_{0}^{2}+a_{2}(t)},
\end{equation}
with $a_{1}(\cdot),a_{2}(\cdot)$ defined in \eqref{a:b:T:t:defn}.

If there exists some $T=T(\epsilon)$ and $\bar{\eta}=\bar{\eta}(\epsilon)$ 
such that
$\mathcal{W}_{2}(\mathcal{L}(\mathbf{y}_{K}),p_{0})\leq\epsilon$
for any $K\geq\bar{K}:=T/\bar{\eta}$ (with $\eta=T/K\leq\bar{\eta}$)
then we must have
$\mathcal{W}_{2}\left(\mathcal{L}(\mathbf{y}_{K}),\mathcal{L}(\mathbf{x}_{0})\right)
=\sqrt{d}\left|\hat{\sigma}_{K}-\sigma_{0}\right|
\leq\epsilon$,
for any $K\geq\bar{K}:=T/\bar{\eta}$ so that
\begin{align}\label{square:upper:bound}
\left|\hat{\sigma}_{K}^{2}-\sigma_{0}^{2}\right|
=|\hat{\sigma}_{K}-\sigma_{0}|
(\hat{\sigma}_{K}+\sigma_{0})
\leq
\frac{\epsilon}{\sqrt{d}}\left(2\sigma_{0}+\frac{\epsilon}{\sqrt{d}}\right),
\end{align}
for any $K\geq\bar{K}:=T/\bar{\eta}$ (with $\eta=T/K\leq\bar{\eta}$).
Then, it follows from \eqref{sigma:K:2:sigma:0:2:2} and \eqref{square:upper:bound} that
\begin{equation}\label{square:upper:bound:2}
\left|-e^{-2\int_{0}^{T}\gamma(s)ds}\sigma_{0}^{2}
+c_{0}\eta+\mathcal{O}(\eta^{2})\right|
\leq\mathcal{O}\left(\epsilon/\sqrt{d}\right).
\end{equation}
Since \eqref{square:upper:bound:2} holds for any $\eta\leq\bar{\eta}$, 
by letting $\eta\rightarrow 0$ in \eqref{square:upper:bound:2}, we get
\begin{equation}\label{square:upper:bound:3}
e^{-2\int_{0}^{T}\gamma(s)ds}\sigma_{0}^{2}
\leq\mathcal{O}\left(\epsilon/\sqrt{d}\right).
\end{equation}
By the definition of $\gamma(t)$ in \eqref{defn:gamma:t}, 
it is positive and continuous for any $t>0$ and therefore,
it follows from \eqref{square:upper:bound:3}
that $T\rightarrow\infty$ as $\epsilon\rightarrow 0$, 
and in particular $T\geq\Omega(1)$. 
Finally, by letting $\eta=\bar{\eta}$ in \eqref{square:upper:bound:2},
applying \eqref{square:upper:bound:3}, 
we deduce that $\bar{\eta}\leq\mathcal{O}\left(\frac{\epsilon}{\sqrt{d}}\right)$.
Hence, by $T\geq\Omega(1)$, 
we conclude that
$\bar{K}=\frac{T}{\bar{\eta}}\geq\mathcal{O}\left(\frac{\sqrt{d}}{\epsilon}\right)$.
This completes the proof.  
\end{proof}

\section{Conclusion and Future Work}

%Relax the assumptions. Alternative discretization scheme. 
In this paper, we establish convergence guarantees for a general class of score-based generative models in the 2-Wasserstein distance for smooth log-concave data distributions. Our theoretical result directly leads to iteration complexity bounds for various score-based generative models with different forward processes. Moreover, our experimental results align well with our theoretical predictions on the iteration complexity. %, and our newly proposed model can outperform existing models in terms of improved FID scores.

Our work serves as a first step towards a better understanding of the impacts of different choices of forward processes in the SDE implementation of diffusion models.  
It is a significant open question how to relax the assumption of strong log-concavity on the data distribution.
Our convergence analysis borrows the idea of synchronous coupling used in sampling with Langevin algorithms \citep{DK2017}. To go beyond the log-concave setting, one may consider more sophisticated coupling methods such as reflection coupling (see e.g. \cite{eberle2016reflection}) to obtain contraction rates of SDEs in Wasserstein distance. However, it is not clear whether the reflection coupling can be applied to the convergence analysis of score-based diffusion models, because the reverse SDE is time-inhomogeneous and it is also unclear whether the coefficients in the reverse SDE satisfy a dissipativity-type condition in \cite{eberle2016reflection}. In addition to considering weaker conditions on the target data distribution, 
another interesting direction is to study the convergence theory for alternative sampling schemes (beyond the Euler-Maruyama discretization of reverse SDEs), such as (stochastic) EDM in \cite{Karras2022}. Furthermore, it is also important to study the complexity of score estimations and establish an end-to-end convergence theory for diffusion models (see e.g. \cite{chen2023score, han2024neural} for some recent progress). Finally, our empirical analysis focuses on image generation using CIFAR-10 data exclusively. It would be intriguing to explore additional datasets and tasks. We leave them for future research.

\bibliography{generative}

%%%%%%%%%%%%%%%%%%%%%%%
\newpage
\appendix

\section{Additional Technical Proofs}

\subsection{Proof of Lemma~\ref{lem:smooth}}\label{sec:proof:lem:smooth}

\begin{proof}
First of all, by following the proof of Proposition~\ref{thm:1},
we have
\begin{equation}\label{convolution:eqn}
p_{T-t}(\mathbf{x})
=\int_{\mathbb{R}^{d}}q_{1}(\mathbf{x}-\mathbf{x}_{0})q_{0}(\mathbf{x}_{0})d\mathbf{x}_{0},
\end{equation}
where
\begin{align*}
q_{1}(\mathbf{x}):=\frac{\exp\left(-\frac{\Vert\mathbf{x}\Vert^{2}}{2\int_{0}^{T-t}e^{-2\int_{s}^{T-t}f(v)dv}(g(s))^{2}ds}\right)}{\left(2\pi\int_{0}^{T-t}e^{-2\int_{s}^{T-t}f(v)dv}(g(s))^{2}ds\right)^{d/2}},
\,\,
q_{0}(\mathbf{x}):=\left(e^{\int_{0}^{T-t}f(s)ds}\right)^{d}p_{0}\left(e^{\int_{0}^{T-t}f(s)ds}\mathbf{x}\right).
\end{align*}
Let $\mathbf{X}_{1}$ and $\mathbf{X}_{0}$ be two independent random vectors
with densities $q_{1}$ and $q_{0}$ respectively. Then it follows from \eqref{convolution:eqn} that
$p_{T-t}$ is the density
of $\mathbf{X}_{1}+\mathbf{X}_{0}$. 
Moreover, let us write:
$q_{1}(\mathbf{x})=e^{-\varphi_{1}(\mathbf{x})}$
and $q_{0}(\mathbf{x})=e^{-\varphi_{0}(\mathbf{x})}$.
Then it follows from the proof of Proposition~7.1. in \cite{Saumard2014} that
\begin{align}
\nabla^{2}(-\log p_{T-t})(\mathbf{x})
&=-\text{Var}(\nabla\varphi_{0}(\mathbf{X}_{0})|\mathbf{X}_{0}+\mathbf{X}_{1}=\mathbf{x})
+\mathbb{E}[\nabla^{2}\varphi_{0}(\mathbf{X}_{0})|\mathbf{X}_{0}+\mathbf{X}_{1}=\mathbf{x}]\label{key:equality}
\\
&=-\text{Var}(\nabla\varphi_{1}(\mathbf{X}_{1})|\mathbf{X}_{0}+\mathbf{X}_{1}=\mathbf{x})
+\mathbb{E}[\nabla^{2}\varphi_{1}(\mathbf{X}_{1})|\mathbf{X}_{0}+\mathbf{X}_{1}=\mathbf{x}].\label{key:equality:2}
\end{align}
Note that %$\varphi_{1}$ is $\left(\int_{0}^{T-t}e^{-2\int_{s}^{T-t}f(v)dv}(g(s))^{2}ds\right)^{-1}$-strongly-convex
%and $\varphi_{0}$ is $\left(e^{\int_{0}^{T-t}f(s)ds}\right)^{2}m_{0}$-strongly-convex
%such that by Proposition~7.1. in \cite{Saumard2014}, 
it follows from the proof of Proposition~\ref{sec:thm1} that
\begin{equation}\label{lower:Hessian}
\nabla^{2}(-\log p_{T-t})(\mathbf{x})
\succeq
\left(\int_{0}^{T-t}e^{-2\int_{s}^{T-t}f(v)dv}(g(s))^{2}ds+e^{-\int_{0}^{T-t}f(s)ds}m_{0}^{-1}\right)^{-1}\cdot I_{d}.
\end{equation}
On the other hand, 
$\text{Var}(\nabla\varphi_{0}(\mathbf{X}_{0})|\mathbf{X}_{0}+\mathbf{X}_{1}=\mathbf{x})\succeq 0_{d\times d}$,
$\text{Var}(\nabla\varphi_{1}(\mathbf{X}_{1})|\mathbf{X}_{0}+\mathbf{X}_{1}=\mathbf{x})\succeq 0_{d\times d}$,
and $\nabla\log p_{0}$ is $L_{0}$-Lipschitz so that
$\nabla^{2}\varphi_{0}\preceq \left(e^{\int_{0}^{T-t}f(s)ds}\right)^{2}L_{0}\cdot I_{d}$,
and moreover, 
$\nabla^{2}\varphi_{1}\preceq \left(\int_{0}^{T-t}e^{-2\int_{s}^{T-t}f(v)dv}(g(s))^{2}ds\right)^{-1}\cdot I_{d}$.
Together with \eqref{key:equality} and \eqref{key:equality:2}, we have
\begin{equation}\label{upper:Hessian}
\nabla^{2}(-\log p_{T-t})(\mathbf{x})\preceq\min\left(\left(\int_{0}^{T-t}e^{-2\int_{s}^{T-t}f(v)dv}(g(s))^{2}ds\right)^{-1},
\left(e^{\int_{0}^{T-t}f(s)ds}\right)^{2}L_{0}\right)\cdot I_{d}.
\end{equation}
Hence, it follows from \eqref{lower:Hessian} and \eqref{upper:Hessian}
that $\log p_{T-t}$ is $L(T-t)$-Lipschitz, where $L(T-t)$ is given in \eqref{eq:Lt}.
%where
%\begin{equation*}
%L(T-t):=\min\left(\left(\int_{0}^{T-t}e^{-2\int_{s}^{T-t}f(v)dv}(g(s))^{2}ds\right)^{-1},
%\left(e^{\int_{0}^{T-t}f(s)ds}\right)^{2}L_{0}\right).
%\end{equation*}
This completes the proof.
\end{proof}

%\vspace{-5mm}
%%%%%%%%%%%%%%%%%%%%%%%%%%%%%%%%%
\subsection{Proof of Lemma~\ref{lem:second:term}}\label{app:2ndterm}
\begin{proof}%[Proof of Lemma~\ref{lem:second:term}]
We can compute that for any $(k-1)\eta\leq t\leq k\eta$,
%\begin{align*}
%\mathbf{z}_{t}-\mathbf{z}_{(k-1)\eta}
%=\int_{(k-1)\eta}^{t}\left[f(T-s)\mathbf{z}_{s}+(g(T-s))^{2}\nabla\log p_{T-s}(\mathbf{z}_{s})\right]ds
%+\int_{(k-1)\eta}^{t}g(T-s)d\bar{\mathbf{B}}_{s},
%\end{align*}
%and moreover
%\begin{align*}
%\tilde{\mathbf{x}}_{t}-\tilde{\mathbf{x}}_{(k-1)\eta}
%=\int_{(k-1)\eta}^{t}\left[f(T-s)\tilde{\mathbf{x}}_{s}+(g(T-s))^{2}\nabla\log p_{T-s}(\tilde{\mathbf{x}}_{s})\right]ds
%+\int_{(k-1)\eta}^{t}g(T-s)d\bar{\mathbf{B}}_{s},
%\end{align*}
%so that
\begin{align*}
&\mathbf{z}_{t}-\mathbf{z}_{(k-1)\eta}
=\tilde{\mathbf{x}}_{t}-\tilde{\mathbf{x}}_{(k-1)\eta}
\\
&\qquad+\int_{(k-1)\eta}^{t}\left[f(T-s)(\mathbf{z}_{s}-\tilde{\mathbf{x}}_{s})+(g(T-s))^{2}\left(\nabla\log p_{T-s}(\mathbf{z}_{s})-\nabla\log p_{T-s}(\tilde{\mathbf{x}}_{s})\right)\right]ds,
\end{align*}
and therefore
\begin{align*}
&\left\Vert\mathbf{z}_{t}-\mathbf{z}_{(k-1)\eta}\right\Vert_{L_{2}}
\\
&\leq\left\Vert\tilde{\mathbf{x}}_{t}-\tilde{\mathbf{x}}_{(k-1)\eta}\right\Vert_{L_{2}}
+\int_{(k-1)\eta}^{t}\left[f(T-s)+(g(T-s))^{2}L(T-s)\right]\Vert\mathbf{z}_{s}-\tilde{\mathbf{x}}_{s}\Vert_{L_{2}}ds.
\end{align*}
We obtained in the proof of Proposition~\ref{thm:1} that
$$
\Vert\mathbf{z}_{t}-\tilde{\mathbf{x}}_{t}\Vert_{L_{2}}
\leq e^{-\frac{1}{2}\int_{0}^{t}m(T-s)ds}\Vert\mathbf{z}_{0}-\tilde{\mathbf{x}}_{0}\Vert_{L_{2}},
$$
and moreover, from the proof of Proposition~\ref{thm:1}, we have $\Vert\mathbf{z}_{0}-\tilde{\mathbf{x}}_{0}\Vert_{L_{2}}\leq e^{-\int_{0}^{T}f(s)ds}\Vert\mathbf{x}_{0}\Vert_{L_{2}}$
so that
$$
\Vert\mathbf{z}_{t}-\tilde{\mathbf{x}}_{t}\Vert_{L_{2}}
\leq e^{-\frac{1}{2}\int_{0}^{t}m(T-s)ds}e^{-\int_{0}^{T}f(s)ds}\Vert\mathbf{x}_{0}\Vert_{L_{2}}.
% \nonumber
% \\
% &\leq
% e^{-\frac{1}{2}\int_{0}^{t}m(T-s)ds}e^{-\int_{0}^{T}f(s)ds}\left(\sqrt{2d/m_{0}}+\Vert\mathbf{x}_{\ast}\Vert\right),
$$
%where we applied \eqref{eq:L2-x0}.
Therefore, we have
\begin{align} \label{eq:c1Tsource}
\left\Vert\mathbf{z}_{t}-\mathbf{z}_{(k-1)\eta}\right\Vert_{L_{2}}
\leq\left\Vert\tilde{\mathbf{x}}_{t}-\tilde{\mathbf{x}}_{(k-1)\eta}\right\Vert_{L_{2}}
+c_{1}(T)\int_{(k-1)\eta}^{k\eta}\left[f(T-s)+(g(T-s))^{2}L(T-s)\right]ds,
\end{align}
where $c_{1}(T)$ bounds $ \sup_{0 \le t \le T}\Vert\mathbf{z}_{t}-\tilde{\mathbf{x}}_{t}\Vert_{L_{2}}$ and it is given in \eqref{c:1:defn}. 
% \begin{equation}
% c_{1}(T):=\sup_{0\leq t\leq T}e^{-\frac{1}{2}\int_{0}^{t}m(T-s)ds}e^{-\int_{0}^{T}f(s)ds}\left(\sqrt{2d/m_{0}}+\Vert\mathbf{x}_{\ast}\Vert\right).
% \end{equation}
% which is just \eqref{c:1:defn}. 
Moreover, we recall that the backward process $(\tilde{\mathbf{x}}_{t})_{0\leq t\leq T}$
has the same distribution as the forward process $(\mathbf{x}_{T-t})_{0\leq t\leq T}$, 
so that
$\left\Vert\tilde{\mathbf{x}}_{t}-\tilde{\mathbf{x}}_{(k-1)\eta}\right\Vert_{L_{2}}
=\left\Vert\mathbf{x}_{T-t}-\mathbf{x}_{T-(k-1)\eta}\right\Vert_{L_{2}}$,
where $\mathbf{x}_{t}$ satisfies the SDE:
$d\mathbf{x}_{t}=-f(t)\mathbf{x}_{t}dt+g(t)d\mathbf{B}_{t}$,
with $\mathbf{x}_{0}\sim p_{0}$.
Therefore, we have
$$
\mathbf{x}_{T-(k-1)\eta}
-\mathbf{x}_{T-t}
=-\int_{T-t}^{T-(k-1)\eta}f(s)\mathbf{x}_{s}ds+\int_{T-t}^{T-(k-1)\eta}g(s)d\mathbf{B}_{s}.
$$
We can compute that
\begin{small}
\begin{align*}
\Vert\mathbf{x}_{T-(k-1)\eta}-\mathbf{x}_{T-t}\Vert_{L_{2}}
&\leq
\int_{T-t}^{T-(k-1)\eta}f(s)\Vert\mathbf{x}_{s}\Vert_{L_{2}}ds
+\left\Vert\int_{T-t}^{T-(k-1)\eta}g(s)d\mathbf{B}_{s}\right\Vert_{L_{2}}
\\
&\leq
\sup_{0\leq t\leq T}\Vert\mathbf{x}_{t}\Vert_{L_{2}}\int_{T-k\eta}^{T-(k-1)\eta}f(s)ds
+\left(\int_{T-k\eta}^{T-(k-1)\eta}(g(s))^{2}ds\right)^{1/2}\sqrt{d},
\end{align*}
\end{small}
where we used It\^{o}'s isometry. 
%%%%%%%%%%%%%%%%%%%%%%%%%%%
Therefore, we have
\begin{small}
\begin{align*}
\Vert\mathbf{x}_{T-(k-1)\eta}
-\mathbf{x}_{T-t}\Vert_{L_{2}}
\leq
c_{2}(T)\int_{T-k\eta}^{T-(k-1)\eta}f(s)ds
+\left(\int_{T-k\eta}^{T-(k-1)\eta}(g(s))^{2}ds\right)^{1/2}\sqrt{d},
\end{align*}
\end{small}
where we recall from \eqref{c:2:source} that $c_{2}(T)=\sup_{0\leq t\leq T}\Vert\mathbf{x}_{t}\Vert_{L_{2}}$
with an explicit formula given in \eqref{c:2:defn}. It follows that Lemma~\ref{lem:second:term} holds. 
% (see the derivation that leads
% to \eqref{c:2:source} in the proof of Proposition~\ref{prop:iterates})
% % \begin{equation}
% % c_{2}(T):=\sup_{0\leq t\leq T}\left(e^{-2\int_{0}^{t}f(s)ds}\left(\sqrt{2d/m_{0}}+\Vert\mathbf{x}_{\ast}\Vert
% % \right)^{2}
% % +d\int_{0}^{t}e^{-2\int_{s}^{t}f(v)dv}(g(s))^{2}ds\right)^{1/2}.
% % \end{equation}
% Hence, we conclude that uniformly for $(k-1)\eta\leq t\leq k\eta$,
% \begin{align*}
% \left\Vert\mathbf{z}_{t}-\mathbf{z}_{(k-1)\eta}\right\Vert_{L_{2}}
% &\leq c_{1}(T)\int_{(k-1)\eta}^{k\eta}\left[f(T-s)+(g(T-s))^{2}L(T-s)\right]ds
% \\
% &\qquad
% +c_{2}(T)\int_{T-k\eta}^{T-(k-1)\eta}f(s)ds
% +\left(\int_{T-k\eta}^{T-(k-1)\eta}(g(s))^{2}ds\right)^{1/2}\sqrt{d}.
% \end{align*}
% This completes the proof.
\end{proof}

%%%%%%%%%%%%%%%%%%%%%%%%

%%%%%%%%%%%%%%%%%%%%%%%%%

\section{Derivation of Results in Table~\ref{table:summary}}\label{appendix:examples}

In this section, we prove the results that are summarized in Table~\ref{table:summary}. We discuss variance exploding SDEs in Appendix~\ref{app:VESDE}, variance preserving SDEs in Appendix~\ref{app:VPSDE}, and constant coefficient SDEs in Appendix~\ref{app:ConstSDE}.

%%%%%%%%%%%%%%%%%%%%%%%%%%%
\subsection{Variance-Exploding SDEs} \label{app:VESDE}

In this section, we consider  variance-exploding SDEs 
with $f(t)\equiv 0$ in the forward process \eqref{OU:SDE}. We can immediately obtain the following corollary of Theorem~\ref{thm:discrete:2}.

\begin{corollary}\label{cor:VE:4}
Assume that Assumptions~\ref{assump:p0}, \ref{assump:M:1}, \ref{assump:M} and \ref{assump:stepsize} hold.
Then, we have
%\begin{small}
\begin{align}
&\mathcal{W}_{2}(\mathcal{L}(\mathbf{y}_{K}),p_{0})
\leq e^{-\int_{0}^{K\eta}c(t)dt} \Vert\mathbf{x}_{0}\Vert_{L_{2}} 
\nonumber
\\
&\quad
+\sum_{k=1}^{K}\prod_{j=k+1}^{K}\gamma_{j,\eta}\cdot\Bigg(
M_{1}\eta\left(1+2  \Vert\mathbf{x}_{0}\Vert_{L_{2}} 
+\sqrt{d}\left(\int_{0}^{T}(g(t))^{2}dt\right)^{1/2}\right)\int_{(k-1)\eta}^{k\eta}(g(T-t))^{2}dt
\nonumber
\\
&\qquad
+M\int_{(k-1)\eta}^{k\eta}(g(T-t))^{2}dt+\sqrt{\eta}h_{k,\eta}\left(\int_{(k-1)\eta}^{k\eta}(g(T-t))^{4}(L(T-t))^{2}dt\right)^{1/2}\Bigg). 
\end{align}
%\end{small}
% where for any $0\leq t\leq T$:
% \begin{align}\label{mu:definition:f:0}
% \mu(T-t):=\frac{(g(T-t))^{2}}{\frac{1}{m_{0}}+\int_{0}^{T-t}(g(s))^{2}ds}
% -\eta(g(T-t))^{4}(L(T-t))^{2},
% \end{align}
% where $L(T-t):=\min\left(\left(\int_{0}^{T-t}(g(s))^{2}ds\right)^{-1},L_{0}\right)$ and
% \begin{align}\label{h:k:eta:main:f:0}
% h_{k,\eta}:=c_{1}(T)\int_{(k-1)\eta}^{k\eta}(g(T-s))^{2}L(T-s)ds
% +\left(\int_{T-k\eta}^{T-(k-1)\eta}(g(s))^{2}ds\right)^{1/2}\sqrt{d},
% \end{align}
% where
% \begin{align}\label{c:1:defn:f:0}
% c_{1}(T):=\sup_{0\leq t\leq T}e^{-\int_{0}^{t}c(T-s)ds}\Vert\mathbf{x}_{0}\Vert_{L_{2}}
% =\Vert\mathbf{x}_{0}\Vert_{L_{2}},
% \end{align}
% %where the second equality above holds
% %since $c(T-s)\geq 0$ for any $0\leq s\leq t$
% %and where the supremum in the equation above is achieved at $t=0$, 
% where $c(T-t)$ is defined as:
% \begin{equation}\label{c:t:f:0}
% c(T-t):=\frac{(g(T-t))^{2}}{\frac{1}{m_{0}}+\int_{0}^{T-t}(g(s))^{2}ds}.
% \end{equation}
\end{corollary}

% \begin{proof}%[Proof of Corollary~\ref{cor:VE:4}]
% It follows immediately from Theorem~\ref{thm:discrete:2} by setting $f(t)\equiv 0$.
% \end{proof}

In the next few sections, we consider special functions $g$ in Corollary~\ref{cor:VE:4} and
derive the corresponding results in Table~\ref{table:summary}.

%%%%%%%%%%%%%%%%%%%%%%%%%%%%%%%%%%%%%%%%%%%%%%%%%%%%%%%%%%
\subsubsection{Example 1: $f(t)\equiv 0$ and $g(t)=ae^{bt}$}
When $g(t)=ae^{bt}$ for some $a,b>0$, 
we can obtain the following result from Corollary~\ref{cor:VE:4}.

\begin{corollary}\label{cor:VE:5}
Let $g(t)=ae^{bt}$ for some $a,b>0$.
Then, we have $\mathcal{W}_{2}(\mathcal{L}(\mathbf{y}_{K}),p_{0})\leq\mathcal{O}(\epsilon)$
after $K=\mathcal{O}\left(\frac{d\log(d/\epsilon)}{\epsilon^{2}}\right)$ iterations
provided that $M\leq\frac{\epsilon}{\log(1/\epsilon)}$ and
$\eta\leq\frac{\epsilon^{2}}{d}$.
\end{corollary}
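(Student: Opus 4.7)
The plan is to instantiate Corollary~\ref{cor:VE:4} for $g(t)=ae^{bt}$ and then tune $T$, $\eta$, $M$ to make the three kinds of error in the bound simultaneously $O(\epsilon)$. The closed form $\int_0^t g(s)^2 ds = \frac{a^2}{2b}(e^{2bt}-1)$ makes every ingredient explicit. One checks that $c(t)$ defined in \eqref{c:t:f:0} satisfies $c(0)=a^2 m_0$ and $c(t)\to 2b$ as $t\to\infty$, so $c(t)\geq c_\star$ uniformly on $[0,\infty)$ for some constant $c_\star>0$. Similarly $(g(T-t))^2 L(T-t)$ is uniformly bounded (either by $a^2 L_0$ via the $L_0$ branch of the min in \eqref{eq:Lt}, or by a constant close to $2b$ via the $1/\int g^2$ branch), and these estimates together yield $\mu(T-t)\geq c_\star/2$ uniformly, provided $\eta$ is at most some absolute constant. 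With this same constant restriction Assumption~\ref{assump:stepsize} is then also verified.

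Next I handle the three error terms in turn. For the initialization term $e^{-\int_0^{K\eta}c(t)\,dt}\,\|\mathbf{x}_0\|_{L_2}$, using $\|\mathbf{x}_0\|_{L_2}=O(\sqrt{d})$ by \eqref{eq:L2-x0} and $c(t)\geq c_\star$, this piece is bounded by $e^{-c_\star T}\cdot O(\sqrt{d})$, which is $O(\epsilon)$ as soon as $T\geq \tfrac{1}{c_\star}\log(\sqrt{d}/\epsilon)$, i.e.\ $T=O(\log(d/\epsilon))$. For the summation term, the lower bound $\mu\geq c_\star/2$ together with the upper bound on $M_1\eta g^2$ (made smaller than $c_\star/4$ by taking $\eta$ at most an absolute constant) ensures that each factor in $\prod_{j=k+1}^K(\cdots)$ is at most $1-c_\star\eta/4$, so the whole product is bounded by $\exp(-c_\star(K-k)\eta/4)$. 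Plugging $g(t)=ae^{bt}$ into the per-step errors produces contributions of order (a) $M\eta e^{2b(T-k\eta)}$ from score matching, (b) $M_1\eta^2(1+\sqrt{d}+c_2(T))\,e^{2b(T-k\eta)}$ from the time-Lipschitz term, and (c) $\sqrt{\eta}\cdot h_{k,\eta}\cdot\sqrt{\eta}$ times bounded integrals from the discretization, where $h_{k,\eta}\lesssim \sqrt{d}\,(1+e^{bT})$. Summing each with the geometric contraction yields, in the borderline regime where $c_\star/4$ and $2b$ are comparable, a bound of order (per-step error)$\cdot T$. Specifically term (a) contributes $O(MT)$, which is $O(\epsilon)$ provided $M\leq \epsilon/T=\epsilon/\log(1/\epsilon)$; terms (b) and (c), using $c_2(T)^2\lesssim d\cdot e^{2bT}$ from \eqref{c:2:source}, are $O(\sqrt{\eta d}\cdot T\cdot e^{bT})$, which is $O(\epsilon)$ when $\eta\leq\epsilon^2/d$ after absorbing polylog factors. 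Combining, $K=T/\eta=O\!\bigl(d\log(d/\epsilon)/\epsilon^2\bigr)$, as claimed.

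The main obstacle will be controlling the interplay between the exponential growth of $g^2$ and the exponential contraction from $\mu$, since the two rates both approach $2b$ as $T\to\infty$. In this borderline regime the geometric series $\sum_k e^{(2b-\mu_\star)(T-k\eta)}$ arising in the score-matching sum does not decay and contributes a factor of $T=O(\log(d/\epsilon))$ rather than $O(1)$; this is precisely the origin of the logarithmic factor in the admissible size of $M$. A careful choice of $c_\star$ and tracking of constants via the asymptotic relations $c(t)\to 2b$ and $(g^2 L)(t)\to 2b$ is needed to rule out unexpected blow-ups, and for the $\sqrt{d}$-scaled quantities in $c_2(T)$ and $h_{k,\eta}$ one must really use the stronger condition $\eta=O(\epsilon^2/d)$ (rather than merely $\eta=O(\epsilon^2)$) to absorb the extra $\sqrt{d}$ inherited from the second-moment bound in \eqref{c:2:source}.
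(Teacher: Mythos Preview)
There is a real gap in the contraction argument. Your uniform lower bound $\mu(T-t)\geq c_\star/2$ (hence product factors $\leq 1-c_\star\eta/4$) is too weak to cancel the exponential growth of the per-step error. From the explicit formula $c(t)=\dfrac{2b\,m_0a^2 e^{2bt}}{2b-m_0a^2+m_0a^2e^{2bt}}$ one sees that $c(t)$ is monotone between $c(0)=m_0a^2$ and $\lim_{t\to\infty}c(t)=2b$, so $c_\star=\min(m_0a^2,2b)\leq 2b$ and therefore $c_\star/4\leq b/2$ always. Meanwhile the score-matching increment is $M\!\int_{(k-1)\eta}^{k\eta}g(T-t)^2\,dt\asymp M\eta\,e^{2b(K-k)\eta}$, so your sum becomes $M\eta\sum_{k}e^{(2b-c_\star/4)(K-k)\eta}$ with exponent at least $\tfrac{3b}{2}>0$; this is a \emph{divergent} geometric series of size $\asymp M\,e^{(2b-c_\star/4)T}$, which with $T\asymp\log(\sqrt d/\epsilon)$ forces $M$ to be polynomially small in $\epsilon/\sqrt d$, not $\epsilon/\log(1/\epsilon)$. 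The same mismatch hits terms (b) and (c). The phrase ``borderline regime where $c_\star/4$ and $2b$ are comparable'' is never actually realized.

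The paper avoids this by computing the contraction integral exactly rather than bounding it by $c_\star\cdot(\text{length})$: one has $\int_0^{(K-k)\eta}c(s)\,ds=\log\!\bigl(1+\tfrac{m_0a^2}{2b}(e^{2b(K-k)\eta}-1)\bigr)$, so the product of factors is bounded by $\tfrac{2b}{2b-m_0a^2+m_0a^2e^{2b(K-k)\eta}}\cdot e^{O(\eta)(K-k)\eta}$, which decays like $e^{-2b(K-k)\eta}$ and exactly cancels the $e^{2b(K-k)\eta}$ growth from $\int g^2$. That cancellation is what yields $\sum_k(\text{contraction})\cdot M\eta\,e^{2b(K-k)\eta}=O(MK\eta)=O(MT)$ and hence the claimed $M\leq\epsilon/\log(1/\epsilon)$. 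A secondary issue: $M_1\eta\int_{(j-1)\eta}^{j\eta}g(T-t)^2\,dt\asymp M_1\eta^2 e^{2b(T-(j-1)\eta)}$ is \emph{not} controlled by taking $\eta$ below an absolute constant (for $j$ near $1$ it is $\asymp M_1\eta^2 e^{2bT}$); you genuinely need the dependence $\eta\lesssim\epsilon^2/d$ together with $e^{2bT}=\sqrt d/\epsilon$ here.
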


%%%%%%%%%%%%%%%%%%%%%%%%%%%%%%%%%
\begin{proof}%[Proof of Corollary~\ref{cor:VE:5}]
Let $g(t)=ae^{bt}$ for some $a,b>0$.
First, we can compute that
$$
(g(t))^{2}L(t)
=\min\left(\frac{(g(t))^{2}}{\int_{0}^{t}(g(s))^{2}ds},L_{0}(g(t))^{2}\right)
=\min\left(\frac{2be^{2bt}}{e^{2bt}-1},L_{0}\frac{a^{2}}{4b^{2}}(e^{2bt}-1)^{2}\right).
$$
If $e^{2bt}\geq 2$, then $e^{2bt}-1\geq\frac{1}{2}e^{2bt}$ and
$(g(t))^{2}L(t)\leq 4b$. 
On the other hand, if $e^{2bt}<2$, then $(g(t))^{2}L(t)\leq L_{0}\frac{a^{2}}{4b^{2}}$. 
Therefore, for any $0\leq t\leq T$,
$(g(t))^{2}L(t)
\leq\max\left(4b,\frac{L_{0}a^{2}}{4b^{2}}\right)$.
By the definition of $c(t)$ in \eqref{c:t:defn}, we can compute that
\begin{equation}\label{c:t:expression}
c(t)=\frac{m_{0}(g(t))^{2}}{1+m_{0}\int_{0}^{t}(g(s))^{2}ds}
=\frac{m_{0}a^{2}e^{2bt}}{1+m_{0}\frac{a^{2}}{2b}(e^{2bt}-1)}.
\end{equation}
This implies that
\begin{equation}\label{c:t:integral}
\int_{0}^{t}c(s)ds
=\int_{0}^{t}\frac{2bm_{0}a^{2}e^{2bs}ds}{2b-m_{0}a^{2}+m_{0}a^{2}e^{2bs}}
=\log\left(\frac{2b-m_{0}a^{2}+m_{0}a^{2}e^{2bt}}{2b}\right).
\end{equation}
By letting $t=T=K\eta$ in \eqref{c:t:integral} and using \eqref{eq:L2-x0}, we obtain
\begin{equation*}
e^{-\int_{0}^{K\eta}c(t)dt} \Vert\mathbf{x}_{0}\Vert_{L_{2}} 
\le \frac{2b}{2b-m_{0}a^{2}+m_{0}a^{2}e^{2bK\eta}}\left(\sqrt{2d/m_{0}}+\Vert\mathbf{x}_{\ast}\Vert\right).
\end{equation*}
Moreover,
\begin{equation*}
h_{k,\eta}
\leq
\left(\sqrt{2d/m_{0}}+\Vert\mathbf{x}_{\ast}\Vert\right)\max\left(4b,\frac{L_{0}a^{2}}{4b^{2}}\right)\eta
+\frac{a}{\sqrt{2b}}\left(e^{2b(T-(k-1)\eta)}-e^{2b(T-k\eta)}\right)^{1/2}\sqrt{d},
\end{equation*}
and for any $0\leq t\leq T$:
$$
\mu(t)\geq
\frac{m_{0}a^{2}e^{2bt}}{1+m_{0}\frac{a^{2}}{2b}(e^{2bt}-1)}
-\eta\max\left(16b^{2},\frac{L_{0}^{2}a^{4}}{16b^{4}}\right)\geq M_{1}\eta(g(t))^{2}=\eta M_{1}a^{2}e^{2bt},
$$
provided that
$$
\eta\leq\frac{1}{2}\frac{\min(m_{0}a^{2},2b)}{\max\left(16b^{2},\frac{L_{0}^{2}a^{4}}{16b^{4}}\right)}+\frac{1}{2}\frac{m_{0}a^{2}}{1+m_{0}\frac{a^{2}}{2b}(e^{2bT}-1)}.
$$
Furthermore,
$\mu(t)\leq\frac{m_{0}a^{2}e^{2bt}}{1+m_{0}\frac{a^{2}}{2b}(e^{2bt}-1)}
\leq\max(m_{0}a^{2},2b)$,
so that
$0\leq\gamma_{j,\eta}
\leq 1$ for every $j=1,2,\ldots,K$,
provided that
$\eta\leq
\min\left(\frac{1}{\max(m_{0}a^{2},2b)},\frac{1}{2}\frac{\min(m_{0}a^{2},2b)}{\max\left(16b^{2},\frac{L_{0}^{2}a^{4}}{16b^{4}}\right)}+\frac{1}{2}\frac{m_{0}a^{2}}{1+m_{0}\frac{a^{2}}{2b}(e^{2bT}-1)}\right)$.

Since $1-x\leq e^{-x}$ for any $0\leq x\leq 1$, 
we conclude that
\begin{align}
\prod_{j=k+1}^{K}\gamma_{j,\eta}&=\prod_{j=k+1}^{K}\left(1-\int_{(j-1)\eta}^{j\eta}\mu(T-t)dt+M_{1}\eta\int_{(j-1)\eta}^{j\eta}(g(T-t))^{2}dt\right)
\nonumber
\\
&\leq
\prod_{j=k+1}^{K}e^{-\int_{(j-1)\eta}^{j\eta}\mu(T-t)dt+M_{1}\eta\int_{(j-1)\eta}^{j\eta}(g(T-t))^{2}dt}
=e^{-\int_{k\eta}^{K\eta}\mu(T-t)dt+M_{1}\eta\int_{k\eta}^{K\eta}(g(T-t))^{2}dt}.\label{a:key:inequality}
\end{align}
Moreover,
\begin{align*}
\int_{k\eta}^{K\eta}
\mu(T-t)dt
&\geq
\int_{k\eta}^{K\eta}\frac{m_{0}a^{2}e^{2b(T-t)}dt}{1+m_{0}\frac{a^{2}}{2b}(e^{2b(T-t)}-1)}
-(K-k)\eta^{2}\max\left(16b^{2},\frac{L_{0}^{2}a^{4}}{16b^{4}}\right)
\nonumber
\\
&=\log\left(\frac{2b-m_{0}a^{2}+m_{0}a^{2}e^{2b(T-k\eta)}}{2b-m_{0}a^{2}+m_{0}a^{2}e^{2b(T-K\eta)}}\right)
-(K-k)\eta^{2}\max\left(16b^{2},\frac{L_{0}^{2}a^{4}}{16b^{4}}\right),
\end{align*}
and $M_{1}\eta\int_{k\eta}^{K\eta}(g(T-t))^{2}dt=M_{1}\eta\frac{a^{2}}{2b}\left(e^{2b(K-k)\eta}-1\right)$.
By applying Corollary~\ref{cor:VE:4} with $T=K\eta$, we conclude that
\begin{align*}
\mathcal{W}_{2}(\mathcal{L}(\mathbf{y}_{K}),p_{0})
&\leq\frac{2b\left(\sqrt{2d/m_{0}}+\Vert\mathbf{x}_{\ast}\Vert\right)}{2b-m_{0}a^{2}+m_{0}a^{2}e^{2bK\eta}}
+\sum_{k=1}^{K}
\frac{2be^{(K-k)\eta^{2}\max\left(16b^{2},\frac{L_{0}^{2}a^{4}}{16b^{4}}\right)
+M_{1}\eta\frac{a^{2}}{2b}(e^{2b(K-k)\eta}-1)}}{2b-m_{0}a^{2}+m_{0}a^{2}e^{2b(K-k)\eta}}\nonumber
\\
&\qquad
\cdot\Bigg(
\left(M+M_{1}\eta\left(1+2\left(\sqrt{2d/m_{0}}+\Vert\mathbf{x}_{\ast}\Vert\right)+\sqrt{d}\frac{a}{\sqrt{2b}}(e^{2bK\eta}-1)^{1/2}\right)\right)
\nonumber
\\
&\qquad\qquad\qquad\cdot\frac{a^{2}}{2b}\left(e^{2b(K-(k-1))\eta)}-e^{2b(K-k)\eta)}\right)
\nonumber
\\
&\qquad\qquad
+\eta\max\left(4b,\frac{L_{0}a^{2}}{4b^{2}}\right)\cdot\Bigg(\left(\sqrt{2d/m_{0}}+\Vert\mathbf{x}_{\ast}\Vert\right)\max\left(4b,\frac{L_{0}a^{2}}{4b^{2}}\right)\eta
\nonumber
\\
&\qquad\qquad\qquad\qquad
+\frac{a}{\sqrt{2b}}\left(e^{2b(K-(k-1))\eta)}-e^{2b(K-k)\eta}\right)^{1/2}\sqrt{d}\Bigg)\Bigg).
\end{align*}
By the mean-value theorem, we have
$e^{2b(K-(k-1))\eta)}-e^{2b(K-k)\eta)}
\leq 
2be^{2b(K-(k-1))\eta}\eta$,
which implies that
\begin{align*}
&\mathcal{W}_{2}(\mathcal{L}(\mathbf{y}_{K}),p_{0})
\leq
\mathcal{O}\left(\frac{\sqrt{d}}{e^{2bK\eta}}\right)
+\mathcal{O}\Bigg(e^{K\eta^{2}\max\left(16b^{2},\frac{L_{0}^{2}a^{4}}{16b^{4}}\right)+M_{1}\eta\frac{a^{2}}{2b}e^{2bK\eta}}
\nonumber
\\
&\qquad\cdot
\sum_{k=1}^{K}
\frac{1}{e^{2b(K-k)\eta}}
\cdot\Bigg(
\left(M+M_{1}\eta\sqrt{d}e^{bK\eta}\right)e^{2b(K-(k-1))\eta}\eta+\eta e^{b(K-(k-1))\eta}\sqrt{\eta}\sqrt{d}\Bigg)\Bigg)
\nonumber
\\
&\leq
\mathcal{O}\left(\frac{\sqrt{d}}{e^{2bK\eta}}\right)
+\mathcal{O}\left(e^{K\eta^{2}\max\left(16b^{2},\frac{L_{0}^{2}a^{4}}{16b^{4}}\right)}
\cdot\Bigg(
\left(M+M_{1}\eta\sqrt{d}e^{bK\eta}\right)K\eta+\sqrt{\eta}\sqrt{d}\Bigg)\right)
\leq\mathcal{O}(\epsilon),
\end{align*}
provided that
$K\eta=\frac{\log(\sqrt{d}/\epsilon)}{2b}$,
$M\leq\frac{\epsilon}{\log(1/\epsilon)}$,
and $\eta\leq\frac{\epsilon^{2}}{d}$,
which implies that
$K\geq\mathcal{O}\left(\frac{d\log(d/\epsilon)}{\epsilon^{2}}\right)$.
This completes the proof.
\end{proof}

\begin{remark}\label{rk:VE:5}
In Corollary~\ref{cor:VE:5}, we can also spell out the dependence of
iteration complexity on $M_{1}$ from Assumption~\ref{assump:M:1}. It follows from the proof of Corollary~\ref{cor:VE:5} 
that $\mathcal{W}_{2}(\mathcal{L}(\mathbf{y}_{K}),p_{0})\leq\mathcal{O}(\epsilon)$ provided that
$K\eta=\frac{\log(\sqrt{d}/\epsilon)}{2b}$,
$M\leq\frac{\epsilon}{\log(1/\epsilon)}$,
and $\eta\leq\frac{\epsilon^{2}}{d}$,
with the additional constraint that $\eta\leq\frac{M}{M_{1}\sqrt{d}e^{bK\eta}}\leq\frac{\epsilon^{3/2}}{M_{1}\log(1/\epsilon)d^{3/4}}$, which implies that $K\geq\mathcal{O}\left(\log\left(\frac{d}{\epsilon}\right)\max\left\{\frac{d}{\epsilon^{2}},\frac{M_{1}\log(1/\epsilon)d^{3/4}}{\epsilon^{3/2}}\right\}\right)$.
\end{remark}

%%%%%%%%%%%%%%%%%%%%%%%%%%%%%%%%%%%%%%%%%%%%%%%%%%%%%%%%%%%%
\subsubsection{Example 2: $f(t)\equiv 0$ and $g(t)\equiv a$}

When $g(t)\equiv a$ for some $a>0$,
we can obtain the following result from Corollary~\ref{cor:VE:4}.

\begin{corollary}\label{cor:VE:6}
Let $g(t)\equiv a$ for some $a>0$.
Then, we have $\mathcal{W}_{2}(\mathcal{L}(\mathbf{y}_{K}),p_{0})\leq\mathcal{O}(\epsilon)$
after $K=\mathcal{O}\left(\frac{d^{3/2}\log(d/\epsilon)}{\epsilon^{3}}\right)$ iterations
provided that $M\leq\mathcal{O}\left(\frac{\epsilon}{\sqrt{\log(d/\epsilon)}}\right)$
and $\eta\leq\mathcal{O}\left(\frac{\epsilon^{2}}{d\log(d/\epsilon)}\right)$.
\end{corollary}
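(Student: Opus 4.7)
My plan is to adapt the argument of Corollary~\ref{cor:VE:5} to the constant-diffusion setting $g(t)\equiv a$, $f\equiv 0$, and to track how the polynomial rather than exponential decay of the contraction factor enlarges the required horizon $T$ and hence the iteration count.

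First I would specialize the quantities in Corollary~\ref{cor:VE:4}. Direct computation with $\int_0^t (g(s))^2\,ds=a^2 t$ gives
\begin{equation*}
c(t)=\frac{m_0 a^2}{1+m_0 a^2 t},\qquad \int_0^t c(s)\,ds=\log(1+m_0 a^2 t),\qquad L(t)=\min\!\Bigl(\tfrac{1}{a^2 t},L_0\Bigr),
\end{equation*}
so $(g(t))^2 L(t)=\min(1/t,L_0 a^2)\le L_0 a^2$, while $c_1(T)=\Vert\mathbf{x}_0\Vert_{L_2}$ (achieved at $t=0$) and $c_2(T)=(\Vert\mathbf{x}_0\Vert_{L_2}^2+a^2 dT)^{1/2}=O(a\sqrt{dT})$ for large $dT$. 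The initialization term reduces to $\frac{\Vert\mathbf{x}_0\Vert_{L_2}}{1+m_0 a^2 K\eta}\le\frac{O(\sqrt d)}{1+m_0 a^2 T}$ by \eqref{eq:L2-x0}, which already forces $T=\Omega(\sqrt d/\epsilon)$ in order to reach $\epsilon$-accuracy. This slower, polynomial-in-$T$ decay is the root cause of the worsened complexity compared with $g(t)=ae^{bt}$.

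Next I would control the product of contraction factors. Using $1-x\le e^{-x}$ as in Corollary~\ref{cor:VE:5},
\begin{equation*}
\prod_{j=k+1}^{K}\!\Bigl(1-\int_{(j-1)\eta}^{j\eta}\mu(T-t)dt+M_1\eta\int_{(j-1)\eta}^{j\eta}(g(T-t))^2 dt\Bigr)\le \exp\!\Bigl(-\int_{k\eta}^{K\eta}\mu(T-t)dt+M_1 a^2 T\eta\Bigr),
\end{equation*}
and direct integration gives $\int_{k\eta}^{K\eta}\mu(T-t)dt=\log(1+m_0 a^2(T-k\eta))+O(\eta)$, so the product is bounded by $C_T/(1+m_0 a^2(T-k\eta))$ for a uniformly bounded $C_T$ under the stepsize constraint below. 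Plugging this into Theorem~\ref{thm:discrete:2}, using $h_{k,\eta}=O(\sqrt{\eta d})$ (the Brownian piece $a\sqrt{\eta d}$ dominates for small $\eta$), and bounding $\bigl(\int_{(k-1)\eta}^{k\eta}(g(T-t))^4 L(T-t)^2 dt\bigr)^{1/2}\le L_0 a^2\sqrt{\eta}$, the Riemann-sum approximation $\sum_k\frac{\eta}{1+m_0 a^2(T-k\eta)}\approx\frac{\log(1+m_0 a^2 T)}{m_0 a^2}$ converts the summation into three dominant contributions of orders $M\log(d/\epsilon)$ (score matching), $M_1\sqrt{dT}\,\eta\log(d/\epsilon)$ (time-Lipschitz drift correction), and $\sqrt{\eta d}\log(d/\epsilon)$ (discretization).

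Finally I would balance the three budgets. The initialization pins $T=\Theta(\sqrt d/\epsilon)$; demanding the discretization term be $\le\epsilon$ gives $\eta=O(\epsilon^2/(d\log(d/\epsilon)))$; demanding the score-matching term be $\le\epsilon$ gives at worst the stated condition $M=O(\epsilon/\sqrt{\log(d/\epsilon)})$; and the $M_1$-term is then automatically subdominant. Substituting yields $K=T/\eta=O(d^{3/2}\log(d/\epsilon)/\epsilon^3)$, matching the claim. The main obstacle will be handling the $L_0$-plateau regime of $L(T-t)$ for $t$ near $K\eta$, where the cheap bound $(g(T-t))^2 L(T-t)\le 1/(T-t)$ is unusable; I would address this by splitting the sum at the threshold $T-k\eta\asymp 1/(L_0 a^2)$, where on the $O(1/(L_0 a^2\eta))$ endpoint terms the contraction factor is $O(1)$ and each per-step contribution is $O(L_0 a^2\eta^{3/2}\sqrt d)$, giving a total of $O(\sqrt{\eta d})$ that merges into the discretization budget without changing the scaling.
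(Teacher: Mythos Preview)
Your proposal is correct and follows essentially the same approach as the paper's proof: specialize Corollary~\ref{cor:VE:4} with $g\equiv a$, bound the contraction products via $1-x\le e^{-x}$ and integrate $\mu$ to recover the factor $1/(1+m_0 a^2(K-k)\eta)$, then Riemann-sum to obtain the $\log(T)/\eta$ prefactor and balance the terms with $T=\sqrt{d}/\epsilon$. The ``obstacle'' you flag near the endpoint is a non-issue---the paper simply uses the uniform bound $(g(t))^2 L(t)\le L_0 a^2$ throughout (exactly as you already did before raising the concern), so no splitting of the sum is required.
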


%%%%%%%%%%%%%%%%%%%%%%%%%%%%%%%%%%

\begin{proof}%[Proof of Corollary~\ref{cor:VE:6}]
When $g(t)\equiv a$, by applying Corollary~\ref{cor:VE:4} with $T=K\eta$ and \eqref{eq:L2-x0}, we have
\begin{small}
\begin{align*}
\mathcal{W}_{2}(\mathcal{L}(\mathbf{y}_{K}),p_{0})
&\leq\frac{\sqrt{2d/m_{0}}+\Vert\mathbf{x}_{\ast}\Vert}{1+m_{0}a^{2}K\eta}
+\sum_{k=1}^{K}
\frac{1}{1+m_{0}a^{2}(K-k)\eta}e^{(K-k)\eta^{2}L_{0}^{2}a^{4}+M_{1}\eta(K-k)\eta a^{2}}
\nonumber
\\
&\qquad\cdot
\Bigg(\left(M+M_{1}\eta\left(1+2\left(\sqrt{2d/m_{0}}+\Vert\mathbf{x}_{\ast}\Vert\right)+\sqrt{d}a\sqrt{K\eta}\right)\right)a^{2}\eta
\nonumber
\\
&\qquad\qquad\qquad\qquad
+\eta L_{0}a^{2}
\cdot\left(\left(\sqrt{2d/m_{0}}+\Vert\mathbf{x}_{\ast}\Vert\right)L_{0}a^{2}\eta+a\sqrt{\eta}\sqrt{d}\right)\Bigg).
\end{align*}
\end{small}
It is easy to verify that
\begin{small}
\begin{align*}
\sum_{k=1}^{K}
\frac{e^{(K-k)\eta^{2}L_{0}^{2}a^{4}+M_{1}\eta^{2}(K-k)a^{2}}}{1+m_{0}a^{2}(K-k)\eta}
% &\leq
% \sum_{k=1}^{K}
% \frac{1}{1+m_{0}a^{2}(K-k)\eta}e^{K\eta^{2}L_{0}^{2}a^{4}+M_{1}\eta^{2}Ka^{2}}
% \\
% &=\left(1+\sum_{k=1}^{K-1}
% \frac{1}{1+m_{0}a^{2}k\eta}\right)e^{K\eta^{2}L_{0}^{2}a^{4}+M_{1}\eta^{2}Ka^{2}}
% \\
% &\leq\left(1+\frac{1}{\eta}\sum_{k=1}^{K-1}
% \int_{(k-1)\eta}^{k\eta}\frac{dx}{1+m_{0}a^{2}x}\right)e^{K\eta^{2}L_{0}^{2}a^{4}+M_{1}\eta^{2}Ka^{2}}
% \\
&\le \left(1+\frac{\log((K-1)\eta m_{0}a^{2})}{\eta m_{0}a^{2}}\right)e^{K\eta^{2}L_{0}^{2}a^{4}+M_{1}\eta^{2}Ka^{2}}.
\end{align*}
\end{small}
Therefore,
\begin{small}
\begin{align*}
\mathcal{W}_{2}(\mathcal{L}(\mathbf{y}_{K}),p_{0})
&\leq
\mathcal{O}\Bigg(\frac{\sqrt{d}}{1+m_{0}a^{2}K\eta}
+\left(1+\frac{\log((K-1)\eta m_{0}a^{2})}{\eta m_{0}a^{2}}\right)e^{K\eta^{2}L_{0}^{2}a^{4}+M_{1}\eta^{2}Ka^{2}}
\nonumber
\\
&\qquad\cdot
\left(\left(M+M_{1}\eta\sqrt{d}\sqrt{K\eta}\right)a^{2}\eta+\eta L_{0}a^{2}
\cdot\left(\sqrt{d}L_{0}a^{2}\eta+a\sqrt{\eta}\sqrt{d}\right)\right)\Bigg).
\end{align*}
\end{small}
Hence, we conclude that $\mathcal{W}_{2}(\mathcal{L}(\mathbf{y}_{K}),p_{0})\leq\mathcal{O}(\epsilon)$ provided
that
$K\eta=\frac{\sqrt{d}}{\epsilon}$, 
$M=\mathcal{O}\left(\sqrt{\eta d}\right)$,
and $\eta\leq\mathcal{O}\left(\frac{\epsilon^{2}}{d\log(d/\epsilon)}\right)$,
so that
$M=\mathcal{O}(\sqrt{\eta d})
\leq\mathcal{O}\left(\frac{\epsilon}{\sqrt{\log(d/\epsilon)}}\right)$,
which implies that
$K\geq\mathcal{O}\left(\frac{d^{3/2}\log(d/\epsilon)}{\epsilon^{3}}\right)$.
This completes the proof.
\end{proof}

%%%%%%%%%%%%%%%%%%%%%%%%%%%%%%%%%%%%%%%%%%%%%%%%%%%%%%%%%%%%%%%
\subsubsection{Example 3: $f(t)\equiv 0$ and $g(t)=\sqrt{2at}$}

When $g(t)= \sqrt{2at}$ for some $a>0$,
we can obtain the following result from Corollary~\ref{cor:VE:4}.

\begin{corollary}\label{cor:VE:7}
Let $g(t)=\sqrt{2at}$ for some $a>0$.
Then, we have $\mathcal{W}_{2}(\mathcal{L}(\mathbf{y}_{K}),p_{0})\leq\mathcal{O}(\epsilon)$
after $K=\mathcal{O}\left(\frac{d^{5/4}}{\epsilon^{5/2}}\right)$ iterations
provided that $M\leq\epsilon^{3/2}$ and $\eta\leq\frac{\epsilon^{2}}{d}$.
\end{corollary}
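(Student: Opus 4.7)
The plan is to follow the template established for the previous two VE examples: substitute the specific choice $g(t)=\sqrt{2at}$ into Corollary~\ref{cor:VE:4}, derive closed-form expressions for the quantities $c(t)$, $L(t)$, $\mu(t)$, $c_{1}(T)$, $c_{2}(T)$ and $h_{k,\eta}$, bound each piece of \eqref{main:thm:upper:bound} in terms of $K,\eta,M,d,\epsilon$, and finally tune $T=K\eta$, $\eta$ and $M$ so that the total error is $\mathcal{O}(\epsilon)$.

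First, using $(g(t))^{2}=2at$ and $\int_{0}^{t}(g(s))^{2}ds=at^{2}$, I would compute
\begin{equation*}
c(t)=\frac{2m_{0}at}{1+m_{0}at^{2}},\qquad \int_{0}^{T}c(s)ds=\log(1+m_{0}aT^{2}),
\end{equation*}
so the initialization term contributes
\begin{equation*}
e^{-\int_{0}^{K\eta}c(t)dt}\Vert\mathbf{x}_{0}\Vert_{L_{2}}\leq \frac{\sqrt{2d/m_{0}}+\Vert\mathbf{x}_{\ast}\Vert}{1+m_{0}a(K\eta)^{2}}=\mathcal{O}\!\left(\frac{\sqrt{d}}{aT^{2}}\right).
\end{equation*}
Next, with $L(t)=\min(1/(at^{2}),L_{0})$, a short calculation shows that $(g(t))^{2}L(t)=\min(2/t,\,2atL_{0})$ is uniformly bounded by $2\sqrt{aL_{0}}$ on $[0,T]$ (the two branches meet at $t=1/\sqrt{aL_{0}}$). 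I would then lower bound $\mu(T-t)$ using \eqref{mu:definition:f:0} together with this uniform bound on $(g(t))^{2}L(t)$, verify the one-step factor in \eqref{contraction:rate:eqn} lies in $(0,1)$ when $\eta\le\epsilon^{2}/d$, and estimate
\begin{equation*}
\prod_{j=k+1}^{K}\!\left(1-\int_{(j-1)\eta}^{j\eta}\!\mu(T-t)dt+M_{1}\eta\int_{(j-1)\eta}^{j\eta}\!(g(T-t))^{2}dt\right)\leq \frac{1/m_{0}+a(T-k\eta)^{2}}{1/m_{0}+aT^{2}}\cdot e^{\mathcal{O}(\eta T)M_{1}aT},
\end{equation*}
where the explicit ratio comes from exponentiating $-\int_{k\eta}^{K\eta}\mu(T-t)dt$ and using the integral $\int_{0}^{T}\mu(T-t)dt\leq \log(1+m_{0}aT^{2})+\mathcal{O}(\eta T^{2})$.

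For the remaining ingredients I would use $c_{1}(T)=\Vert\mathbf{x}_{0}\Vert_{L_{2}}=\mathcal{O}(\sqrt{d})$, $c_{2}(T)=\mathcal{O}(\sqrt{d}+\sqrt{d}\sqrt{a}\,T)$ by \eqref{c:2:source}, and (via \eqref{h:k:eta:main:f:0})
\begin{equation*}
h_{k,\eta}\leq \mathcal{O}(\sqrt{d})\cdot \eta + \sqrt{2a}\,\sqrt{T-(k-1)\eta}\,\sqrt{\eta}\,\sqrt{d}.
\end{equation*}
Plugging everything into Corollary~\ref{cor:VE:4} and using $\int_{(k-1)\eta}^{k\eta}(g(T-t))^{2}dt\leq 2aT\eta$, the sum collapses (after bounding the ratio of contraction factors by $1$) to
\begin{equation*}
\mathcal{O}\!\left(\frac{\sqrt{d}}{aT^{2}}\right)+\mathcal{O}\!\left(MaT^{2}\right)+\mathcal{O}\!\left(M_{1}\eta\,(\sqrt{d}\,T)\,aT^{2}\right)+\mathcal{O}\!\left(\sqrt{\eta}\,\sqrt{d}\,T\,\sqrt{aT}\,\sqrt{a}\right).
\end{equation*}
Balancing the first term with $\epsilon$ forces $T=\Theta(d^{1/4}/\epsilon^{1/2})$; with this $T$, the choices $M\leq \epsilon^{3/2}$ and $\eta\leq \epsilon^{2}/d$ make the remaining three terms $\mathcal{O}(\epsilon)$, and the iteration count is $K=T/\eta=\Theta(d^{5/4}/\epsilon^{5/2})$, as claimed.

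The main obstacle I anticipate is the $\sqrt{T-(k-1)\eta}$ factor inside $h_{k,\eta}$ coupled with the loss of contraction near $t=T$ (since $\mu(T-t)\to 0$ as $t\to T$): the contraction-factor product can only partially cancel this growth. The cleanest way is to recognize that the telescoping ratio $(1/m_{0}+a(T-k\eta)^{2})/(1/m_{0}+aT^{2})$ is bounded by $1$ and that the surviving sum is essentially a Riemann sum for $\int_{0}^{T}\sqrt{T-s}\,(g(T-s))^{2}L(T-s)ds$, which is finite and polynomial in $T$. Verifying the step-size assumption~\ref{assump:stepsize} in this regime (where the right-hand sides of \eqref{assump:stepsize:1}--\eqref{assump:stepsize:2} behave like $1/T$ as $T\to\infty$) is the other routine-but-delicate check that would need attention before finalizing the exponents.
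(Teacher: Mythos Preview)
Your template mirrors the paper's, and the preliminary computations ($c(t)$, the uniform bound $(g(t))^{2}L(t)\le 2\sqrt{aL_{0}}$, $h_{k,\eta}$) are correct. The gap is in the summation. First, your contraction-product expression has the decay in the wrong direction: since $\int_{k\eta}^{K\eta}\mu(T-t)\,dt\approx\int_{0}^{(K-k)\eta}c(s)\,ds=\log\bigl(1+m_{0}a((K-k)\eta)^{2}\bigr)$, the product is $\approx\frac{1}{1+m_{0}a(T-k\eta)^{2}}$, not $\frac{1/m_{0}+a(T-k\eta)^{2}}{1/m_{0}+aT^{2}}$ (you have effectively integrated over $[0,k\eta]$ rather than $[k\eta,K\eta]$). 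More seriously, bounding this factor by $1$---as you do both in your main display and in your proposed fix---is fatal. Check your own arithmetic: with $T=d^{1/4}/\epsilon^{1/2}$, $\eta=\epsilon^{2}/d$, $M=\epsilon^{3/2}$, your four displayed terms evaluate to $\epsilon$, $d^{1/2}\epsilon^{1/2}$, $d^{1/4}\epsilon^{1/2}$, and $d^{3/8}\epsilon^{1/4}$; only the first is $\mathcal{O}(\epsilon)$.

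The paper retains the factor $\frac{1}{1+m_{0}a((K-k)\eta)^{2}}$ inside the sum. The dominant discretization piece then becomes (writing $s=(K-k)\eta$) a Riemann sum for $\sqrt{\eta d}\int_{0}^{T}\frac{\sqrt{s}}{1+m_{0}as^{2}}\,ds$, which is $\mathcal{O}(\sqrt{\eta d})$ uniformly in $T$ because the integrand is $\mathcal{O}(s^{-3/2})$ at infinity. The $M$-term similarly drops from your $\mathcal{O}(MT^{2})$ to $\mathcal{O}(M\log T)$, and the $M_{1}$-term is handled the same way. This $1/s^{2}$ decay supplied by the retained contraction is exactly what produces the stated exponents; without it the claimed complexity cannot be recovered from Corollary~\ref{cor:VE:4}.
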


%%%%%%%%%%%%%%%%%%%%%%%%%%%%%%%%%%

\begin{proof}%[Proof of Corollary~\ref{cor:VE:7}]
When $g(t)=\sqrt{2at}$ for some $a>0$, 
we have
$$
(g(t))^{2}L(t)
=\min\left(\frac{(g(t))^{2}}{\int_{0}^{t}(g(s))^{2}ds},L_{0}(g(t))^{2}\right)
=\min\left(\frac{2}{t},2tL_{0}a\right)\leq 2\sqrt{aL_{0}}.
$$
We can also compute from \eqref{c:t:defn} that
% $c(t)=\frac{m_{0}(g(t))^{2}}{1+m_{0}\int_{0}^{t}(g(s))^{2}ds}
% =\frac{2m_{0}at}{1+m_{0}at^{2}}$,
% which implies that
\begin{equation}\label{c:t:integral:2}
\int_{0}^{t}c(s)ds
=\int_{0}^{t}\frac{2m_{0}asds}{1+m_{0}as^{2}}
=\log\left(1+m_{0}at^{2}\right).
\end{equation}
By letting $t=T=K\eta$ in \eqref{c:t:integral:2} and using \eqref{eq:L2-x0}, we obtain
$$
e^{-\int_{0}^{K\eta}c(t)dt} \Vert\mathbf{x}_{0}\Vert_{L_{2}} 
\le \frac{\sqrt{2d/m_{0}}+\Vert\mathbf{x}_{\ast}\Vert}{1+am_{0}K^{2}\eta^{2}}.
$$
Moreover,
$$
h_{k,\eta}
\leq
2\left(\sqrt{2d/m_{0}}+\Vert\mathbf{x}_{\ast}\Vert\right)\sqrt{aL_{0}}\eta
+\left((T-(k-1)\eta)^{2}-(T-k\eta)^{2}\right)^{1/2}\sqrt{d},
$$
and for any $0\leq t\leq T$:
$$
\mu(t)=
\frac{2am_{0}t}{1+am_{0}t^{2}}
-4\eta\min\left(\frac{1}{t^{2}},t^{2}a^{2}L_{0}^{2}\right)\geq M_{1}\eta(g(t))^{2}=2aM_{1}\eta t,
$$
provided that
$\eta\leq\min\left(\frac{m_{0}}{4L_{0}(a^{2}L_{0}^{2}+am_{0})},\frac{m_{0}}{4a^{2}L_{0}^{2}(L_{0}+m_{0})},\frac{m_{0}}{2M_{1}(1+am_{0}T^{2})}\right)$.
Additionally,
$$
\mu(t)\leq\frac{2am_{0}t}{1+am_{0}t^{2}}\leq\sqrt{am_{0}},
$$
so that
$0\leq\gamma_{j,\eta}
\leq 1$ for any $j=1,2,\ldots,K$,
provided that
$$\eta\leq
\min\left(\frac{m_{0}}{4L_{0}(a^{2}L_{0}^{2}+am_{0})},\frac{m_{0}}{4a^{2}L_{0}^{2}(L_{0}+m_{0})},\frac{m_{0}}{2M_{1}(1+am_{0}T^{2})},\frac{1}{\sqrt{am_{0}}}\right).
$$
We recall from \eqref{a:key:inequality} that
\begin{small}
\begin{align*}
\prod_{j=k+1}^{K}\gamma_{j,\eta}
\leq
e^{-\int_{k\eta}^{K\eta}\mu(T-t)dt+M_{1}\eta\int_{k\eta}^{K\eta}(g(T-t))^{2}dt}.
\end{align*}
\end{small}
Moreover, one can verify that
$$\int_{k\eta}^{K\eta}
\mu(T-t)dt
\geq
% \int_{k\eta}^{K\eta}\frac{2am_{0}(T-t)}{1+am_{0}(T-t)^{2}}dt
% -4(K-k)\eta^{2}aL_{0}
% \nonumber
% \\
% &=
\log\left(1+am_{0}(K-k)^{2}\eta^{2}\right)
-4(K-k)\eta^{2}aL_{0},$$
and
$$
M_{1}\eta\int_{k\eta}^{K\eta}(g(T-t))^{2}dt
=M_{1}\eta\int_{k\eta}^{K\eta}2a(K\eta-t)dt
=M_{1}\eta(K-k)^{2}\eta^{2}.
$$
By applying Corollary~\ref{cor:VE:4} with $T=K\eta$ and \eqref{eq:L2-x0}, we conclude that
\begin{small}
\begin{align*}
&\mathcal{W}_{2}(\mathcal{L}(\mathbf{y}_{K}),p_{0})
% &\leq\frac{\sqrt{2d/m_{0}}+\Vert\mathbf{x}_{\ast}\Vert}{1+am_{0}K^{2}\eta^{2}}
% +\sum_{k=1}^{K}\frac{1}{1+am_{0}(K-k)^{2}\eta^{2}}e^{4(K-k)\eta^{2}aL_{0}+M_{1}\eta(K-k)^{2}\eta^{2}}
% \nonumber
% \\
% &\qquad\cdot
% \Big(\left(M+M_{1}\eta\left(1+2\left(\sqrt{2d/m_{0}}+\Vert\mathbf{x}_{\ast}\Vert\right)+\sqrt{d}\sqrt{2a}K\eta\right)\right)
% \cdot\left((K-k+1)^{2}-(K-k)^{2}\right)\eta^{2}
% \nonumber
% \\
% &\qquad
% +2\sqrt{aL_{0}}\eta\Big(2\left(\sqrt{2d/m_{0}}+\Vert\mathbf{x}_{\ast}\Vert\right)\sqrt{aL_{0}}\eta
% +\left((K-(k-1))^{2}\eta^{2}-(K-k)^{2}\eta^{2}\right)^{1/2}\sqrt{d}\Big)\Big)
% \nonumber
% \\
\leq\frac{\sqrt{2d/m_{0}}+\Vert\mathbf{x}_{\ast}\Vert}{1+am_{0}K^{2}\eta^{2}}
+\sum_{k=1}^{K}\frac{1}{1+am_{0}(K-k)^{2}\eta^{2}}e^{4K\eta^{2}aL_{0}+M_{1}\eta^{3}K^{2}}
\nonumber
\\
&\qquad\qquad\cdot
\Bigg(2\left(M+M_{1}\eta\left(1+2\left(\sqrt{2d/m_{0}}+\Vert\mathbf{x}_{\ast}\Vert\right)+\sqrt{d}\sqrt{2a}K\eta\right)\right)(K-k+1)\eta^{2}
\nonumber
\\
&\qquad\qquad\qquad
+2\sqrt{aL_{0}}\eta\left(2\left(\sqrt{2d/m_{0}}+\Vert\mathbf{x}_{\ast}\Vert\right)\sqrt{aL_{0}}\eta
+(2(K-k+1))^{1/2}\eta\sqrt{d}\right)\Bigg).
\end{align*}
\end{small}
This implies that
\begin{small}
\begin{equation*}
\mathcal{W}_{2}(\mathcal{L}(\mathbf{y}_{K}),p_{0})
\leq
\mathcal{O}\left(\frac{\sqrt{d}}{K^{2}\eta^{2}}+e^{\mathcal{O}((K\eta)^{2}\eta)}
\left(K\eta(M+M_{1}\sqrt{d}K\eta^{2})+\sqrt{\eta}\sqrt{d}\right)\right)
\leq\mathcal{O}(\epsilon),
\end{equation*}
\end{small}
provided that
$K\eta=\frac{d^{1/4}}{\sqrt{\epsilon}}$,
$M\leq\epsilon^{3/2}$,
and $\eta\leq\frac{\epsilon^{2}}{d}$,
so that
$K\geq\mathcal{O}\left(\frac{d^{5/4}}{\epsilon^{5/2}}\right)$.
This completes the proof.
\end{proof}

\subsubsection{Example 4: $f(t)\equiv 0$ and $g(t)=(b+at)^{c}$}

When $g(t)=(b+at)^{c}$ for some $a,b,c>0$,
we obtain the following result from Corollary~\ref{cor:VE:4}.

\begin{corollary}\label{cor:VE:8}
Let $g(t)=(b+at)^{c}$ for some $a,b,c>0$.
Then, we have $\mathcal{W}_{2}(\mathcal{L}(\mathbf{y}_{K}),p_{0})\leq\mathcal{O}(\epsilon)$
after $K=\mathcal{O}\left(\frac{d^{\frac{1}{2(2c+1)}+1}}{\epsilon^{\frac{1}{2c+1}+2}}\right)$ iterations
provided that $M\leq\epsilon^{1+\frac{2c}{2c+1}}$ and $\eta\leq\frac{\epsilon^{2}}{d}$.
\end{corollary}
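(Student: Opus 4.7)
The strategy mirrors the proofs of Corollaries~\ref{cor:VE:5}--\ref{cor:VE:7}: I would specialize the general bound in Corollary~\ref{cor:VE:4} to $g(t)=(b+at)^c$, and then balance the three resulting error contributions (initialization, score-matching, and discretization) by choosing $K\eta$, $\eta$, and $M$ appropriately.

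The first step is direct calculation of the relevant quantities. Integration gives $\int_0^t g(s)^2\,ds = ((b+at)^{2c+1}-b^{2c+1})/(a(2c+1))$, hence $L(t) = \min(a(2c+1)/((b+at)^{2c+1}-b^{2c+1}),\,L_0)$, and a case split on whether $(b+at)^{2c+1} \geq 2b^{2c+1}$ shows that $g(t)^2 L(t)$ is uniformly bounded by a constant depending only on $a,b,c,L_0$. From \eqref{c:t:f:0} one obtains
\begin{equation*}
c(t) = \frac{m_0(b+at)^{2c}}{1+\tfrac{m_0}{a(2c+1)}((b+at)^{2c+1}-b^{2c+1})}
\quad\text{and}\quad
\int_0^t c(s)\,ds = \log\!\Bigl(1+\tfrac{m_0}{a(2c+1)}((b+at)^{2c+1}-b^{2c+1})\Bigr),
\end{equation*}
so that $e^{-\int_0^{K\eta} c(t)\,dt} = \Theta((K\eta)^{-(2c+1)})$ for large $K\eta$. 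Combined with $\Vert\mathbf{x}_0\Vert_{L_2} = \mathcal{O}(\sqrt{d})$ from \eqref{eq:L2-x0}, this pins down the correct horizon $K\eta \asymp (\sqrt{d}/\epsilon)^{1/(2c+1)}$, and a direct substitution into \eqref{h:k:eta:main:f:0} gives $h_{k,\eta} = \mathcal{O}(\eta) + \mathcal{O}(\sqrt{\eta d}\,(b+aT)^c)$.

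Next I would verify Assumption~\ref{assump:stepsize}: since $g(t)^2 L(t)$ is bounded, $\mu(t) = c(t) - \eta g(t)^4 L(t)^2 \geq c(t)/2$ and $\mu(t) \geq M_1 \eta g(t)^2$ once $\eta$ is polynomially small, so the per-step contraction factor lies in $(0,1)$. Applying $1-x \leq e^{-x}$, the product $\prod_{j=k+1}^K(\cdot)$ is bounded by $\exp(-\int_{k\eta}^{K\eta}\mu(T-t)\,dt + M_1\eta\int_{k\eta}^{K\eta} g(T-t)^2\,dt)$; the second exponent is $\mathcal{O}(M_1\eta T^{2c+1}) = \mathcal{O}(1)$ under $\eta \leq \epsilon^2/d$ and the chosen horizon, contributing only an $\mathcal{O}(1)$ multiplicative constant, while the first gives a decay factor of order $(1+(T-k\eta)^{2c+1})^{-1}$.

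Finally I would substitute everything into Corollary~\ref{cor:VE:4} and sum the telescoping expression following the bookkeeping used in Corollary~\ref{cor:VE:7}. The initialization term reduces to $\mathcal{O}(\sqrt{d}/(K\eta)^{2c+1})$, which is pushed to $\mathcal{O}(\epsilon)$ by the horizon choice. The discretization piece $\sum_k\sqrt{\eta}\,h_{k,\eta}(\int g^4 L^2)^{1/2}$ converts, via Riemann-sum approximation, to $\mathcal{O}(\sqrt{\eta d}\,\int_0^\infty (b+as)^c/(1+(b+as)^{2c+1})\,ds)$, and the integral converges precisely because $c>0$, giving $\mathcal{O}(\sqrt{\eta d})=\mathcal{O}(\epsilon)$ under $\eta \leq \epsilon^2/d$. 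Bounding the score-matching piece $\sum_k M g(T-k\eta)^2\eta$ by the same type of sup-and-integrate estimate used in the proof of Corollary~\ref{cor:VE:7} yields $\mathcal{O}(\epsilon)$ under the sufficient condition $M \leq \epsilon^{1+2c/(2c+1)}$. Combining $K\eta \asymp (\sqrt{d}/\epsilon)^{1/(2c+1)}$ with $\eta \leq \epsilon^2/d$ gives $K = K\eta/\eta = \mathcal{O}(d^{1+1/(2(2c+1))}/\epsilon^{2+1/(2c+1)})$. The main obstacle is the simultaneous control of the $M_1$-dependent cross term from \eqref{main:thm:upper:bound} --- which carries an extra factor $\sqrt{d}(\int_0^T g^2)^{1/2}=\mathcal{O}(\sqrt{d}\,T^{c+1/2})$ --- together with the score-matching term at exactly the right rate; the convergence of $\int_0^\infty (b+as)^c/(1+(b+as)^{2c+1})\,ds$ for $c>0$ is precisely what distinguishes this case from the phase transition at $c=0$ highlighted in Table~\ref{table:summary}.
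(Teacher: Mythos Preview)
Your proposal is correct and follows essentially the same approach as the paper: specialize Corollary~\ref{cor:VE:4} to $g(t)=(b+at)^c$, bound $g(t)^2L(t)$ uniformly via the case split $(b+at)^{2c+1}\gtrless 2b^{2c+1}$, compute $\int_0^t c(s)\,ds=\log(1+\tfrac{m_0}{a(2c+1)}((b+at)^{2c+1}-b^{2c+1}))$, verify the step-size constraints, convert the product into an exponential via $1-x\le e^{-x}$, and balance the three error terms with the same choices of $K\eta$, $\eta$, and $M$. Your explicit remark that the discretization sum is controlled by the convergent integral $\int_0^\infty (b+as)^c/(1+(b+as)^{2c+1})\,ds$ (integrand $\sim s^{-(c+1)}$ at infinity) is a nice clarification that the paper leaves implicit in its final one-line simplification to $\mathcal{O}(\sqrt{\eta d})$, but it is the same computation.
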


%%%%%%%%%%%%%%%%%%%%%%%%%%%%%%%%%%

\begin{proof}%[Proof of Corollary~\ref{cor:VE:8}]
When $g(t)=(b+at)^{c}$ for some $a,b,c>0$, we can compute that
\begin{align}\label{follow:step:1}
(g(t))^{2}L(t)
=\min\left(\frac{(b+at)^{c}}{\frac{1}{a(2c+1)}((b+at)^{2c+1}-b^{2c+1})},L_{0}(b+at)^{2c}\right).
\end{align}
% If $t\geq\frac{b}{a}$, then
% \begin{equation}\label{follow:step:2}
% \frac{(b+at)^{c}}{\frac{1}{a(2c+1)}((b+at)^{2c+1}-b^{2c+1})}
% \leq
% \frac{(b+at)^{c}}{\frac{1}{a(2c+1)}(1-\frac{1}{2^{2c+1}})(b+at)^{2c+1}}
% \leq\frac{a(2c+1)}{(1-\frac{1}{2^{2c+1}})b}.
% \end{equation}
% If $t\leq\frac{b}{a}$, then
% \begin{equation}\label{follow:step:3}
% L_{0}(b+at)^{2c}\leq L_{0}(2b)^{2c}.
% \end{equation}
It is straightforward to verify that
$(g(t))^{2}L(t)
\leq\max\left(\frac{a(2c+1)}{(1-\frac{1}{2^{2c+1}})b},L_{0}(2b)^{2c}\right)$.
By \eqref{c:t:defn}, we have
$$
e^{-\int_{0}^{K\eta}c(t)dt}\left(\sqrt{2d/m_{0}}+\Vert\mathbf{x}_{\ast}\Vert\right)
=\frac{\sqrt{2d/m_{0}}+\Vert\mathbf{x}_{\ast}\Vert}{1+\frac{m_{0}}{a(2c+1)}((b+aK\eta)^{2c+1}-b^{2c+1})}.
$$
Also,
\begin{align*}
h_{k,\eta}
% &\leq
% \left(\sqrt{2d/m_{0}}+\Vert\mathbf{x}_{\ast}\Vert\right)\max\left(\frac{a(2c+1)}{(1-\frac{1}{2^{2c+1}})b},L_{0}(2b)^{2c}\right)\eta
% \nonumber
% \\
% &\qquad
% +\left(\frac{(b+a(T-(k-1)\eta))^{2c+1}-(b+a(T-k\eta))^{2c+1}}{a(2c+1)}\right)^{1/2}\sqrt{d}
% \nonumber
% \\
&\leq
\left(\sqrt{2d/m_{0}}+\Vert\mathbf{x}_{\ast}\Vert\right)\max\left(\frac{a(2c+1)}{(1-\frac{1}{2^{2c+1}})b},L_{0}(2b)^{2c}\right)\eta
\\
&\qquad\qquad\qquad\qquad\qquad
+\left(\frac{(b+a(T-(k-1)\eta))^{2c}}{a}\right)^{1/2}\sqrt{\eta}\sqrt{d},
\end{align*}
and for any $0\leq t\leq T$:
\begin{small}
\begin{align*}
\mu(t)
&=\frac{(b+at)^{2c}}{\frac{1}{m_{0}}+\frac{1}{a(2c+1)}((b+at)^{2c+1}-b^{2c+1})}
\nonumber
\\
&\quad
-\eta\min\left(\frac{(b+at)^{2c}}{\frac{1}{a^{2}(2c+1)^{2}}((b+at)^{2c+1}-b^{2c+1})^{2}},L_{0}^{2}(b+at)^{4c}\right)
\geq M_{1}\eta(g(t))^{2}=M_{1}\eta(b+at)^{2c},
\end{align*}
\end{small}
provided that
$\eta\leq\frac{1}{2}$, $\eta\leq\frac{\frac{b^{2c}}{\frac{1}{m_{0}}+\frac{1}{\sqrt{m_{0}}}+1}}{2L_{0}^{2}(a(2c+1)(\frac{1}{\sqrt{m_{0}}}+1)+b^{2c+1})^{\frac{4c}{2c+1}}}$ and $\eta\leq\frac{1}{\frac{2}{m_{0}}+\frac{2}{a(2c+1)}((b+aT)^{2c+1}-b^{2c+1})}$.
In addition,
$$
\mu(t)\leq\frac{(b+at)^{2c}}{\frac{1}{m_{0}}+\frac{1}{a(2c+1)}((b+at)^{2c+1}-b^{2c+1})}
\leq\max\left(\frac{a(2c+1)}{b},m_{0}b^{2c}\right),$$
so that
$0\leq \gamma_{j,\eta}
\leq 1$ for any $j=1,2,\ldots,K$.
% provided that
% $\eta\leq\frac{1}{2}$, $\eta\leq\frac{\frac{b^{2c}}{\frac{1}{m_{0}}+\frac{1}{\sqrt{m_{0}}}+1}}{2L_{0}^{2}(a(2c+1)(\frac{1}{\sqrt{m_{0}}}+1)+b^{2c+1})^{\frac{4c}{2c+1}}}$,
% $\eta\leq\frac{1}{\frac{2}{m_{0}}+\frac{2}{a(2c+1)}((b+aT)^{2c+1}-b^{2c+1})}$,
% and $\eta\leq\frac{1}{\max\left(\frac{a(2c+1)}{b},m_{0}b^{2c}\right)}$.
We recall from \eqref{a:key:inequality} that
\begin{small}
\begin{align*}
\prod_{j=k+1}^{K}\gamma_{j,\eta}
\leq
e^{-\int_{k\eta}^{K\eta}\mu(T-t)dt+M_{1}\eta\int_{k\eta}^{K\eta}(g(T-t))^{2}dt}.
\end{align*}
\end{small}
Moreover,
\begin{align*}
\int_{k\eta}^{K\eta}
\mu(T-t)dt
&\geq
\log\left(1+\frac{m_{0}}{a(2c+1)}\left((b+a(K-k)\eta)^{2c+1}-b^{2c+1}\right)\right)
\\
&\qquad\qquad-(K-k)\eta^{2}\max\left(\frac{a^{2}(2c+1)^{2}}{(1-\frac{1}{2^{2c+1}})^{2}b^{2}},L_{0}^{2}(2b)^{4c}\right),    
\end{align*}
and we can compute that
$$
M_{1}\eta\int_{k\eta}^{K\eta}(g(T-t))^{2}dt
=\frac{M_{1}\eta}{a(2c+1)}\left((b+a(K-k)\eta)^{2c+1}-b^{2c+1}\right).
$$
By applying Corollary~\ref{cor:VE:4} with $T=K\eta$ and \eqref{eq:L2-x0}, we conclude that
%\begin{small}
\begin{align*}
&\mathcal{W}_{2}(\mathcal{L}(\mathbf{y}_{K}),p_{0})
\leq
\frac{\sqrt{2d/m_{0}}+\Vert\mathbf{x}_{\ast}\Vert}{1+\frac{m_{0}}{a(2c+1)}((b+aK\eta)^{2c+1}-b^{2c+1})}
\nonumber
\\
&\quad
+\sum_{k=1}^{K}\frac{e^{(K-k)\eta^{2}\max\left(\frac{a^{2}(2c+1)^{2}}{(1-\frac{1}{2^{2c+1}})^{2}b^{2}},L_{0}^{2}(2b)^{4c}\right)+\frac{M_{1}\eta}{a(2c+1)}((b+a(K-k)\eta)^{2c+1}-b^{2c+1})}}{1+\frac{m_{0}}{a(2c+1)}\left((b+a(K-k)\eta)^{2c+1}-b^{2c+1}\right)}
\nonumber
\\
&\qquad\cdot
\Bigg(\left(M+M_{1}\eta\left(1+2\left(\sqrt{2d/m_{0}}+\Vert\mathbf{x}_{\ast}\Vert\right)+\sqrt{d}\left(\frac{(b+aK\eta)^{2c+1}-b^{2c+1}}{a(2c+1)}\right)^{1/2}\right)\right)
\nonumber
\\
&\qquad\qquad\qquad\qquad\cdot
\frac{(b+a(K-k+1)\eta)^{2c+1}-(b+a(K-k)\eta)^{2c+1}}{a(2c+1)}
\nonumber
\\
&\quad\qquad\qquad\qquad+\eta\max\left(\frac{a(2c+1)}{(1-\frac{1}{2^{2c+1}})b},L_{0}(2b)^{2c}\right)
\nonumber
\\
&\qquad\qquad\qquad
\cdot\Bigg(\left(\sqrt{2d/m_{0}}+\Vert\mathbf{x}_{\ast}\Vert\right)\max\left(\frac{a(2c+1)}{(1-\frac{1}{2^{2c+1}})b},L_{0}(2b)^{2c}\right)\eta
\nonumber
\\
&\qquad\qquad\qquad\qquad\qquad\qquad\qquad
+\left(\frac{(b+a((K-(k-1))\eta))^{2c}}{a}\right)^{1/2}\sqrt{\eta}\sqrt{d}\Bigg)\Bigg).
\end{align*}
%\end{small}
This implies that
$$
\mathcal{W}_{2}(\mathcal{L}(\mathbf{y}_{K}),p_{0})
\leq
\mathcal{O}\left(\frac{\sqrt{d}}{(K\eta)^{2c+1}}+e^{\mathcal{O}((K\eta)\eta+(K\eta)^{2c+1}\eta)}
\left((K\eta)^{2c}M+\sqrt{\eta}\sqrt{d}\right)\right)
\leq\mathcal{O}(\epsilon),$$ 
provided that
$K\eta=\frac{d^{\frac{1}{2(2c+1)}}}{\epsilon^{\frac{1}{2c+1}}}$,
$M\leq\epsilon^{1+\frac{2c}{2c+1}}$,
and $\eta\leq\frac{\epsilon^{2}}{d}$,
so that
$K\geq\mathcal{O}\left(\frac{d^{\frac{1}{2(2c+1)}+1}}{\epsilon^{\frac{1}{2c+1}+2}}\right)$.
This completes the proof.
\end{proof}

%%%%%%%%%%%%%%%%%%%%%%%%%%%%
%\vspace{-5mm}
\subsection{Variance-Preserving SDEs} \label{app:VPSDE}

In this section, we consider  Variance-Preserving SDEs with $f(t)=\frac{1}{2}\beta(t)$ and $g(t)=\sqrt{\beta(t)}$ in the forward process \eqref{OU:SDE}.
 We can obtain the following corollary of Theorem~\ref{thm:discrete:2}. 

% In this section, we consider the examples of variance-preserving SDEs.
% First, let us obtain a corollary of Theorem~\ref{thm:discrete:2} applied to the variance-preserving SDE ($f(t)=\frac{1}{2}\beta(t)$ and $g(t)=\sqrt{\beta(t)}$). 

%%%%%%%%%%%%%%%%%%%%%%%%%%%%%%%%%%%%%%%%%%

\begin{corollary}\label{cor:VP:4}
Under the assumptions of Theorem~\ref{thm:discrete:2}, we have
%\begin{small}
\begin{align}
&\mathcal{W}_{2}(\mathcal{L}(\mathbf{y}_{K}),p_{0})
\leq\frac{\Vert\mathbf{x}_{0}\Vert_{L_{2}}}{m_{0}e^{\int_{0}^{K\eta}\beta(s)ds}+1-m_{0}}
\nonumber
\\
&\quad+\sum_{k=1}^{K}
\frac{e^{\int_{k\eta}^{K\eta}\frac{1}{2}\beta(K\eta-t)dt+\int_{k\eta}^{K\eta}\frac{\eta}{4}(\beta(K\eta-t))^{2}dt+\int_{k\eta}^{K\eta}4\eta\max(1,L_{0}^{2})(\beta(K\eta-t))^{2}dt+M_{1}\eta\int_{k\eta}^{K\eta}\beta(K\eta-t)dt}}{m_{0}e^{\int_{0}^{(K-k)\eta}\beta(s)ds}+1-m_{0}}
\nonumber
\\
&\qquad\qquad
\cdot
\Bigg(M_{1}\eta(1+2\Vert\mathbf{x}_{0}\Vert_{L_{2}}+\sqrt{d})\int_{(k-1)\eta}^{k\eta}\beta(K\eta-t)dt
\nonumber
\\
&\quad
+M\int_{(k-1)\eta}^{k\eta}\beta(K\eta-t)dt+\sqrt{\eta}
\left(\frac{1}{2}+2\max(1,L_{0})\right)\left(\int_{(k-1)\eta}^{k\eta}(\beta(K\eta-t))^{2}dt\right)^{1/2}
\nonumber
\\
&\qquad
\cdot
\Bigg(
e^{-\int_{0}^{K\eta}\frac{1}{2}\beta(s)ds}\Vert\mathbf{x}_{0}\Vert_{L_{2}}
\left(\frac{1}{2}+2\max(1,L_{0})\right)
\int_{(k-1)\eta}^{k\eta}\beta(K\eta-s)ds
\nonumber
\\
&\qquad
+\left(\Vert\mathbf{x}_{0}\Vert_{L_{2}}^{2}+d\right)^{1/2}\int_{(K-k)\eta}^{(K-(k-1))\eta}\frac{1}{2}\beta(s)ds
+\left(\int_{(K-k)\eta}^{(K-(k-1))\eta}\beta(s)ds\right)^{1/2}\sqrt{d}\Bigg)\Bigg).
\end{align}
%\end{small}
\end{corollary}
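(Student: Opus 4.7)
The plan is to specialize Theorem~\ref{thm:discrete:2} to the VP-SDE case $f(t)=\frac{1}{2}\beta(t)$, $g(t)=\sqrt{\beta(t)}$, computing each of the quantities $c(t)$, $m(t)$, $L(T-t)$, $\mu(T-t)$, $c_1(T)$, $c_2(T)$, and $h_{k,\eta}$ explicitly in terms of $B(t):=\int_0^t \beta(s)ds$, and then folding everything back into the general bound \eqref{main:thm:upper:bound}. First I would evaluate the denominator in \eqref{c:t:defn}: since $\int_0^t e^{-\int_s^t \beta(v)dv}\beta(s)ds = 1-e^{-B(t)}$, one gets $c(t)=\frac{m_0\beta(t)}{m_0+(1-m_0)e^{-B(t)}}$. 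Then a direct substitution $v = m_0 e^{B(t)} + 1-m_0$ gives the key antiderivative
\begin{equation*}
\int_0^T c(s)\,ds = \log\!\bigl(m_0 e^{B(T)} + 1-m_0\bigr),
\end{equation*}
which produces the leading factor $\bigl(m_0 e^{\int_0^{K\eta}\beta(s)ds}+1-m_0\bigr)^{-1}$ in the corollary. The same substitution, applied on $[k\eta, K\eta]$ after the change of variable $u=K\eta-t$, yields the matching denominator $m_0 e^{\int_0^{(K-k)\eta}\beta(s)ds}+1-m_0$ attached to each term in the sum.

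Next I would handle the Lipschitz bound. Since $L(T-t)=\min\!\bigl((1-e^{-B(T-t)})^{-1},\,e^{B(T-t)}L_0\bigr)$, splitting into the cases $B(T-t)\le\log 2$ and $B(T-t)\ge\log 2$ (where the first factor is $\le 2$ in the former and the second factor is $\le 2L_0$ in the latter) gives $L(T-t)\le 2\max(1,L_0)$, so $f(T-t)+(g(T-t))^2 L(T-t)\le \bigl(\tfrac12+2\max(1,L_0)\bigr)\beta(T-t)$, which is exactly the prefactor appearing in the corollary. For $c_2(T)$, the explicit formula \eqref{c:2:defn} becomes $\sup_t[e^{-B(t)}\Vert\mathbf{x}_0\Vert_{L_2}^2+d(1-e^{-B(t)})]^{1/2}$; since this is a convex combination of $\Vert\mathbf{x}_0\Vert_{L_2}^2$ and $d$, it equals $\max(\Vert\mathbf{x}_0\Vert_{L_2},\sqrt d)\le (\Vert\mathbf{x}_0\Vert_{L_2}^2+d)^{1/2}$, and also $\le \Vert\mathbf{x}_0\Vert_{L_2}+\sqrt d$; I would use the former in the $h_{k,\eta}$ term and the latter inside $M_1\eta(1+\Vert\mathbf{x}_0\Vert_{L_2}+c_2(T))$ to get $1+2\Vert\mathbf{x}_0\Vert_{L_2}+\sqrt d$. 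For $c_1(T)$, since $\tfrac12 m(t) = c(t)-f(t) = \tfrac{\beta(t)}{2}\cdot\tfrac{m_0-(1-m_0)e^{-B(t)}}{m_0+(1-m_0)e^{-B(t)}}$, the assumption $m_0>1/2$ (recall \eqref{assump:stepsize:VP}) ensures the numerator is positive for all $t\ge 0$, hence the supremum in \eqref{c:1:defn} is attained at $t=0$ and $c_1(T)=e^{-\frac{1}{2}B(T)}\Vert\mathbf{x}_0\Vert_{L_2}$, matching the corresponding coefficient in the corollary.

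For the product of contraction factors I would use the standard bound $1-x\le e^{-x}$ so that
\begin{equation*}
\prod_{j=k+1}^{K}\!\Bigl(1-\!\int_{(j-1)\eta}^{j\eta}\!\mu(T-t)dt+M_1\eta\!\int_{(j-1)\eta}^{j\eta}\!\beta(T-t)dt\Bigr)\le e^{-\int_{k\eta}^{K\eta}\mu(T-t)dt+M_1\eta\int_{k\eta}^{K\eta}\beta(T-t)dt},
\end{equation*}
then substitute $\mu(T-t)=c(T-t)-\tfrac12\beta(T-t)-\tfrac{\eta}{4}\beta(T-t)^2-\eta\beta(T-t)^2 L(T-t)^2$ and use the bound $L(T-t)^2\le 4\max(1,L_0^2)$ together with the change of variable that turns $\int_{k\eta}^{K\eta}c(T-t)dt$ into $\int_0^{(K-k)\eta}c(s)ds = \log(m_0 e^{B((K-k)\eta)}+1-m_0)$. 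The logarithm exponentiates to give the denominator in the sum, while the remaining pieces $\int\tfrac12\beta$, $\int\tfrac{\eta}{4}\beta^2$, $\int 4\eta\max(1,L_0^2)\beta^2$, and $M_1\eta\int\beta$ appear in the exponential prefactor of the corollary.

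The final step is bookkeeping: assemble $h_{k,\eta}$ using $c_1(T)=e^{-B(T)/2}\Vert\mathbf{x}_0\Vert_{L_2}$, the $f+g^2 L$ bound, $c_2(T)\le(\Vert\mathbf{x}_0\Vert_{L_2}^2+d)^{1/2}$, and $g(s)^2=\beta(s)$, then multiply by the $\sqrt{\eta}\bigl(\int_{(k-1)\eta}^{k\eta}[f+g^2L]^2dt\bigr)^{1/2}\le\sqrt{\eta}(\tfrac12+2\max(1,L_0))(\int_{(k-1)\eta}^{k\eta}\beta(K\eta-t)^2dt)^{1/2}$ factor. I do not expect any serious obstacle: the main work is the bookkeeping of change-of-variable $u=K\eta-t$ across several integrals and keeping the two different bounds on $c_2(T)$ straight in the two places they appear. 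The slightly delicate point is the uniform bound $L(T-t)\le 2\max(1,L_0)$, for which the two-case split at $B(T-t)=\log 2$ is the cleanest argument.
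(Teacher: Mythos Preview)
Your proposal is correct and follows essentially the same route as the paper's proof: specialize each quantity in Theorem~\ref{thm:discrete:2} to $f=\tfrac12\beta$, $g=\sqrt\beta$, evaluate $\int_0^T c(t)\,dt=\log(m_0e^{B(T)}+1-m_0)$, bound $L(T-t)\le 2\max(1,L_0)$ by a two-case split, bound $c_2(T)\le(\Vert\mathbf{x}_0\Vert_{L_2}^2+d)^{1/2}$, identify $c_1(T)=e^{-B(T)/2}\Vert\mathbf{x}_0\Vert_{L_2}$, and replace the product of contraction factors by an exponential via $1-x\le e^{-x}$. One small slip: in your $L(T-t)$ argument the cases are swapped---it is when $B(T-t)\ge\log 2$ that $(1-e^{-B(T-t)})^{-1}\le 2$, and when $B(T-t)\le\log 2$ that $e^{B(T-t)}L_0\le 2L_0$---but the conclusion $L(T-t)\le 2\max(1,L_0)$ is unaffected.
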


\begin{proof}%[Proof of Corollary~\ref{cor:VP:4}]
We apply Theorem~\ref{thm:discrete:2} applied to the variance-preserving SDE ($f(t)=\frac{1}{2}\beta(t)$ and $g(t)=\sqrt{\beta(t)}$).
First, we can compute that
$$L(T-t)
%&=\min\left(\left(\int_{0}^{T-t}e^{-2\int_{s}^{T-t}f(v)dv}(g(s))^{2}ds\right)^{-1},
%\left(e^{\int_{0}^{T-t}f(s)ds}\right)^{2}L_{0}\right)
%\nonumber
%\\
%&
=\min\left(\frac{1}{1-e^{-\int_{0}^{T-t}\beta(s)ds}},e^{\int_{0}^{T-t}\beta(s)ds}L_{0}\right).$$
If $e^{\int_{0}^{T-t}\beta(s)ds}\geq 2$, then $\frac{1}{1-e^{-\int_{0}^{T-t}\beta(s)ds}}\leq 2$
and otherwise $e^{\int_{0}^{T-t}\beta(s)ds}L_{0}\leq 2L_{0}$. 
Therefore, for any $0\leq t\leq T$, 
$L(T-t)\leq 2\max(1,L_{0})$.
%%%%%%%%%%%%%%%%%%%%%%%%%%%%%%%%%%%
By applying Theorem~\ref{thm:discrete:2}, we have
\begin{small}
\begin{align*}
&\mathcal{W}_{2}(\mathcal{L}(\mathbf{y}_{K}),p_{0})
\leq e^{-\int_{0}^{K\eta}c(t)dt}\Vert\mathbf{x}_{0}\Vert_{L_{2}}
+\sum_{k=1}^{K}
\prod_{j=k+1}^{K}\gamma_{j,\eta}\nonumber
\\
&\quad\cdot\Bigg(M_{1}\eta\left(1+\Vert\mathbf{x}_{0}\Vert_{L_{2}}
+c_{2}(T)\right)\int_{(k-1)\eta}^{k\eta}(g(T-t))^{2}dt
\nonumber
\\
&+M\int_{(k-1)\eta}^{k\eta}(g(T-t))^{2}dt+\sqrt{\eta}h_{k,\eta}\left(\int_{(k-1)\eta}^{k\eta}[f(T-t)+(g(T-t))^{2}L(T-t)]^{2}dt\right)^{1/2}\Bigg),
\end{align*}
\end{small}
where, the definition of $\gamma_{j,\eta}$ in \eqref{gamma:k:defn} depends on $\mu(T-t)$, such that for any $0\leq t\leq T$:
\begin{small}
\begin{align*}
\mu(T-t)&=\frac{(g(T-t))^{2}}{\frac{1}{m_{0}}e^{-2\int_{0}^{T-t}f(s)ds}+\int_{0}^{T-t}e^{-2\int_{s}^{T-t}f(v)dv}(g(s))^{2}ds}-f(T-t)
\nonumber
\\
&\qquad\qquad\qquad\qquad\qquad
-\eta(f(T-t))^{2}-\eta(g(T-t))^{4}(L(T-t))^{2},
\nonumber
\\
&\geq\frac{m_{0}\beta(T-t)}{e^{-\int_{0}^{T-t}\beta(s)ds}+m_{0}(1-e^{-\int_{0}^{T-t}\beta(s)ds})}
\\
&\qquad\qquad\qquad\qquad\qquad
-\frac{1}{2}\beta(T-t)
-\frac{\eta}{4}(\beta(T-t))^{2}-4\eta(\beta(T-t))^{2}\max(1,L_{0}^{2}),
\end{align*}
\end{small}
where we assume $\eta$ is sufficiently small such that
$0\leq \gamma_{j,\eta}\leq 1$,
for every $j=1,2,\ldots,K$.
% Moreover,
% \begin{align*}
% h_{k,\eta}&:=c_{1}(T)\int_{(k-1)\eta}^{k\eta}[f(T-s)+(g(T-s))^{2}L(T-s)]ds
% \nonumber
% \\
% &\qquad
% +c_{2}(T)\int_{T-k\eta}^{T-(k-1)\eta}f(s)ds
% +\left(\int_{T-k\eta}^{T-(k-1)\eta}(g(s))^{2}ds\right)^{1/2}\sqrt{d},
% \end{align*}
% where
% $c_{1}(T)=\sup_{0\leq t\leq T}e^{-\frac{1}{2}\int_{0}^{t}m(T-s)ds}e^{-\int_{0}^{T}f(s)ds}\Vert\mathbf{x}_{0}\Vert_{L_{2}}
% =e^{-\int_{0}^{T}\frac{1}{2}\beta(s)ds}\Vert\mathbf{x}_{0}\Vert_{L_{2}}$,
% and
% \begin{align*}
% c_{2}(T)
% &=\sup_{0\leq t\leq T}\left(e^{-2\int_{0}^{t}f(s)ds}\Vert\mathbf{x}_{0}\Vert_{L_{2}}^{2}
% +d\int_{0}^{t}e^{-2\int_{s}^{t}f(v)dv}(g(s))^{2}ds\right)^{1/2}\nonumber
% \\
% &=\sup_{0\leq t\leq T}\left(e^{-\int_{0}^{t}\beta(s)ds}\Vert\mathbf{x}_{0}\Vert_{L_{2}}^{2}
% +d\left(1-e^{-\int_{0}^{t}\beta(s)ds}\right)\right)^{1/2}
% \leq\left(\Vert\mathbf{x}_{0}\Vert_{L_{2}}^{2}+d\right)^{1/2}.
% \end{align*}
One can verify that
\begin{small}
\begin{align*}
h_{k,\eta}&\leq
e^{-\int_{0}^{T}\frac{1}{2}\beta(s)ds}\Vert\mathbf{x}_{0}\Vert_{L_{2}}
\left(\frac{1}{2}+2\max(1,L_{0})\right)
\int_{(k-1)\eta}^{k\eta}\beta(T-s)ds
\nonumber
\\
&\qquad
+\left(\Vert\mathbf{x}_{0}\Vert_{L_{2}}^{2}+d\right)^{1/2}\int_{T-k\eta}^{T-(k-1)\eta}\frac{1}{2}\beta(s)ds
+\left(\int_{T-k\eta}^{T-(k-1)\eta}\beta(s)ds\right)^{1/2}\sqrt{d}.
\end{align*}
\end{small}
%%%%%%%%%%%%%%%%%%%%%%%%%%%%%%%
Next, for VP-SDE, 
we have $f(t)=\frac{1}{2}\beta(t)$ and $g(t)=\sqrt{\beta(t)}$ so that we can compute:
$$
c(t)=\frac{m_{0}\beta(t)}{e^{-\int_{0}^{t}\beta(s)ds}+m_{0}\int_{0}^{t}e^{-\int_{s}^{t}\beta(v)dv}\beta(s)ds}
=\frac{m_{0}\beta(t)}{e^{-\int_{0}^{t}\beta(s)ds}+m_{0}(1-e^{-\int_{0}^{t}\beta(s)ds})}.
$$
It follows that
$$
\int_{0}^{T}c(t)dt
=\int_{0}^{\int_{0}^{T}\beta(s)ds}\frac{m_{0}dx}{m_{0}+(1-m_{0})e^{-x}}
=\log\left(m_{0}e^{\int_{0}^{T}\beta(s)ds}+1-m_{0}\right).
$$
Hence, we obtain
$$
e^{-\int_{0}^{T}c(t)dt}\Vert\mathbf{x}_{0}\Vert_{L_{2}}
=\frac{\Vert\mathbf{x}_{0}\Vert_{L_{2}}}{m_{0}e^{\int_{0}^{T}\beta(s)ds}+1-m_{0}}.
$$
%%%%%%%%%%%%%%%%%%%%%%%%%%%%%
By using $T=K\eta$, we complete the proof.
%we have
%\begin{align}
%&\mathcal{W}_{2}(\mathcal{L}(\mathbf{y}_{K}),p_{0})
%\nonumber
%\\
%&\leq\frac{\Vert\mathbf{x}_{0}\Vert_{L_{2}}}{m_{0}e^{\int_{0}^{K\eta}\beta(s)ds}+1-m_{0}}
%\nonumber
%\\
%&\quad+\sum_{k=1}^{K}
%\frac{e^{\int_{k\eta}^{K\eta}\frac{1}{2}\beta(K\eta-t)dt+\int_{k\eta}^{K\eta}\frac{\eta}{4}(\beta(K\eta-t))^{2}dt+\int_{k\eta}^{K\eta}4\eta\max(1,L_{0}^{2})(\beta(K\eta-t))^{2}dt+M_{1}\eta\int_{k\eta}^{K\eta}\beta(K\eta-t)dt}}{m_{0}e^{\int_{0}^{(K-k)\eta}\beta(s)ds}+1-m_{0}}
%\nonumber
%\\
%&\qquad
%\cdot
%\Bigg(M_{1}\eta(1+\Vert\mathbf{x}_{0}\Vert_{L_{2}}+c_{2}(T))\int_{(k-1)\eta}^{k\eta}\beta(K\eta-t)dt
%+M\int_{(k-1)\eta}^{k\eta}\beta(K\eta-t)dt
%\nonumber
%\\
%&\quad
%+\sqrt{\eta}
%\left(\frac{1}{2}+2\max(1,L_{0})\right)\left(\int_{(k-1)\eta}^{k\eta}(\beta(K\eta-t))^{2}dt\right)^{1/2}
%\nonumber
%\\
%&\qquad
%\cdot
%\Bigg(
%e^{-\int_{0}^{K\eta}\frac{1}{2}\beta(s)ds}\Vert\mathbf{x}_{0}\Vert_{L_{2}}
%\left(\frac{1}{2}+2\max(1,L_{0})\right)
%\int_{(k-1)\eta}^{k\eta}\beta(K\eta-s)ds
%\nonumber
%\\
%&\qquad
%+\left(\Vert\mathbf{x}_{0}\Vert_{L_{2}}^{2}+d\right)^{1/2}\int_{(K-k)\eta}^{(K-(k-1))\eta}\frac{1}{2}\beta(s)ds
%+\left(\int_{(K-k)\eta}^{(K-(k-1))\eta}\beta(s)ds\right)^{1/2}\sqrt{d}\Bigg)\Bigg).
%\end{align}
%This completes the proof.
\end{proof}
%%%%%%%%%%%%%%%%%%%%%%%%%%%%%%%%%%%
% We showed in Proposition~\ref{prop:VP} that under mild assumptions on the function $\beta(t)$, 
% the class of VP-SDEs will always lead to the complexity $\widetilde{\mathcal{O}}\left(d/\epsilon^{2}\right)$
% where $\widetilde{\mathcal{O}}$ ignores the logarithmic factors. 
% Next, let us recall the statement of Proposition~\ref{prop:VP} and provide a proof.

% \begin{proposition}[Restatement of Proposition~\ref{prop:VP}]\label{prop:VP:2}
% Under the assumptions of Corollary~\ref{cor:VP:4}, 
% assume that $\beta(t)$ is positive and increasing in $t$
% and there exist some $c_{1},c_{2},c_3>0$ such that
% $\beta(t)\leq c_{1}\left(\int_{0}^{t}\beta(s)ds\right)^{c_3}+c_{2}<\infty$ for every $t\geq 0$. 
% Then, we have $\mathcal{W}_{2}(\mathcal{L}(\mathbf{y}_{K}),p_{0})\leq\mathcal{O}(\epsilon)$
% after
% $K=\mathcal{O}\left(\frac{d(\log(d/\epsilon))^{3 c_3+1}}{\epsilon^{2}}\right)$
% iterations, provided that $M\leq\frac{\epsilon}{(\log(\sqrt{d}/\epsilon))^{c_3}}$
% and $\eta\leq\frac{\epsilon^{2}}{d(\log(1/\epsilon))^{3/c_3}}$.
% \end{proposition}

%%%%%%%%%%%%%%%%%%%%%%%%%%%%%%%

Next, we prove Proposition~\ref{prop:VP}. 

\subsubsection{Proof of Proof of Proposition~\ref{prop:VP}}

\begin{proof}%[Proof of Proposition~\ref{prop:VP}]
It follows from Corollary~\ref{cor:VP:4}  and \eqref{eq:L2-x0} that
\begin{small}
\begin{align*}
&\mathcal{W}_{2}(\mathcal{L}(\mathbf{y}_{K}),p_{0})
\leq\frac{\sqrt{2d/m_{0}}+\Vert\mathbf{x}_{\ast}\Vert}{m_{0}e^{\int_{0}^{K\eta}\beta(s)ds}+1-m_{0}}
\\
&\qquad\qquad\qquad+\sum_{k=1}^{K}
\frac{e^{(1+2M_{1}\eta)\int_{0}^{(K-k)\eta}\frac{1}{2}\beta(t)dt+(\frac{\eta}{4}+4\eta\max(1,L_{0}^{2}))\int_{0}^{(K-k)\eta}(\beta(t))^{2}dt}}{m_{0}e^{\int_{0}^{(K-k)\eta}\beta(s)ds}+1-m_{0}}
\nonumber
\\
&\qquad\qquad
\cdot
\Bigg(\left(M+M_{1}\eta\left(1+2\left(\sqrt{2d/m_{0}}+\Vert\mathbf{x}_{\ast}\Vert\right)+\sqrt{d}\right)\right)\int_{(K-k)\eta}^{(K-k+1)\eta}\beta(t)dt
\nonumber
\\
&\quad
+\sqrt{\eta}
\left(\frac{1}{2}+2\max(1,L_{0})\right)\left(\int_{(K-k)\eta}^{(K-k+1)\eta}(\beta(t))^{2}dt\right)^{1/2}
\nonumber
\\
&\qquad
\cdot
\Bigg(
e^{-\int_{0}^{K\eta}\frac{1}{2}\beta(s)ds}\left(\sqrt{2d/m_{0}}+\Vert\mathbf{x}_{\ast}\Vert\right)
\left(\frac{1}{2}+2\max(1,L_{0})\right)
\int_{(K-k)\eta}^{(K-k+1)\eta}\beta(s)ds
\nonumber
\\
&\qquad
+\left(\left(\sqrt{2d/m_{0}}+\Vert\mathbf{x}_{\ast}\Vert\right)^{2}+d\right)^{1/2}\int_{(K-k)\eta}^{(K-(k-1))\eta}\frac{1}{2}\beta(s)ds
+\left(\int_{(K-k)\eta}^{(K-(k-1))\eta}\beta(s)ds\right)^{1/2}\sqrt{d}\Bigg)\Bigg).
\end{align*}
\end{small}
Since $\beta(t)$ is increasing in $t$, we can compute 
\begin{small}
\begin{align*}
\mathcal{W}_{2}(\mathcal{L}(\mathbf{y}_{K}),p_{0})
% &\leq
% \mathcal{O}\Bigg(\frac{\sqrt{d}}{e^{\int_{0}^{K\eta}\beta(s)ds}}
% +\sum_{k=1}^{K}
% \frac{e^{\int_{0}^{(K-k)\eta}\frac{1}{2}\beta(t)dt}}{m_{0}e^{\int_{0}^{(K-k)\eta}\beta(s)ds}+1-m_{0}}
% e^{M_{1}\eta\int_{0}^{K\eta}\beta(t)dt+(\frac{\eta}{4}+4\eta\max(1,L_{0}^{2}))\beta(K\eta)\int_{0}^{K\eta}\beta(t)dt}
% \nonumber
% \\
% &\qquad\qquad
% \cdot
% \Bigg(\left(M+M_{1}\eta\sqrt{d}\right)\eta\beta(K\eta)
% +\sqrt{\eta}
% \sqrt{\eta}\beta(K\eta)
% \cdot
% \Bigg(\sqrt{d}\eta\beta(K\eta)
% +\sqrt{\beta(K\eta)}\sqrt{\eta}\sqrt{d}\Bigg)\Bigg)\Bigg)
% \nonumber
% \\
&\leq
\mathcal{O}\Bigg(\frac{\sqrt{d}}{e^{\int_{0}^{K\eta}\beta(s)ds}}
+e^{M_{1}\eta\int_{0}^{K\eta}\beta(t)dt+(\frac{\eta}{4}+4\eta\max(1,L_{0}^{2}))\beta(K\eta)\int_{0}^{K\eta}\beta(t)dt}
\nonumber
\\
&\qquad
\cdot
\Bigg(\left(M+M_{1}\eta\sqrt{d}\right)\beta(K\eta)
+\beta(K\eta)
\cdot
\Bigg(\sqrt{d}\eta\beta(K\eta)
+\sqrt{\beta(K\eta)}\sqrt{\eta}\sqrt{d}\Bigg)\Bigg)\Bigg).
\end{align*}
\end{small}
Since $\beta(t)\leq c_{1}\left(\int_{0}^{t}\beta(s)ds\right)^{c_3}+c_{2}$ uniformly in $t$
for some $c_{1},c_{2}, c_3>0$, we have
\begin{small}
\begin{align*}
\mathcal{W}_{2}(\mathcal{L}(\mathbf{y}_{K}),p_{0})
&\leq
\mathcal{O}\Bigg(\frac{\sqrt{d}}{e^{\int_{0}^{K\eta}\beta(s)ds}}
+e^{M_{1}\eta\int_{0}^{K\eta}\beta(t)dt+(\frac{\eta}{4}+4\eta\max(1,L_{0}^{2}))\left(c_{1}\left(\int_{0}^{K\eta}\beta(t)dt\right)^{1+c_3}+c_{2}\int_{0}^{K\eta}\beta(t)dt\right)}
\nonumber
\\
&\qquad
\cdot
\Bigg(\left(M+M_{1}\eta\sqrt{d}\right)\left(\int_{0}^{K\eta}\beta(t)dt\right)^{c_3}
\nonumber
\\
&\qquad\qquad
+\Bigg(\sqrt{d}\eta\left(\int_{0}^{K\eta}\beta(t)dt\right)^{2c_3}
+\left(\int_{0}^{K\eta}\beta(t)dt\right)^{3c_3/2}\sqrt{\eta}\sqrt{d}\Bigg)\Bigg)\Bigg)
\leq\mathcal{O}(\epsilon),
\end{align*}
\end{small}
provided that
$\int_{0}^{K\eta}\beta(s)ds=\log\left(\sqrt{d}/\epsilon\right)$,
$M\leq\frac{\epsilon}{(\log(\sqrt{d}/\epsilon))^{c_3}}$,
and $\eta\leq\frac{\epsilon^{2}}{d(\log(1/\epsilon))^{3/c_3}}$.
Since $\beta(t)$ is increasing, 
$\log\left(\sqrt{d}/\epsilon\right)
\geq\beta(0)K\eta$,
so that 
$K\leq\frac{\log(d/\epsilon)}{\beta(0)\eta}
=\mathcal{O}\left(\frac{d(\log(d/\epsilon))^{3c_3+1}}{\epsilon^{2}}\right)$
if we take $\eta=\frac{\epsilon^{2}}{d(\log(d/\epsilon))^{3/c_3}}$.
This completes the proof.
\end{proof}

%%%%%%%%%%%%%%%%%%%%%%%%%%%%%%%%%
In the next two subsections, we consider special functions $\beta(t)$ in Corollary~\ref{cor:VP:4} and
derive the corresponding results in Table~\ref{table:summary}. %In particular, it is possible to improve the complexity results in Proposition~\ref{prop:VP} in terms of logarithmic factors for such special examples. 

% Next, let us consider a few examples for VP-SDEs.
% Even though Proposition~\ref{prop:VP} applies to all VP-SDEs under
% mild conditions, the dependence on the logarithmic factors of $d$
% and $\epsilon$ may be improved for various examples.

%\vspace{-5mm}
%%%%%%%%%%%%%%%%%%%%%%%%%%%%%%%%%%%%%%%%%%%%%%%%%%%%%%%%%%%%
\subsubsection{Example 1: $\beta(t)=(b+at)^{\rho}$}
We first consider the special case 
$\beta(t)=(b+at)^{\rho}$. 
This includes the special case $\beta(t)=b+at$ when $\rho=1$
that is studied in \cite{Ho2020}.
%Then we can obtain the following result from Corollary~\ref{cor:VP:4}. 
%\vspace{-5mm}

\begin{corollary}\label{cor:VP:6}
Assume $\beta(t)=(b+at)^{\rho}$. 
Then, we have $\mathcal{W}_{2}(\mathcal{L}(\mathbf{y}_{K}),p_{0})\leq\mathcal{O}(\epsilon)$
after
$K=\mathcal{O}\left(\frac{d(\log(d/\epsilon))^{\frac{1}{\rho+1}}}{\epsilon^{2}}\right)$
iterations provided that $M\leq\epsilon$ and $\eta\leq\frac{\epsilon^{2}}{d}$.
\end{corollary}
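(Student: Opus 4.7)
The plan is to specialize Corollary~\ref{cor:VP:4} to $\beta(t) = (b+at)^\rho$ and then carefully track the dominant exponents. First I would compute the two key antiderivatives:
\begin{align*}
\int_0^t \beta(s)\,ds &= \frac{(b+at)^{\rho+1} - b^{\rho+1}}{a(\rho+1)}, \\
\int_0^t (\beta(s))^2\,ds &= \frac{(b+at)^{2\rho+1} - b^{2\rho+1}}{a(2\rho+1)},
\end{align*}
and observe that since $\beta$ is increasing, $\beta(K\eta) = (b+aK\eta)^\rho$ controls all pointwise upper bounds. This immediately puts us in the regime of Proposition~\ref{prop:VP} with $c_3 = \rho/(\rho+1) < 1$, but we can do a slightly sharper direct computation to get the cleaner exponent $1/(\rho+1)$.

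Next, I would bound the initialization term $(\sqrt{2d/m_0} + \|\mathbf{x}_*\|)/(m_0 e^{\int_0^{K\eta}\beta(s)ds} + 1 - m_0)$. Forcing this to be $\mathcal{O}(\epsilon)$ reduces to requiring $\int_0^{K\eta}\beta(s)ds \ge \log(\sqrt{d}/\epsilon)$, which via the explicit formula above translates into
\begin{equation*}
K\eta \ge \frac{1}{a}\Bigl[\bigl(a(\rho+1)\log(\sqrt{d}/\epsilon) + b^{\rho+1}\bigr)^{1/(\rho+1)} - b\Bigr] = \mathcal{O}\bigl((\log(d/\epsilon))^{1/(\rho+1)}\bigr).
\end{equation*}
Combined with the stepsize constraint $\eta \le \epsilon^2/d$, this gives the announced complexity $K = \mathcal{O}\bigl(d(\log(d/\epsilon))^{1/(\rho+1)}/\epsilon^2\bigr)$.

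The main technical step — and where the argument is most delicate — is showing the sum over $k$ (the score-matching plus discretization term) is also $\mathcal{O}(\epsilon)$. The key observation is that in the prefactor
\begin{equation*}
\frac{e^{\int_{k\eta}^{K\eta}\frac{1}{2}\beta(K\eta - t)dt + (\frac{\eta}{4} + 4\eta\max(1,L_0^2))\int_{k\eta}^{K\eta}(\beta(K\eta-t))^2 dt + M_1\eta\int_{k\eta}^{K\eta}\beta(K\eta-t)dt}}{m_0 e^{\int_0^{(K-k)\eta}\beta(s)ds} + 1 - m_0},
\end{equation*}
the dominant exponent $\tfrac{1}{2}\int_0^{(K-k)\eta}\beta(s)ds$ in the numerator is essentially cancelled by the denominator, leaving a factor of order $\exp(\eta\cdot(b+aK\eta)^{2\rho+1})$. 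Under the target choice $K\eta = \Theta((\log(d/\epsilon))^{1/(\rho+1)})$ and $\eta\le\epsilon^2/d$, this factor is $\mathcal{O}(1)$ since $\eta(K\eta)^{2\rho+1}$ stays bounded (this is the same type of sanity check that appears in the proof of Proposition~\ref{prop:VP}). With the prefactor tamed, the per-iteration contribution inside the sum is $\mathcal{O}(M\eta\beta(K\eta) + M_1\eta^2\sqrt{d}\,\beta(K\eta) + \sqrt{\eta d}\cdot\eta\beta(K\eta)^{3/2})$ and summing over $k\le K$ (replacing one factor of $\eta$ with $K\eta$) yields $\mathcal{O}(M + M_1\eta\sqrt{d} + \sqrt{\eta d}) = \mathcal{O}(\epsilon)$ under the stated hypotheses $M\le\epsilon$ and $\eta\le\epsilon^2/d$.

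The hard part will be bookkeeping the polynomial-in-$(b+aK\eta)$ factors that accumulate when passing from $\int(\beta)^2$ to the supremum $\beta(K\eta)$, and verifying that those factors are absorbed into the $\widetilde{\mathcal{O}}$ once $K\eta$ only grows like $(\log(d/\epsilon))^{1/(\rho+1)}$; this is where the exponent $1/(\rho+1)$ (rather than the larger $3c_3+1$ exponent from Proposition~\ref{prop:VP}) gets sharpened. After combining the initialization bound with the bound on the sum and invoking triangle inequality, I obtain $\mathcal{W}_2(\mathcal{L}(\mathbf{y}_K), p_0) = \mathcal{O}(\epsilon)$, which completes the proof.
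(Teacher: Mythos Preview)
Your overall strategy matches the paper's: specialize Corollary~\ref{cor:VP:4}, compute the antiderivatives of $\beta$ and $\beta^2$, choose $K\eta$ so the initialization term is $\mathcal{O}(\epsilon)$, and then bound the remaining sum. The antiderivatives and the choice $K\eta \sim (\log(d/\epsilon))^{1/(\rho+1)}$ are correct.

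The gap is in the summation step. You treat the prefactor, after the ``$e^{x/2}$ versus $e^x$'' cancellation, as uniformly $\mathcal{O}(1)$, and you bound the per-step term uniformly by $M\eta\beta(K\eta)+\cdots$. Summing $K$ such terms then gives $M\,K\eta\,\beta(K\eta)+\cdots$. But $K\eta\,\beta(K\eta)\asymp (K\eta)^{\rho+1}\asymp \log(d/\epsilon)$, so your bound is only $\mathcal{O}(M\log(d/\epsilon))$, not $\mathcal{O}(M)$; with the hypothesis $M\le\epsilon$ this is $\mathcal{O}(\epsilon\log(d/\epsilon))$. The same loss hits the $\sqrt{\eta d}$ term and would force the stronger hypotheses $M\lesssim\epsilon/\log$ and $\eta\lesssim\epsilon^2/(d\cdot\mathrm{polylog})$, pushing the complexity back toward the looser exponent of Proposition~\ref{prop:VP} rather than the sharp $1/(\rho+1)$.

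What the paper does (implicitly, in the step ``We can compute that\ldots'') is keep the $k$-dependence in \emph{both} factors. Writing $B(t)=\int_0^t\beta(s)\,ds$, the prefactor is not merely $\mathcal{O}(1)$ but of order $e^{-B((K-k)\eta)/2}$, and the $M$-term carries the increment $\int_{(K-k)\eta}^{(K-k+1)\eta}\beta(s)\,ds$, which is (up to constants) $((K-k+1)\eta)^\rho\eta$. Their product sums as a Riemann sum:
\[
\sum_{k} e^{-B((K-k)\eta)/2}\,\bigl(B((K-k+1)\eta)-B((K-k)\eta)\bigr)\;\approx\;\int_0^{K\eta} e^{-B(t)/2}\,dB(t)\;=\;2\bigl(1-e^{-B(K\eta)/2}\bigr)\;=\;\mathcal{O}(1),
\]
which is exactly how the paper passes from the sum to $\mathcal{O}(M+M_1\eta\sqrt{d}+\sqrt{d}\eta+\sqrt{\eta d})$ without any residual logarithm. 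The extra powers $((K-k)\eta)^{\rho/2}$ and $((K-k)\eta)^{\rho}$ appearing in the discretization term are absorbed by the exponential decay in the same way. Once you patch the summation with this observation, the rest of your argument goes through and yields the corollary as stated.
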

%\vspace{-5mm}
%%%%%%%%%%%%%%%%%%%%%%%%%%%%%%%%%%%%

\begin{proof}%[Proof of Corollary~\ref{cor:VP:6}]
When $\beta(t)=(b+at)^{\rho}$, by Corollary~\ref{cor:VP:4} and \eqref{eq:L2-x0}, we can compute that
\begin{small}
\begin{align*}
&\mathcal{W}_{2}(\mathcal{L}(\mathbf{y}_{K}),p_{0})
\leq\frac{\sqrt{2d/m_{0}}+\Vert\mathbf{x}_{\ast}\Vert}{m_{0}e^{\frac{1}{a(\rho+1)}((b+aK\eta)^{\rho+1}-b^{\rho+1})}+1-m_{0}}
\nonumber
\\
&\qquad
+\sum_{k=1}^{K}
\frac{e^{\frac{1+2M_{1}\eta}{2a(\rho+1)}((b+a(K-k)\eta)^{\rho+1}-b^{\rho+1})+(\frac{\eta}{4}+4\eta\max(1,L_{0}^{2}))\frac{1}{a(2\rho+1)}((b+a(K-k)\eta)^{2\rho+1}-b^{2\rho+1})}}{m_{0}e^{\frac{1}{a(\rho+1)}((b+a(K-k)\eta\eta)^{\rho+1}-b^{\rho+1})}+1-m_{0}}
\nonumber
\\
&\qquad
\cdot
\Bigg(\left(M+M_{1}\eta\left(1+2\left(\sqrt{2d/m_{0}}+\Vert\mathbf{x}_{\ast}\Vert\right)+\sqrt{d}\right)\right)
\nonumber
\\
&\qquad\qquad\qquad\qquad\cdot 
\frac{\left((b+a(K-k+1)\eta)^{\rho+1}-(b+a(K-k)\eta)^{\rho+1}\right)}{a(\rho+1)}
\nonumber
\\
&\qquad+\sqrt{\eta}
\left(\frac{1}{2}+2\max(1,L_{0})\right)\left(\frac{\left((b+a(K-k+1)\eta)^{2\rho+1}-(b+a(K-k)\eta)^{2\rho+1}\right)}{a(2\rho+1)}\right)^{1/2}
\nonumber
\\
&\qquad
\cdot
\Bigg(
e^{-\frac{((b+aK\eta)^{\rho+1}-b^{\rho+1})}{2a(\rho+1)}}\left(\sqrt{2d/m_{0}}+\Vert\mathbf{x}_{\ast}\Vert\right)
\left(\frac{1}{2}+2\max(1,L_{0})\right)
\nonumber
\\
&\qquad\qquad\cdot
\frac{\left((b+a(K-k+1)\eta)^{\rho+1}-(b+a(K-k)\eta)^{\rho+1}\right)}{a(\rho+1)}
\nonumber
\\
&\quad
+\frac{1}{2}\left(\left(\sqrt{2d/m_{0}}+\Vert\mathbf{x}_{\ast}\Vert\right)^{2}+d\right)^{1/2}\frac{\left((b+a(K-k+1)\eta)^{\rho+1}-(b+a(K-k)\eta)^{\rho+1}\right)}{a(\rho+1)}
\nonumber
\\
&\quad\qquad\qquad\qquad
+\left(\frac{1}{a(\rho+1)}\left((b+a(K-k+1)\eta)^{\rho+1}-(b+a(K-k)\eta)^{\rho+1}\right)\right)^{1/2}\sqrt{d}\Bigg)\Bigg).
\end{align*}
\end{small}
Thus, we can compute that
\begin{small}
\begin{align*}
&\mathcal{W}_{2}(\mathcal{L}(\mathbf{y}_{K}),p_{0})
\leq\mathcal{O}\Bigg(\frac{\sqrt{d}}{e^{\frac{(b+aK\eta)^{\rho+1}}{a(\rho+1)}}}
+\sum_{k=1}^{K}
\frac{e^{\frac{1+2M_{1}\eta}{2a(\rho+1)}((b+a(K-k)\eta)^{\rho+1}-b^{\rho+1})+(\frac{\eta}{4}+4\eta\max(1,L_{0}^{2}))\frac{(b+aK\eta)^{2\rho+1}}{a(2\rho+1)}}}{m_{0}e^{\frac{1}{a(\rho+1)}((b+a(K-k)\eta\eta)^{\rho+1}-b^{\rho+1})}+1-m_{0}}
\nonumber
\\
&\qquad\qquad
\cdot
\Bigg(\left(M+M_{1}\eta\left(1+2\left(\sqrt{2d/m_{0}}+\Vert\mathbf{x}_{\ast}\Vert\right)+\sqrt{d}\right)\right)((K-k+1)\eta)^{\rho}\eta
\nonumber
\\
&\qquad
+\sqrt{\eta}((K-k+1)\eta)^{\rho}\sqrt{\eta}
\cdot
\Bigg(
\sqrt{d}((K-k+1)\eta)^{\rho}\eta
+((K-k+1)\eta)^{\rho/2}\sqrt{\eta}\sqrt{d}\Bigg)\Bigg)\Bigg)
\nonumber
\\
&\leq\mathcal{O}\Bigg(\frac{\sqrt{d}}{e^{\frac{(b+aK\eta)^{\rho+1}}{a(\rho+1)}}}
+e^{(\frac{\eta}{4}+4\eta\max(1,L_{0}^{2}))\frac{(b+aK\eta)^{2\rho+1}}{a(2\rho+1)}}
\cdot
\Bigg(M+M_{1}\eta\sqrt{d}+\Bigg(\sqrt{d}\eta+\sqrt{\eta}\sqrt{d}\Bigg)\Bigg)\Bigg)
\leq\mathcal{O}(\epsilon),
\end{align*}
\end{small}
provided that
$K\eta=\frac{(a(\rho+1))^{\frac{1}{\rho+1}}}{a}\left(\log\left(\sqrt{d}/\epsilon\right)\right)^{\frac{1}{\rho+1}}-\frac{b}{a}$, 
$M\leq\epsilon$,
and $\eta\leq\frac{\epsilon^{2}}{d}$,
which implies that
$K\geq\mathcal{O}\left(\frac{d(\log(d/\epsilon))^{\frac{1}{\rho+1}}}{\epsilon^{2}}\right)$.
This completes the proof.
\end{proof}

\begin{remark}\label{rk:VP:6}
In Corollary~\ref{cor:VP:6}, we can also spell out the dependence of
iteration complexity on $M_{1}$ from Assumption~\ref{assump:M:1}. It follows from the proof of Corollary~\ref{cor:VP:6} 
that $\mathcal{W}_{2}(\mathcal{L}(\mathbf{y}_{K}),p_{0})\leq\mathcal{O}(\epsilon)$ provided that
$K\eta=\frac{(a(\rho+1))^{\frac{1}{\rho+1}}}{a}\left(\log\left(\sqrt{d}/\epsilon\right)\right)^{\frac{1}{\rho+1}}-\frac{b}{a}$, 
$M\leq\epsilon$,
and $\eta\leq\frac{\epsilon^{2}}{d}$,
with the additional constraint that $\eta\leq\frac{M}{M_{1}\sqrt{d}}\leq\frac{\epsilon}{M_{1}\sqrt{d}}$, which implies that $K\geq\mathcal{O}\left(\left(\log\left(\frac{d}{\epsilon}\right)\right)^{\frac{1}{\rho+1}}\max\left\{\frac{d}{\epsilon^{2}},\frac{M_{1}\sqrt{d}}{\epsilon}\right\}\right)$.
\end{remark}

%%%%%%%%%%%%%%%%%%%

\subsubsection{Example 2: $\beta(t)=ae^{bt}$} 

%With $\beta(t)=ae^{bt}$, we can obtain the following result from Corollary~\ref{cor:VP:4}. 

\begin{corollary}\label{cor:VP:7}
Assume $\beta(t)=ae^{bt}$. 
Then, we have $\mathcal{W}_{2}(\mathcal{L}(\mathbf{y}_{K}),p_{0})\leq\mathcal{O}(\epsilon)$
after
$K=\mathcal{O}\left(\frac{d\log(\log(d/\epsilon))}{\epsilon^{2}}\right)$
iterations provided that $M\leq\epsilon$ and $\eta\leq\frac{\epsilon^{2}}{d}$.
\end{corollary}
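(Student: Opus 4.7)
The plan is to specialize the general VP-SDE bound in Corollary~\ref{cor:VP:4} to $\beta(t)=ae^{bt}$ and then calibrate $T=K\eta$, $M$, and $\eta$ so that each of the three sources of error (initialization, score-matching, discretization) is $\mathcal{O}(\epsilon)$. First I will compute the basic integrals that drive all the terms in the bound:
\begin{align*}
\int_{0}^{t}\beta(s)ds&=\tfrac{a}{b}(e^{bt}-1), \qquad \int_{0}^{t}(\beta(s))^{2}ds=\tfrac{a^{2}}{2b}(e^{2bt}-1).
\end{align*}
Substituting into Corollary~\ref{cor:VP:4} and using \eqref{eq:L2-x0} to estimate $\Vert\mathbf{x}_0\Vert_{L_2}=\mathcal{O}(\sqrt{d})$, the initialization term is
\begin{align*}
\frac{\Vert\mathbf{x}_{0}\Vert_{L_{2}}}{m_{0}e^{\int_{0}^{K\eta}\beta(s)ds}+1-m_{0}}=\mathcal{O}\!\left(\frac{\sqrt{d}}{\exp\!\left(\frac{a}{b}(e^{bK\eta}-1)\right)}\right),
\end{align*}
which is $\mathcal{O}(\epsilon)$ provided $\tfrac{a}{b}(e^{bK\eta}-1)\geq\log(\sqrt{d}/\epsilon)$, i.e.\
\begin{align*}
K\eta\;\geq\;\tfrac{1}{b}\log\!\left(1+\tfrac{b}{a}\log(\sqrt{d}/\epsilon)\right)=\mathcal{O}(\log\log(d/\epsilon)).
\end{align*}

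Next I will handle the sum in Corollary~\ref{cor:VP:4}. The key point is that when $\beta$ is exponential, the ratio
\begin{align*}
\frac{e^{\frac{1}{2}\int_{0}^{(K-k)\eta}\beta(t)dt}}{m_{0}e^{\int_{0}^{(K-k)\eta}\beta(t)dt}+1-m_{0}}
\end{align*}
decays geometrically in $(K-k)$, so its sum over $k$ is $\mathcal{O}(1)$, exactly as in the polynomial VP case treated in Corollary~\ref{cor:VP:6}. The remaining exponential prefactor is of the form $\exp\!\left(\mathcal{O}\!\left(\eta\cdot\tfrac{a^2}{4b}(e^{2bK\eta}-1)\right)+M_{1}\eta\cdot\tfrac{a}{b}(e^{bK\eta}-1)\right)$. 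With $K\eta=\mathcal{O}(\log\log(d/\epsilon))$ we have $e^{bK\eta}=\mathcal{O}(\log(d/\epsilon))$ and $e^{2bK\eta}=\mathcal{O}((\log(d/\epsilon))^{2})$, so under $\eta\leq\epsilon^{2}/d$ this prefactor remains $\mathcal{O}(1)$. The per-step contributions $\int_{(k-1)\eta}^{k\eta}\beta(K\eta-t)dt=\mathcal{O}(\eta\cdot e^{bK\eta})$ and $\big(\int_{(k-1)\eta}^{k\eta}(\beta(K\eta-t))^{2}dt\big)^{1/2}=\mathcal{O}(\sqrt{\eta}\cdot e^{bK\eta})$ then yield, after summation and multiplication by the appropriate factors involving $M$, $M_{1}\eta\sqrt{d}$, and $\sqrt{\eta}\sqrt{d}$, an overall bound $\mathcal{O}(M+\sqrt{\eta d})\cdot\mathrm{polylog}(d/\epsilon)$. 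Choosing $M\leq\epsilon$ and $\eta\leq\epsilon^{2}/d$ makes this $\mathcal{O}(\epsilon)$ up to logarithmic factors absorbed into $K\eta$.

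Combining the two estimates gives $\mathcal{W}_{2}(\mathcal{L}(\mathbf{y}_{K}),p_{0})\leq\mathcal{O}(\epsilon)$ as soon as $K\eta=\mathcal{O}(\log\log(d/\epsilon))$ and $\eta\leq\epsilon^{2}/d$, whence
\begin{align*}
K\;\geq\;\frac{K\eta}{\eta}\;=\;\mathcal{O}\!\left(\frac{d\log\log(d/\epsilon)}{\epsilon^{2}}\right),
\end{align*}
which is the claimed iteration complexity. The main technical obstacle is ensuring that the doubly-exponential growth $e^{2bK\eta}$ appearing in the $(g(t))^4$ and $\beta(t)^2$ factors does not blow up the exponential prefactor; this is precisely why we can only afford $K\eta$ to grow like $\log\log(d/\epsilon)$ rather than $\log(d/\epsilon)$, and verifying that Assumption~\ref{assump:stepsize} together with $\eta\leq\epsilon^{2}/d$ keeps $\eta\cdot e^{2bK\eta}=\mathcal{O}(1)$ is the step requiring the most care.
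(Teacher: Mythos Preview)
Your overall strategy---specialize Corollary~\ref{cor:VP:4} to $\beta(t)=ae^{bt}$, set $K\eta\asymp\log\log(d/\epsilon)$, and verify each error term---is exactly the paper's. However, your handling of the sum has a gap that prevents you from reaching the stated conclusion. The claim that the ratio $e^{\frac12\int_0^{(K-k)\eta}\beta}\big/\bigl(m_0e^{\int_0^{(K-k)\eta}\beta}+1-m_0\bigr)$ ``decays geometrically in $(K-k)$, so its sum over $k$ is $\mathcal{O}(1)$'' is incorrect: for small $(K-k)\eta$ the ratio is $\approx 1$, and where decay kicks in the effective rate is $1-\mathcal{O}(\eta)$, so the sum of the ratio \emph{alone} is $\mathcal{O}(1/\eta)$, not $\mathcal{O}(1)$ (the same is true in Corollary~\ref{cor:VP:6}, incidentally). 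More importantly, by bounding the per-step factors $\int_{(K-k)\eta}^{(K-k+1)\eta}\beta$ uniformly by $\eta\,e^{bK\eta}=\mathcal{O}(\eta\log(d/\epsilon))$ you forfeit the $k$-dependence that makes the full sum bounded without polylog overhead. The correct bookkeeping keeps the $k$-dependent factor $\beta((K-k)\eta)\,\eta$ paired with the ratio: their product is essentially a Riemann sum for $\int_0^{T}\beta(t)\,e^{-\frac12\int_0^{t}\beta}\,dt\le 2$, and after the substitution $u=e^{bt}$ the analogous integrals arising from the $\beta^{3/2}$ and $\beta^{2}$ pieces are likewise $\mathcal{O}(1)$. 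This is how the paper obtains the clean bound $\mathcal{O}\bigl(M+M_{1}\eta\sqrt{d}+\sqrt{d}\,\eta+\sqrt{\eta d}\bigr)$ with no $\mathrm{polylog}(d/\epsilon)$ multiplier.

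The spurious polylog factor in your bound cannot be ``absorbed into $K\eta$'': adjusting the time horizon affects only the initialization term, not the discretization and score-matching terms. With a bound of the form $\mathcal{O}(M+\sqrt{\eta d})\cdot\mathrm{polylog}(d/\epsilon)$, the hypotheses $M\le\epsilon$ and $\eta\le\epsilon^2/d$ give only $\mathcal{O}(\epsilon\cdot\mathrm{polylog})$, not $\mathcal{O}(\epsilon)$; tightening $M$ and $\eta$ to compensate would inflate $K$ by extra $\mathrm{polylog}$ factors and miss the claimed $\mathcal{O}\bigl(d\log\log(d/\epsilon)/\epsilon^{2}\bigr)$. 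The fix is simply to carry out the summation without separating the ratio from the per-step $\beta$-factors.
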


%%%%%%%%%%%%%%%%%%%%%%%%%%%%%%%%%%%%

\begin{proof}%[Proof of Corollary~\ref{cor:VP:7}]
When $\beta(t)=ae^{bt}$, by Corollary~\ref{cor:VP:4} and \eqref{eq:L2-x0}, we can compute that
\begin{align*}
&\mathcal{W}_{2}(\mathcal{L}(\mathbf{y}_{K}),p_{0})
\leq\frac{\sqrt{2d/m_{0}}
+\Vert\mathbf{x}_{\ast}\Vert}{m_{0}e^{\frac{a}{b}(e^{bK\eta}-1)}+1-m_{0}}
\\
&\qquad+\sum_{k=1}^{K}
\frac{e^{(1+2M_{1}\eta)\frac{a}{2b}(e^{b(K-k)\eta}-1)+(\frac{\eta}{4}+4\eta\max(1,L_{0}^{2}))\frac{a^{2}}{2b}(e^{2b(K-k)\eta}-1)}}{m_{0}e^{\frac{a}{b}(e^{b(K-k)\eta}-1)}+1-m_{0}}
\nonumber
\\
&\qquad\qquad
\cdot
\Bigg(\left(M+M_{1}\eta\left(1+2\left(\sqrt{2d/m_{0}}+\Vert\mathbf{x}_{\ast}\Vert\right)+\sqrt{d}\right)\right)\frac{a}{b}\left(e^{b(K-k+1)\eta}-e^{b(K-k)\eta}\right)
\nonumber
\\
&\qquad\qquad+\sqrt{\eta}
\left(\frac{1}{2}+2\max(1,L_{0})\right)\left(\frac{a^{2}}{2b}\left(e^{2b(K-k+1)\eta}-e^{2b(K-k)\eta}\right)\right)^{1/2}
\nonumber
\\
&\qquad
\cdot
\Bigg(
e^{-\frac{a}{2b}(e^{bK\eta}-1)}\left(\sqrt{2d/m_{0}}+\Vert\mathbf{x}_{\ast}\Vert\right)
\left(\frac{1}{2}+2\max(1,L_{0})\right)
\frac{a}{b}\left(e^{b(K-k+1)\eta}-e^{b(K-k)\eta}\right)
\nonumber
\\
&\qquad
+\frac{1}{2}\left(\left(\sqrt{2d/m_{0}}+\Vert\mathbf{x}_{\ast}\Vert\right)^{2}+d\right)^{1/2}\frac{a}{b}\left(e^{b(K-k+1)\eta}-e^{b(K-k)\eta}\right)
\nonumber
\\
&\qquad\qquad\qquad\qquad
+\left(\frac{a}{b}\left(e^{b(K-k+1)\eta}-e^{b(K-k)\eta}\right)\right)^{1/2}\sqrt{d}\Bigg)\Bigg).
\end{align*}
Thus, we can compute that
\begin{align*}
&\mathcal{W}_{2}(\mathcal{L}(\mathbf{y}_{K}),p_{0})
\leq\mathcal{O}\Bigg(\frac{\sqrt{d}}{e^{\frac{a}{b}e^{bK\eta}}}
+\sum_{k=1}^{K}
\frac{e^{(1+2M_{1}\eta)\frac{a}{2b}(e^{b(K-k)\eta}-1)+(\frac{\eta}{4}+4\eta\max(1,L_{0}^{2}))\frac{a^{2}}{2b}(e^{2b(K-k)\eta}-1)}}{m_{0}e^{\frac{a}{b}(e^{b(K-k)\eta}-1)}+1-m_{0}}
\nonumber
\\
&\qquad
\cdot
\Bigg(\left(M+M_{1}\eta\left(1+2\left(\sqrt{2d/m_{0}}+\Vert\mathbf{x}_{\ast}\Vert\right)+\sqrt{d}\right)\right)e^{b(K-k)\eta}\eta
\nonumber
\\
&\qquad\qquad
+\sqrt{\eta}e^{b(K-k)\eta}\sqrt{\eta}
\left(e^{-\frac{a}{2b}e^{bK\eta}}e^{b(K-k)\eta}\eta
+\sqrt{d}e^{b(K-k)\eta}\eta+e^{\frac{1}{2}b(K-k)\eta}\sqrt{\eta}\sqrt{d}\right)\Bigg)\Bigg)
\nonumber
\\
&\leq\mathcal{O}\Bigg(\frac{\sqrt{d}}{e^{\frac{a}{b}e^{bK\eta}}}
+e^{(\frac{\eta}{4}+4\eta\max(1,L_{0}^{2}))\frac{a^{2}}{2b}e^{2bK\eta}}
\cdot
\Bigg(M+M_{1}\eta\sqrt{d}
+\left(\sqrt{d}\eta+\sqrt{\eta}\sqrt{d}\right)\Bigg)\Bigg)
\leq
\mathcal{O}(\epsilon),
\end{align*}
provided that
$K\eta=\frac{1}{b}\log\left(\frac{b}{a}\log\left(\frac{\sqrt{d}}{\epsilon}\right)\right)$,
$M\leq\epsilon$,
and $\eta\leq\frac{\epsilon^{2}}{d}$,
which implies that
$K\geq\mathcal{O}\left(\frac{d\log(\log(d/\epsilon))}{\epsilon^{2}}\right)$.
This completes the proof.
\end{proof}
%%%%%%%%%%%%%%%%%%%%%%%%%%%%%%%%%%%%%

%%%%%%%%%%%%%%%%%%%%%%%%%%%%%%%%%%%%%
\subsection{Constant Coefficient SDE} \label{app:ConstSDE}

\begin{corollary}\label{cor:complexity}
Under the assumptions of Theorem~\ref{thm:discrete:2}, assume that $f(t)\equiv\alpha>0$ and $g(t)\equiv\sigma>0$.
Further assume that  $m_{0}\geq\frac{2\alpha}{\sigma^{2}}$ and $\eta\leq\min\left\{1,\frac{\alpha}{2\alpha^{2}+2(2\alpha+\sigma^{2}L_{0})^{2}}\right\}$.
Then, we have
\begin{align*}
\mathcal{W}_{2}(\mathcal{L}(\mathbf{y}_{K}),p_{0})
&\leq
\frac{\sqrt{2d/m_{0}}+\Vert\mathbf{x}_{\ast}\Vert}{\frac{m_{0}\sigma^{2}(e^{2\alpha K\eta}-1)}{2\alpha}+1}
+\frac{2}{\alpha}M_{1}\eta\left(1+2\left(\sqrt{2d/m_{0}}+\Vert\mathbf{x}_{\ast}\Vert\right)+\frac{\sigma\sqrt{d}}{\sqrt{2\alpha}}\right)\sigma^{2}
\nonumber
\\
&\qquad\qquad
+\frac{2}{\alpha}\left(M\sigma^{2}+\eta^{1/2}\tilde{C}_{1}\left(3\alpha+\sigma^{2}L_{0}\right)\right),
\end{align*}
where
$\tilde{C}_{1}:=\left(\sqrt{2d/m_{0}}+\Vert\mathbf{x}_{\ast}\Vert\right)(4\alpha+\sigma^{2}L_{0})
+\alpha\sqrt{\frac{d\sigma^{2}}{2\alpha}}+\sigma\sqrt{d}$.
In particular, for any given $\epsilon>0$, 
we have $\mathcal{W}_{2}(\mathcal{L}(\mathbf{y}_{K}),p_{0})\leq\epsilon$
provided that $M\leq\frac{\epsilon\alpha}{8\sigma^{2}}$,
\begin{small}
\begin{align*}
\eta\leq\min\left(\frac{\epsilon^{2}\alpha^{2}}{64\tilde{C}_{1}^{2}\left(3\alpha+\sigma^{2}L_{0}\right)^{2}},\frac{\alpha}{8M_{1}\left(1+2\left(\sqrt{2d/m_{0}}+\Vert\mathbf{x}_{\ast}\Vert\right)+\frac{\sigma\sqrt{d}}{\sqrt{2\alpha}}\right)\sigma^{2}}\right),
\end{align*}
\end{small}
and
$K\eta\geq\frac{1}{2\alpha}\log\left(\left(\frac{4\left(\sqrt{2d/m_{0}}+\Vert\mathbf{x}_{\ast}\Vert\right)}{\epsilon}-1\right)\frac{2\alpha}{m_{0}\sigma^{2}}+1\right)$.
\end{corollary}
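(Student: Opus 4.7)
The plan is to specialize Theorem~\ref{thm:discrete:2} to $f\equiv\alpha$, $g\equiv\sigma$ and evaluate every integral in closed form. First compute $\int_0^t f(s)\,ds=\alpha t$ and $\int_0^t e^{-2\alpha(t-s)}\sigma^2\,ds=\sigma^2(1-e^{-2\alpha t})/(2\alpha)$, so from \eqref{c:t:defn},
\[
c(t)=\frac{m_0\sigma^2}{e^{-2\alpha t}+\tfrac{m_0\sigma^2}{2\alpha}(1-e^{-2\alpha t})}.
\]
Under $m_0\ge 2\alpha/\sigma^2$ the denominator is a convex combination of $1$ and $m_0\sigma^2/(2\alpha)\ge 1$, so $c(t)\ge 2\alpha$ uniformly. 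The substitution $u=e^{-2\alpha t}$ and partial fractions give the clean identity $e^{\int_0^T c(t)\,dt}=\tfrac{m_0\sigma^2(e^{2\alpha T}-1)}{2\alpha}+1$, which combined with the bound $\|\mathbf{x}_0\|_{L_2}\le\sqrt{2d/m_0}+\|\mathbf{x}_\ast\|$ from \eqref{eq:L2-x0} produces the first term of the corollary after taking $T=K\eta$.

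Next I plan to control the remaining building blocks of \eqref{main:thm:upper:bound}. From Lemma~\ref{lem:smooth}, $\sigma^2 L(T-t)=\min\bigl(\tfrac{2\alpha}{1-e^{-2\alpha(T-t)}},\sigma^2 L_0 e^{2\alpha(T-t)}\bigr)$; the two candidates cross where both equal $2\alpha+\sigma^2 L_0$, yielding $\sigma^2 L(T-t)\le 2\alpha+\sigma^2 L_0$ and hence $f(T-t)+g(T-t)^2 L(T-t)\le 3\alpha+\sigma^2 L_0$. Combined with $c(t)\ge 2\alpha$, the definition \eqref{mu:definition} gives
\[
\mu(T-t)=c(t)-\alpha-\eta\bigl[\alpha^2+\sigma^4L(T-t)^2\bigr]\ge\alpha-\eta\bigl[\alpha^2+(2\alpha+\sigma^2 L_0)^2\bigr]\ge\tfrac{\alpha}{2}
\]
under the hypothesis on $\eta$. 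A direct evaluation of \eqref{c:1:defn}--\eqref{c:2:defn}, using $m(t)\ge 2\alpha$ so that the supremum in $c_1(T)$ is attained at $t=0$, yields $c_1(T)\le\|\mathbf{x}_0\|_{L_2}$ and $c_2(T)\le\|\mathbf{x}_0\|_{L_2}+\sigma\sqrt{d/(2\alpha)}$. Substituting these into \eqref{h:k:eta:main} and using $\eta\le 1$ to absorb linear-in-$\eta$ terms into $\sqrt{\eta}$ gives $h_{k,\eta}\le\sqrt{\eta}\,\tilde{C}_1$ with the $\tilde{C}_1$ stated in the corollary.

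Plugging the above into \eqref{main:thm:upper:bound} produces three per-step error contributions: a score-drift term of order $M_1\eta^2\sigma^2(1+2\|\mathbf{x}_0\|_{L_2}+c_2(T))$, a score-matching term of order $M\sigma^2\eta$, and a discretization term $\sqrt{\eta}\,h_{k,\eta}\cdot\sqrt{\eta}(3\alpha+\sigma^2 L_0)\le\eta^{3/2}\tilde{C}_1(3\alpha+\sigma^2 L_0)$. Each is multiplied by $\prod_{j=k+1}^K\bigl(1-\int\mu+M_1\eta\int g^2\bigr)\le(1-\alpha\eta/4)^{K-k}$, thanks to $\mu\ge\alpha/2$ and the stepsize assumption. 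Summing the geometric series, $\sum_{k=1}^K(1-\alpha\eta/4)^{K-k}\le 4/(\alpha\eta)$, cancels exactly one factor of $\eta$ in each per-step bound, producing the $2/\alpha$-prefactored terms in the stated inequality. The complexity statement then follows by forcing each of the four summands (initialization, score-drift, score-matching, discretization) to be at most $\epsilon/4$ and inverting; the $K\eta$ lower bound comes from the initialization requirement $(\sqrt{2d/m_0}+\|\mathbf{x}_\ast\|)/[\tfrac{m_0\sigma^2}{2\alpha}(e^{2\alpha K\eta}-1)+1]\le\epsilon/4$.

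The main obstacle is the bookkeeping to force the constants into the exact form $\tilde{C}_1$ and $(3\alpha+\sigma^2 L_0)$ stated in the corollary: in particular, the qualitative step that drives everything is the lower bound $c(t)\ge 2\alpha$, which is precisely where the hypothesis $m_0\ge 2\alpha/\sigma^2$ is used and without which $\mu$ (and hence the geometric-series contraction) could fail to be positive. Everything else reduces to elementary manipulations once the constant-coefficient integrals are evaluated.
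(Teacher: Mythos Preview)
Your approach is essentially identical to the paper's: evaluate the constant-coefficient integrals explicitly, use $m_0\ge 2\alpha/\sigma^2$ to get $c(t)\ge 2\alpha$, bound $\sigma^2 L(T-t)\le 2\alpha+\sigma^2 L_0$, estimate $c_1,c_2,h_{k,\eta}$ in the same way, and sum a geometric series. The one bookkeeping slip is in the contraction constant: the paper bounds $\mu(T-t)-M_1\eta\sigma^2\ge\alpha/2$ in a single step (its stepsize condition in the proof carries an extra $2M_1\sigma^2$ in the denominator), which gives contraction $\le 1-\alpha\eta/2$ and hence exactly the $2/\alpha$ prefactor; your route of first showing $\mu\ge\alpha/2$ and then absorbing $M_1\eta\sigma^2$ yields contraction $\le 1-\alpha\eta/4$, sum $\le 4/(\alpha\eta)$, and prefactor $4/\alpha$, not the $2/\alpha$ you claim.
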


\begin{remark}
Corollary~\ref{cor:complexity} implies that $\mathcal{W}_{2}(\mathcal{L}(\mathbf{y}_{K}),p_{0})\leq\epsilon$
%can be achieved 
by choosing (if we just keep track of the dependence on $\epsilon$ and $d$)
$M=\mathcal{O}(\epsilon)$,
$\eta=\mathcal{O}\left(\frac{\epsilon^{2}}{d}\right)$,
and $K\geq\mathcal{O}\left(\frac{d}{\epsilon^{2}}\log\left(\frac{d}{\epsilon}\right)\right)$.
\end{remark}

\subsubsection{Proof of Corollary~\ref{cor:complexity}}

\begin{proof}%[Proof of Corollary~\ref{cor:complexity}]
Using a similar argument as in previous sections, one can readily obtain 
 the following upper bound on $h_{k,\eta}$ in \eqref{h:k:eta:main} for the special case $f(t)\equiv\alpha>0$ and $g(t)\equiv\sigma>0$: 
\begin{align*}
h_{k,\eta}
&\leq
\eta\mathcal{W}_{2}(p_{0},\hat{p}_{T})\left(3\alpha+\sigma^{2}L_{0}\right)
+\eta\alpha\left(\left(\sqrt{2d/m_{0}}+\Vert\mathbf{x}_{\ast}\Vert\right)^{2}+\frac{d\sigma^{2}}{2\alpha}\right)^{1/2}
+\sqrt{\eta}\sigma\sqrt{d}
\\
&\leq
\eta\left(\sqrt{2d/m_{0}}+\Vert\mathbf{x}_{\ast}\Vert\right)(3\alpha+\sigma^{2}L_{0})
+\eta\alpha\left(\sqrt{2d/m_{0}}+\Vert\mathbf{x}_{\ast}\Vert\right)+\eta\alpha\sqrt{\frac{d\sigma^{2}}{2\alpha}}+\sqrt{\eta}\sigma\sqrt{d}
\\
&=\sqrt{\eta}C_{1},
\end{align*}
where
\begin{equation}\label{C:1:equation:proof}
C_{1}:=\sqrt{\eta}\left(\left(\sqrt{2d/m_{0}}+\Vert\mathbf{x}_{\ast}\Vert\right)(4\alpha+\sigma^{2}L_{0})
+\alpha\sqrt{\frac{d\sigma^{2}}{2\alpha}}\right)+\sigma\sqrt{d}.
\end{equation}
Moreover, we recall the formula for $\mu(T-t)$ from \eqref{mu:definition} 
%that
%\begin{align*}
%\mu(T-t)&=\frac{(g(T-t))^{2}}{\frac{1}{m_{0}}e^{-2\int_{0}^{T-t}f(s)ds}+\int_{0}^{T-t}e^{-2\int_{s}^{T-t}f(v)dv}(g(s))^{2}ds}-f(T-t)
%\nonumber
%\\
%&\qquad\qquad\qquad\qquad\qquad
%-\eta(f(T-t))^{2}-\eta(g(T-t))^{4}(L(T-t))^{2},
%\end{align*}
so that we can compute that
\begin{align*}
&\mu(T-t)-M_{1}\eta(g(T-t))^{2}
\\
&\geq\frac{\sigma^{2}}{\frac{1}{m_{0}}e^{-2\alpha(T-t)}+\frac{\sigma^{2}}{2\alpha}(1-e^{-2\alpha(T-t)})}-\alpha
-\eta\alpha^{2}-\eta\sigma^{4}(2\alpha\sigma^{-2}+L_{0})^{2}-M_{1}\eta\sigma^{2}
%\\
%&\geq
%\alpha-\eta\alpha^{2}-\eta\sigma^{4}(2\alpha\sigma^{-2}+L_{0})^{2}-M_{1}\eta\sigma^{2}
\geq\frac{\alpha}{2},
\end{align*}
provided that $m_{0}\geq\frac{2\alpha}{\sigma^{2}}$ and $\eta\leq\frac{\alpha}{2\alpha^{2}+2\sigma^{4}(2\alpha\sigma^{-2}+L_{0})^{2}+2M_{1}\sigma^{2}}$.
%%%%%%%%%%%%%%%
Since
$c(t)=\frac{\sigma^{2}}{\frac{e^{-2\alpha t}}{m_{0}}+\frac{\sigma^{2}}{2\alpha}(1-e^{-2\alpha t})}>0$, 
%for any $0\leq t\leq T=K\eta$,
we have
$\int_{0}^{K\eta}c(t)dt
%=\int_{0}^{K\eta}\frac{2\alpha\sigma^{2}}{\sigma^{2}+(\frac{2\alpha}{m_{0}}-\sigma^{2})e^{-2\alpha t}}dt
=\log\left(\frac{m_{0}\sigma^{2}(e^{2\alpha K\eta}-1)}{2\alpha}+1\right)$.
%%%%%%%%%%%%%%%%%%%%%%%%%%%%%%%%%%%%%%%
Hence, by Theorem~\ref{thm:discrete:2} and \eqref{eq:L2-x0}, we have
\begin{small}
\begin{align*}
&\mathcal{W}_{2}(\mathcal{L}(\mathbf{y}_{K}),p_{0})
\nonumber
\\
&\leq\frac{\sqrt{2d/m_{0}}+\Vert\mathbf{x}_{\ast}\Vert}{\frac{m_{0}\sigma^{2}(e^{2\alpha K\eta}-1)}{2\alpha}+1}
+\sum_{k=1}^{K}\left(1-\frac{\alpha\eta}{2}\right)^{K-k}
\cdot\left(M_{1}\eta\left(1+2\left(\sqrt{2d/m_{0}}+\Vert\mathbf{x}_{\ast}\Vert\right)+\frac{\sigma}{\sqrt{2\alpha}}\sqrt{d}\right)\eta\sigma^{2}\right)
\nonumber
\\
&\qquad
+\sum_{k=1}^{K}\left(1-\frac{\alpha\eta}{2}\right)^{K-k}
\cdot\left(M\eta\sigma^{2}+\eta^{3/2}C_{1}\left(\alpha+\sigma^{2}\left(2\alpha\sigma^{-2}+L_{0}\right)\right)\right)
\nonumber
\\
&\leq
\frac{\sqrt{2d/m_{0}}+\Vert\mathbf{x}_{\ast}\Vert}{\frac{m_{0}\sigma^{2}(e^{2\alpha K\eta}-1)}{2\alpha}+1}
\\
&\qquad
+\frac{2}{\alpha}\left(
M_{1}\eta\left(1+2\left(\sqrt{2d/m_{0}}+\Vert\mathbf{x}_{\ast}\Vert\right)+\frac{\sigma\sqrt{d}}{\sqrt{2\alpha}}\right)\sigma^{2}
+M\sigma^{2}+\eta^{1/2}C_{1}\left(3\alpha+\sigma^{2}L_{0}\right)\right),
\end{align*}
\end{small}
where 
$$
C_{1}\leq\tilde{C}_{1}:=\left(\left(\sqrt{2d/m_{0}}+\Vert\mathbf{x}_{\ast}\Vert\right)(4\alpha+\sigma^{2}L_{0})
+\alpha\sqrt{\frac{d\sigma^{2}}{2\alpha}}\right)+\sigma\sqrt{d},
$$
where $C_{1}$ is defined in \eqref{C:1:equation:proof} and we used the assumption that the stepsize $\eta\leq 1$.
%%%%%%%%%%%%%%%%%
In particular, given any $\epsilon>0$, 
we have $\mathcal{W}_{2}(\mathcal{L}(\mathbf{y}_{K}),p_{0})\leq\epsilon$
if we take $M\leq\frac{\epsilon\alpha}{8\sigma^{2}}$,
$\eta\leq\frac{\epsilon^{2}\alpha^{2}}{64\tilde{C}_{1}^{2}\left(3\alpha+\sigma^{2}L_{0}\right)^{2}}$, 
$\eta\leq\frac{\alpha}{8M_{1}\left(1+2\left(\sqrt{2d/m_{0}}+\Vert\mathbf{x}_{\ast}\Vert\right)+\frac{\sigma\sqrt{d}}{\sqrt{2\alpha}}\right)\sigma^{2}}$,
and $K\eta\geq\frac{1}{2\alpha}\log\left(\left(\frac{4\left(\sqrt{2d/m_{0}}+\Vert\mathbf{x}_{\ast}\Vert\right)}{\epsilon}-1\right)\frac{2\alpha}{m_{0}\sigma^{2}}+1\right)$.
This completes the proof.
\end{proof}
%%%%%%%%%%%%%%%%%%%%%%%%%%%%%%%%%%%%%

\end{document}